\tikzstyle{block} = [rectangle, draw, fill=blue!20, 
\tikzstyle{line} = [draw, -latex']
\tikzstyle{cloud} = [draw, ellipse,fill=red!20, node distance=3cm,
\newtheorem{theorem}{Theorem}
\newtheorem{prop}{Proposition}
\newtheorem{cor}{Corollary}
\newtheorem{defn}{Definition}
\DeclareMathAlphabet\mathbfcal{OMS}{cmsy}{b}{n}
\def \bz{\boldsymbol{z}}
\def \bC{\boldsymbol{C}}
\def \bW{{\boldsymbol W}}
\DeclareMathOperator*{\argmin}{arg\,min}
\DeclareMathOperator*{\argmax}{argmax}
\newcommand {\richb}[1]{{\color{blue}\sf{[richb: #1]}}}
\title{\LARGE\bf
Deep Neural Networks}
\author{Randall Balestriero and Richard G.\ Baraniuk \\[2mm]
Rice University}
\begin{document}
\maketitle

\begin{abstract}
Deep Neural Networks (DNNs) are universal function approximators providing state-of-
the-art solutions on wide range of applications. Common perceptual tasks such as speech recognition,
image classification, and object tracking are now commonly tackled via DNNs. Some fundamental
problems remain: (1) the lack of a mathematical framework providing an explicit and interpretable
input-output formula for any topology, (2) quantification of DNNs stability regarding adversarial
examples (i.e. modified inputs fooling DNN predictions whilst undetectable to humans), (3) absence of
generalization guarantees and controllable behaviors for ambiguous patterns, (4) leverage unlabeled
data to apply DNNs to domains where expert labeling is scarce as in the medical field. Answering those
points would provide theoretical perspectives for further developments based on a common ground.
Furthermore, DNNs are now deployed in tremendous societal applications, pushing the need to fill this
theoretical gap to ensure control, reliability, and interpretability.
\end{abstract}
\clearpage 
\tableofcontents
\clearpage



\section{Introduction}
\label{sec:intro}
Deep Neural Networks (DNNs) are universal function approximators providing state-of-
the-art solutions on wide range of applications. Common perceptual tasks such as speech recognition,
image classification, and object tracking are now commonly tackled via DNNs. Some fundamental
problems remain: (1) the lack of a mathematical framework providing an explicit and interpretable
input-output formula for any topology, (2) quantification of DNNs stability regarding adversarial
examples (i.e. modified inputs fooling DNN predictions whilst undetectable to humans), (3) absence of
generalization guarantees and controllable behaviors for ambiguous patterns, (4) leverage unlabeled
data to apply DNNs to domains where expert labeling is scarce as in the medical field. Answering those
points would provide theoretical perspectives for further developments based on a common ground.
Furthermore, DNNs are now deployed in tremendous societal applications, pushing the need to fill this
theoretical gap to ensure control, reliability, and interpretability.

DNNs are models involving compositions of nonlinear and linear transforms. (1) We will
provide a straightforward methodology to express the nonlinearities as affine spline functions. The
linear part being a degenerated case of spline function, we can rewrite any given DNN topology as
succession of such functionals making the network itself a piecewise linear spline. This formulation
provides a universal piecewise linear expression of the input-output mapping of DNNs, clarifying the
role of its internal components. (2) In functional analysis, the regularity of a mapping is defined via its
Lipschitz constant. Our formulation eases the analytical derivation of this stability variable measuring
the adversarial examples sensitivity. For any given architecture, we provide a measure of risk to
adversarial attacks. (3) Recently, the deep learning community has focused on the reminiscent theory of
flat and sharp minima to provide generalization guarantees. Flat minima are regions in the parameter
space associated with great generalization capacities. We will first, prove the equivalence between flat
minima and spline smoothness. After bridging those theories, we will motivate a novel regularization
technique pushing the learning of DNNs towards flat minima, maximizing generalization
performances. (4) From (1) we will reinterpret DNNs as template matching algorithms. When coupled
with insights derived from (2), we will integrate unlabeled data information into the network during
learning. To do so, we will propose to guide DNNs templates towards their input via a scheme
assimilated as a reconstruction formula for DNNs. This inversion can be computed efficiently by back-
propagation leading to no computational overhead. From this, any semi-supervised technique can be
used out-of-the-box with current DNNs where we provide state-of-the-art results. Unsupervised tasks would also become reachable to DNNs, a
task considered as the keystone of learning for the neuro-science community.
To date, those problematics have been studied independently leading to over-specialized solutions
generally topology specific and cumbersome to incorporate into a pre-existing pipeline. On the other
hand, all the proposed solutions necessitate negligible software updates, suited for efficient large-scale
deployment.

\begin{center}
\begin{tabular}{|p{0.99\linewidth}|}\hline 
\vspace{-0.5cm}
\begin{center}
\textbf{\underline{Non-exhaustive list of the main contributions}}
\end{center}
1) We first develop spline operators (SOs) \ref{splineoperator}, a natural generalization of multivariate spline functions as well as their linear case (LSOs). LSOs are shown to ''span'' DNNs layers, being restricted cases of LSOs \ref{subsec:nn}. From this, composition of those operators lead to the explicit analytical input-output formula of DNNs, for any architecture \ref{subsub:networks}. We then dive into some analysis:
\begin{itemize}
  \setlength\itemsep{0.1em}
      \item Interpret DNNs as template matching machines, provide ways to visualize and analyze the inner representation a DNN has of its input w.r.t each classes and understand the prediction \ref{sub:ada}.
    \item Understand the impact of design choices such as skip-connections and provide conditions for ''good'' weight initialization schemes \ref{subsub:networks}.
    \item Derive a simple methodology to compute the Lipschitz constant of any DNN, quantifying their stability  and derive strategies for adversarial example robustness \ref{subsub:lip}.
    \item Study the impact of depth and width for generalization and class separation, orbit learning \ref{orbits}.
\end{itemize}

2)Secondly, we prove the following implications for any DNN with the only assumption that all inputs have same energy, as $||X_n||^2=K>0,\forall n$.

\begin{tikzpicture}[node distance = 5.3cm, auto]
    \node [block] (regularization) {\textbf{Regularization:}\\Tikhonov, L1, dropout or any introduced noise in the network};
    \node [block, below =0.6cm of regularization] (memo) {\textbf{Dataset Memorization:}\\colinearity of the templates towards dataset memorization, introduction of ''good'' and ''bad'' memorization};
    \node [block, below =0.6cm of memo] (recon) {\textbf{Reconstruction:}\\provide ways to detect ''ambiguous'' inputs,anomaly detection, allow for semi-sup with state-of-the-art performances and unsupervised training, clustering};
    \node [block, right of=memo,below=0.1cm of regularization] (flat) {\textbf{Flat-Minima, Generalization:}\\define the concept of generalization for DNNs, links with regularization.};
    \node [block,below=0.8cm of flat] (design) {\textbf{Systematic DNN design:}\\new quantitative measure of dataset specific generalization capacities of untrained DNN for fast architecture search prior learning.};
    \node [block, left of =memo] (lip) {\textbf{Lipschitz constant:}\\measure, control DNNs stability, contractive DNNs and parsimonious templates};
    \node [block, below =1cm of lip] (adv) {\textbf{Robustness to adv. examples:}\\prevent attacks in a systematic way, bounds on DNNs attacks sensitivity};
    
\draw[latex'-latex',double] (regularization) --  (lip.north) node [midway, fill=white,text opacity=1,fill opacity=0.] {\ref{subsub:lip}}; ;
\draw[-latex',double] (regularization) --  (flat.north) node [midway, fill=white,text opacity=1,fill opacity=0.] {\ref{generalization}};
\draw[latex'-latex',double] (regularization) --  (memo) node [midway, fill=white,text opacity=1,fill opacity=0.] {\ref{subsub:memo}};
\draw[latex'-latex',double] (lip) --  (adv) node [midway, fill=white,text opacity=1,fill opacity=0.] {\ref{subsub:lip}};
\draw[latex'-latex',double] (memo) --  (adv) node [midway,left=-9pt, fill=white,text opacity=1,fill opacity=0.] {\ref{subsub:lip}};
\draw[-latex',double] (memo) --  (recon) node [midway, fill=white,text opacity=1,fill opacity=0.] {\ref{subsub:all_recon}};
\draw[-latex',double] (flat) --  (design) node [midway, fill=white,text opacity=1,fill opacity=0.] {};
\draw[-latex',double] (recon.south east) --  (design.south) node [midway,left=9pt,above=-9pt, fill=white,text opacity=1,fill opacity=0.] {};
\end{tikzpicture}

\\ \hline
\end{tabular}
\end{center}

\newpage
\section*{Symbols}
\begin{tabular}{|l|l|}
$x$&''Dummy'' variable representing an input/observation\\
$\hat{y}(x)$&''Dummy'' variable representing an output/prediction associated  to input $x$\\
$X_n$&Observation $n$ of shape $(K,I,J)$.\\ 
$Y_n$&Target variable associated to $X_n$, for classification $Y_n \in \{1,\dots,C\},\;C>1$,\\
&for regression $Y_n \in \mathbb{R}^C,C\geq 1$.\\
$\mathcal{D}$ (resp. $\mathcal{D}_s$)&Labeled training set with $N$ (resp. $N_s$) samples $\mathcal{D}=\{(X_n,Y_n)_{n=1}^N\}$.\\
$\mathcal{D}_u$&Unlabeled training set with $N_u$ samples $\mathcal{D}_u=\{(X_n)_{n=1}^{N_u}\}$.\\
$f^{(\ell)}_{\theta^{(\ell)}}$&Layer at level $\ell$ with internal parameters $\theta^{(\ell)},\ell=1,\dots,L$.\\
$\Theta$&Collection of all parameters $\Theta=\{\theta^{(\ell)},\ell=1,\dots,L\}$.\\
$f_{\Theta}$&Deep Neural Network mapping with $f_\theta:\mathbb{R}^D\rightarrow \mathbb{R}^C$\\
$(C^{(\ell)},I^{(\ell)},J^{(\ell)})$&Shape of the representation at layer $\ell$ with $(C^{(0)},I^{(0)},J^{(0)})=(K,I,J)$ and\\
&$(C^{(L)},I^{(L)},J^{(L)})=(C,1,1)$.\\
$D^{(\ell)}$&Dimension of the flattened representation at layer $\ell$ with $D^{(\ell)}=C^{(\ell)}I^{(\ell)}J^{(\ell)}$,$D^{(0)}=D$ and $D^{(L)}=C$.\\
$z^{(\ell)}(x)$&Representation of $x$ at layer $\ell$ in an unflattened format of shape $(C^{(\ell)},I^{(\ell)},J^{(\ell)})$,\\
&with $z^{(0)}(x)=x$\\
$z^{(\ell)}_{c,i,j}(x)$&Value at channel $c$ and spatial position $(i,j)$.\\
$\bz^{(\ell)}(x)$&Representation of $x$ at layer $\ell$ in a flattened format of dimension $D^{(\ell)}$\\
$\bz^{(\ell)}_d(x)$&Value at dimension $d$\\
\end{tabular}
\newpage

\section{Background: Deep Neural Networks for Function Approximation}\label{background}

Most of applied mathematics interests take the form of function approximation. Two main cases arise, one where the target function $f$ to approximate is known and one where only a set of samples $(X_n,f(X_n))_{n=1}^N$ are observed, providing limited information on the domain-codomain structure of $f$. The latter case is the one of supervised learning. Given the {\em training set} $\mathcal{D}:=\{(X_n,Y_n)_{n=1}^N\}$ with $Y_n:=f(X_n)$, the unknown functional $f$ is estimated through the approximator $\hat{f}$. Finding an approximant $\hat{f}$ with correct behaviors on $\mathcal{D}$ is usually an ill-posed problem with many possible solutions. Yet, each one might behave differently for new observations, leading to different generalization performances. Generalization is the ability to replicate the behavior of $f$ on new inputs not present in $\mathcal{D}$ thus not exploited to obtain $\hat{f}$. Hence, one seeks for an approximator $\hat{f}$ having the best generalization performance. In some applications, the unobserved $f$ is known to fulfill some properties such as boundary and regularity conditions for PDE approximation. In machine learning however, the lack of physic based principles does not provide any property constraining the search for a good approximator $\hat{f}$ except the performance measure based on the training set $\mathcal{D}$ and an estimate of generalization performance based on a {\em test set}. To tackle this search, one commonly resorts to a parametric functional $\hat{f}_\Theta$ 
where $\Theta$ contains all the free parameters controlling the behavior of $\hat{f}_\Theta$. The task thus ''reduces'' to finding the optimal set of parameters $\Theta^*$ minimizing the empirical error on the training set and maximizing empirical generalization performance on the test set. 
We now refer to this estimation problem as a regression problem if $Y_n$ is continuous and a classification problem if $Y_n$ is categorical or discrete. We also restrict ourselves to $\hat{f}_\Theta$ being a Deep Neural Network (DNN) and denote $f_\Theta := \hat{f}_\Theta$. Also, $x$ is used for a generic input as opposed to the $n^{th}$ given sample $X_n$.

DNNs are a powerful and increasingly applied machine learning framework for complex prediction tasks like object and speech recognition. In fact, they are proven to be universal function approximators\cite{cybenko1989approximation,hornik1989multilayer}, fitting perfectly the context of function approximation of supervised learning described above.
There are many flavors of DNNs, including convolutional, residual, recurrent, probabilistic, and beyond.  
Regardless of the actual network topology, we represent the mapping from the input signal $x\in\mathbb{R}^D$ to the output prediction $\widehat{y} \in \mathbb{R}^C$ as $f_\Theta: \mathbb{R}^D \rightarrow \mathbb{R}^C$. By its parametric nature, the behavior of $f_\Theta$ is governed by its underlying parameters $\Theta$.
All current deep neural networks boil down to a composition of $L$ ''layer mappings'' denoted by
\begin{equation}
f_\Theta(x)=(f^{(L)}_{\theta^{(L)}} \circ \dots \circ f^{(1)}_{\theta^{(1)}})(x),\;\;\;\Theta=\{\theta^{(1)},\dots,\theta^{(L)}\}.
\end{equation}
In all the following cases, a neural network layer at level $(\ell)$ is an operator $f^{(\ell)}_{\theta^{(l)}}$ that takes as input a vector-valued signal $\bz^{(\ell-1)}(x)\in \mathbb{R}^{D^{(\ell-1)}}$ which at $\ell=0$ is the input signal $\bz^{(0)}(x):=x$  and produces a vector-valued output $\bz^{(\ell)}(x)\in \mathbb{R}^{D^{(\ell)}}$.
This succession of mappings is in general non-commutative, making the analysis of the complete sequence of generated signals crucial, denoted by
\begin{equation}
\bz^{(\ell)}(x)=(f^{(\ell)}_{\theta^{(\ell)}} \circ \dots \circ f^{(1)}_{\theta^{(1)}})(x),\ell \in \{1,\dots,L\}.
\end{equation}
For concreteness, we will focus on processing $K$-channel inputs $x$, such as RGB images, stereo signals, as well as multi-channel representations $z^{(\ell)},\ell=1,\dots,L$ which we refer to as a ``signal''. 
This signal is indexed $z^{(\ell)}_{c,i,j},c=1,\dots,C^{(\ell)},i=1,\dots,I^{(\ell)}$, $j=1,\dots,J^{(\ell)},\ell=1,\dots,L$, where $i,j$ are usually spatial coordinates, and $c$ is the channel. Any signal with greater index-dimensions fall under the following analysis by adaptation of the notations and operators. Hence, the volume $z^{(\ell)}$ is of shape $(C^{(\ell)},I^{(\ell)},J^{(\ell)})$ with $(C^{(0)},I^{(0)},J^{(0)})=(K,I,J)$ and $(C^{(L)},I^{(L)},J^{(L)})=(C,1,1)$. For consistency with the introduced layer mappings, we will use $\bz^{(\ell)}$, the flattened version of $z^{(\ell)}$ as depicted in Fig. \ref{fig:reshape}. The dimension of $\bz^{(\ell)}$ is thus $D^{(\ell)}=C^{(\ell)}I^{(\ell)}J^{(\ell)}$.  
In this section, we introduce the basic concepts and notations of the main used layers enabling to create state-of-the-art DNNs as well as standard training techniques to update the parameters $\Theta$.

\subsection{Layers Description}
\begin{figure}[t!]
    \centering
    \includegraphics[width=3in]{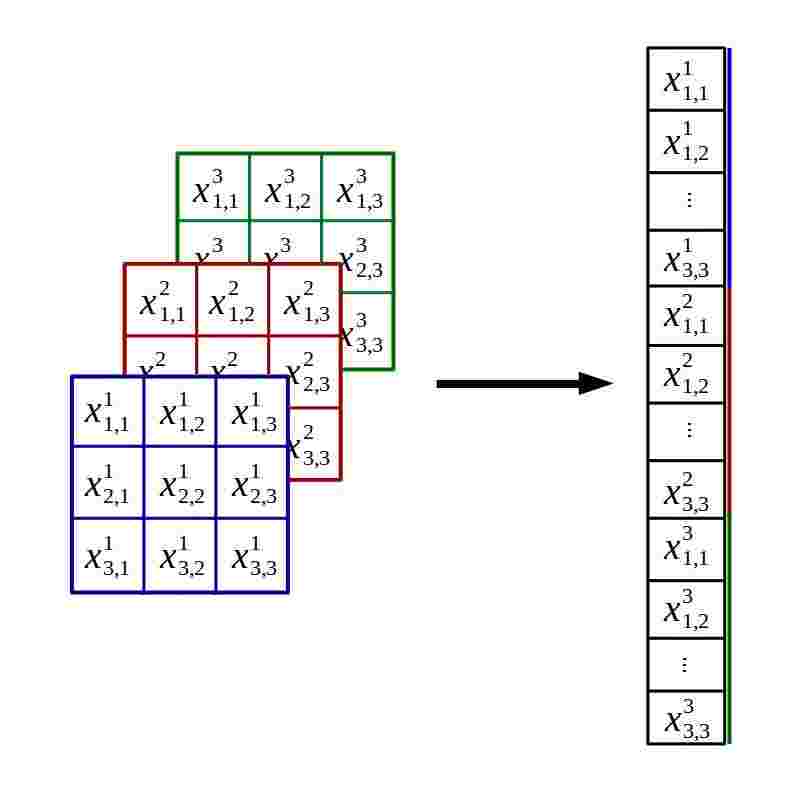}
    \caption{Reshaping of the multi-channel signal $z^{(\ell)}$ of shape $(3,3,3)$ to form the vector $\bz^{(\ell)}$ of dimension $27$.}
    \label{fig:reshape}
\end{figure}
In this section we describe the common layers one can use to create the mapping $f_\Theta$. The notations we introduce will be used throughout the report. We now describe the following: Fully-connected;Convolutional; Nonlinearity;Sub-Sampling;Skip-Connection;Recurrent layers.

\paragraph{Fully-Connected Layer}
A Fully-Connected (FC) layer is at the origin of DNNs known as Multi-Layer Perceptrons (MLPs) \cite{pal1992multilayer} composed exclusively of FC-layers and nonlinearities. This layer performs a linear transformation of its input as
\begin{align}\boxed{
    f^{(\ell)}_W(\bz^{(\ell-1)}(x))=W^{(\ell)}\bz^{(\ell-1)}(x)+b^{(\ell)}.}
\end{align}
The internal parameters $\theta^{(\ell)}=\{W^{(\ell)},b^{(\ell)}\}$ are defined as $W^{(\ell)} \in \mathbb{R}^{D^{(\ell)}\times D^{(\ell-1)}}$ and $b \in \mathbb{R}^{D^{(\ell)}}$. This linear mapping produces an output vector $\bz^{(\ell)}$ of length $D^{(\ell)}$. In current topologies, FC layers are used at the end of the mapping, as layers $L$ and $L-1$, for their capacity to perform nonlinear dimensionality reduction in order to output $C$ output values. However, due to their high number of degrees of freedom $(D^{(\ell)}\times D^{(\ell-1)}+D^{(\ell)})$ and the unconstrained internal structure of $W^{(\ell)}$, MLPs inherit poor generalization performances for common perception tasks as demonstrated on computer vision tasks in \cite{zhang2016understanding}.

\paragraph{Convolutional Layer}
The greatest accuracy improvements in DNNs occurred after the introduction of the {\em convolutional layer}. Through convolutions, it leverages one of the most natural operation used for decades in signal processing and template matching.
In fact, as opposed to the FC-layer, the convolutional layer is the corestone of DNNs dealing with perceptual tasks thanks to their ability to perform local feature extractions from their input. It is defined as
\begin{align}\boxed{
f^{(\ell)}_C(\bz^{(\ell-1)}(x))=\bC^{(\ell)}\bz^{(\ell-1)}(x)+b^{(\ell)}}.
\end{align}
where a special structure is defined on $\bC^{(\ell)}$ so that it performs multi-channel convolutions on the vector $\bz^{(\ell-1)}$. To highlight this fact, we first remind the multi-channel convolution operation performed on the unflatenned input $z^{(\ell-1)}(x)$ of shape $(C^{(\ell-1)},I^{(\ell-1)},J^{(\ell-1)})$ given a filter bank $W^{(\ell)}$ composed of $C^{(\ell)}$ filters, each being a $3D$ tensor of shape $(C^{(\ell-1)},M^{(\ell)},N^{(\ell)})$ with $M^{(\ell)}\leq I^{(\ell-1)},N^{(\ell)}\leq J^{(\ell-1)}$. Hence $W^{(\ell)} \in \mathbb{R}^{C^{(\ell)}\times C^{(\ell-1)}\times M^{(\ell)}\times N^{(\ell)}}$ with $C^{(\ell-1)}$ representing the filters depth, equal to the number of channels of the input, and $(M^{(\ell)},N^{(\ell)})$ the spatial size of the filters.
The application of the linear filters $W^{(\ell)}$ on the signal form another multi-channel signal as
\begin{align}
(W^{(\ell)} \star z^{(\ell-1)}(x))_{c,i,j} ~=&~ \sum_{k=1}^{C^{(\ell-1)}} (W^{(\ell)}_{c,k}\star z^{(\ell-1)}_{k}(x))_{i,j}\nonumber \\
=&~\sum_{k=1}^{C^{(\ell-1)}} \sum_{m=1}^{M^{(\ell)}} \sum_{n=1}^{N^{(\ell)}} W^{(\ell)}_{c,k,m,n}  z^{(\ell-1)}_{k,i-m,j-n}(x),
\end{align}
where the output of this convolution contains $C^{(\ell)}$ channels, the number of filters in $W^{(\ell)}$. Then a bias term is added for each output channel, shared across spatial positions. We denote this bias term as $\xi \in \mathbb{R}^{C^{(\ell)}}$.
As a result, to create channel $c$ of the output, we perform a $2D$ convolution of each channel $k=1,\dots,C^{(\ell-1)}$ of the input with the impulse response $W^{(\ell)}_{c,k}$ and then sum those outputs element-wise over $k$ to finally add the bias leading to $z^{(\ell)}(x)$ as
\begin{align}
z^{(\ell)}_{c}(x) ~=~ \sum_{k=1}^{C^{(\ell-1)}} (W^{(\ell)}_{c,k}\star z^{(\ell-1)}_{k}(x))+\xi_c.
\label{eq:conv1}
\end{align}
In general, the input is first transformed in order to apply some boundary conditions such as zero-padding, symmetric or mirror. Those are standard padding techniques in signal processing \cite{mallat1999wavelet}.
We now describe how to obtain the matrix $\bC^{(\ell)}$ and vector $b^{(\ell)}$ corresponding to the operations of Eq. \ref{eq:conv1} but applied on the flattened input $\bz^{(\ell-1)}(x)$ and producing the output vector $\bz^{(\ell)}$.
The matrix $\bC^{(\ell)}$ is obtained by replicating the filter weights $W^{(\ell)}_{c,k}$ into the circulent-block-circulent matrices $\bW^{(\ell)}_{c,k},c=1,\dots,C^{(\ell)},k=1,\dots,C^{(\ell-1)}$ \cite{jayaraman2009digital}
and stacking them into the super-matrix $\bC^{(\ell)}$
\begin{align}
\bC^{(\ell)} ~=~ 
\begin{bmatrix}
    \bW^{(\ell)}_{1,1} & \bW^{(\ell)}_{1,2} & \dots & \bW^{(\ell)}_{1,C^{(\ell-1)}} \\
    \bW^{(\ell)}_{2,1} & \bW^{(\ell)}_{2,2} & \dots & \bW^{(\ell)}_{2,C^{(\ell-1)}} \\
    \vdots & \vdots & \ddots & \vdots \\
    \bW^{(\ell)}_{C^{(\ell)},1} & \bW^{(\ell)}_{C^{(\ell)},2} & \dots & \bW^{(\ell)}_{C^{(\ell)},C^{(\ell-1)}}
\end{bmatrix}.
\label{eq:bigC}
\end{align}
We provide an example in Fig. \ref{fig:w1} for $\bW^{(\ell)}_{c,k}$ and $\bC^{(\ell)}$.
\begin{figure}[!htb]
    \centering
    \begin{minipage}{.5\textwidth}
        \centering
        \includegraphics[width=0.9\linewidth]{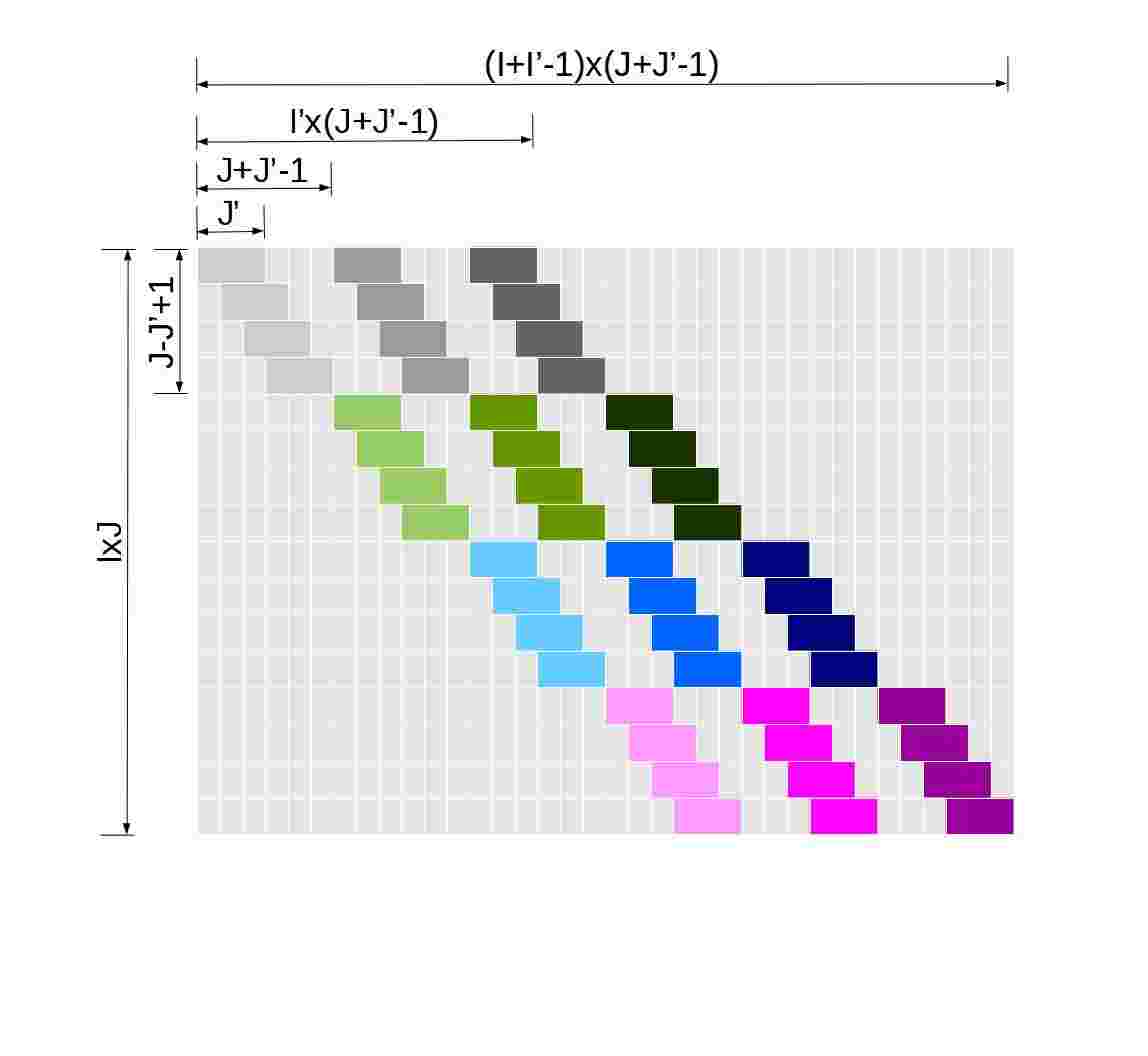}
    \end{minipage}%
    \begin{minipage}{0.5\textwidth}
        \centering
        \includegraphics[width=0.9\linewidth]{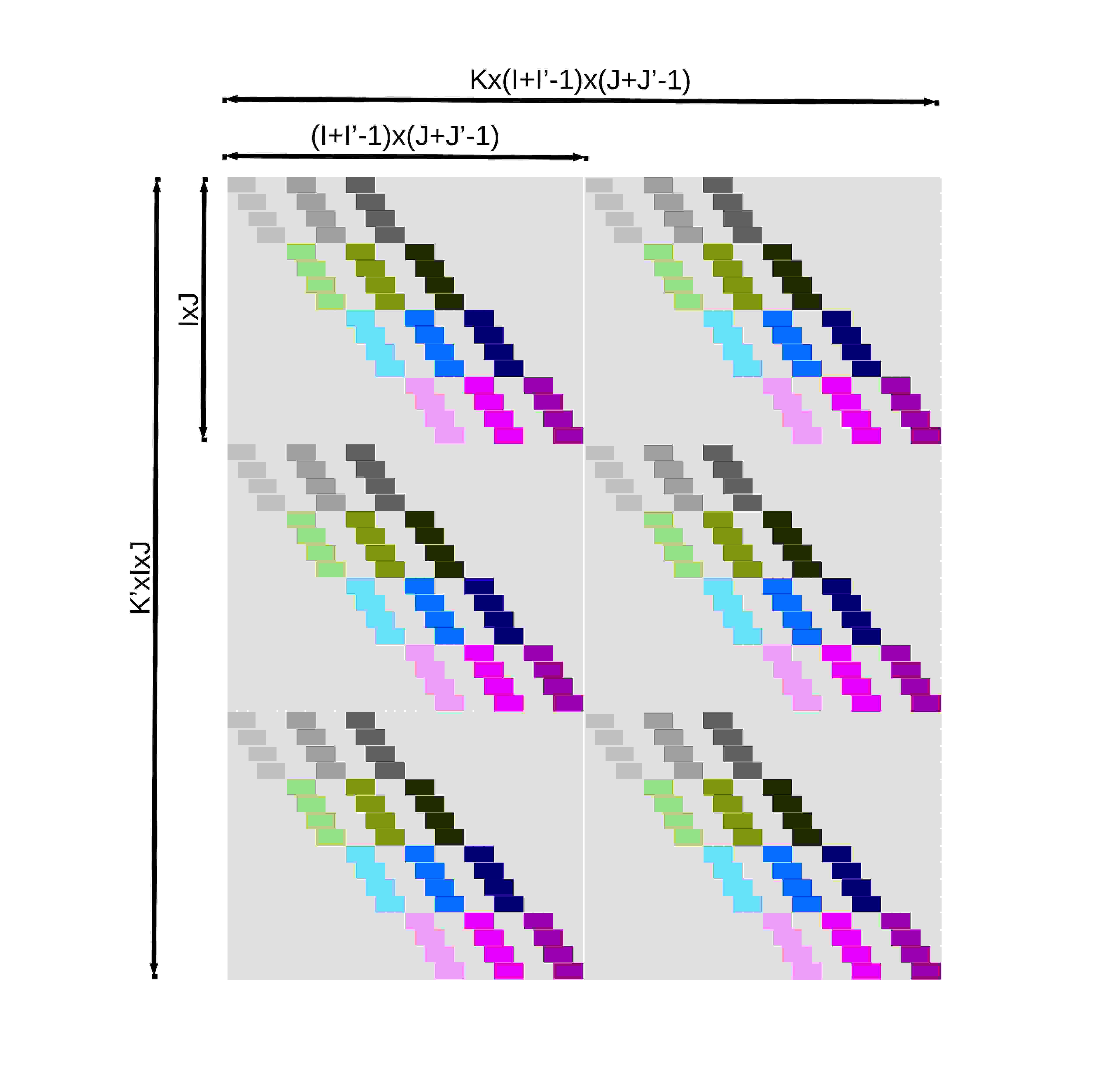}
        \label{fig:prob1_6_1}
    \end{minipage}
        \caption{Left: depiction of one convolution matrix $\bW_{k,l}$.Right: depiction of the super convolution matrix $\bC$.}        \label{fig:w1}
\end{figure}
By the sharing of the bias across spatial positions, the bias term $b^{(\ell)}$ inherits a specific structure. It is defined by replicating $\xi^{(\ell)}_c$ on all spatial position of each output channel $c$:
\begin{equation}
    b^{(\ell)}_{d}=\xi^{(\ell)}_c,d=I^{(\ell)}J^{(\ell)}(c-1)+1,\dots,I^{(\ell)}J^{(\ell)}(c-1),\;c=1,\dots,C^{(\ell)}.
\end{equation}
The internal parameters of a convolutional layer are $\theta^{(\ell)}=\{W^{(\ell)},\xi^{(\ell)}\}$. The number of degrees of freedom for this layer is much less than for a FC-layer, it is of $C^{(\ell)}C^{(\ell-1)}M^{(\ell)}N^{(\ell)}+C^{(\ell)}$.
If the convolution is circular, the spatial size of the output is preserved leading to $(I^{(\ell)},J^{(\ell)})=(I^{(\ell-1)},J^{(\ell-1)})$ and thus the output dimension only changes in the number of channels.
Taking into account the special topology of the input by constraining the $\bC^{(\ell)}$ matrix to perform convolutions coupled with the low number of degrees of freedom while allowing a high-dimensional output leads to very efficient training and generalization performances in many perceptual tasks which we will discuss in details in \ref{generalization}.
While there are still difficulties to understand what is encoded in the filters $W^{(\ell)}$, it has been empirically shown that for images, the first  filter-bank $W^{(1)}$ applied on the input images converges toward an over-complete Gabor filter-bank, considered as a natural basis for images \cite{meyer1993algorithms,olshausen1996emergence}. Hence, many signal processing tools and results await to be applied for analysis.

\paragraph{Element-wise Nonlinearity Layer}
A {\em scalar/element-wise nonlinearity} layer applies a nonlinearity $\sigma$ to each entry of a vector and thus preserve the input vector dimension $D^{(\ell)}=D^{(\ell-1)}$.
As a result, this layer produces its output via application of $\sigma:\mathbb{R}\rightarrow \mathbb{R}$ across all positions as
\begin{equation}\boxed{
    f^{(\ell)}_\sigma(\bz^{(\ell-1)}(x))_d=\sigma \left(\bz^{(\ell-1)}_d(x)\right),d=1,\dots,D^{(\ell)}.}
\end{equation}
The choice of nonlinearity greatly impacts the learning and performances of the DNN as for example sigmoids and tanh are known to have vanishing gradient problems for high amplitude inputs, while ReLU based activation lead to unbounded activation and dying neuron problems.
Typical choices include
\begin{itemize}
    \item Sigmoid: $\sigma_{sig}(u)=\frac{1}{1+e^{-u}}$,
    \item tanh: $\sigma_{tanh}(u):=2\sigma_{sig}(2u)-1$,
    \item ReLU: $\sigma_{relu}(u)=\max(u,0)$,
    \item Leaky ReLU: $\sigma_{lrelu}(u) = \max(u,0)+\min(\eta u,0),\;\eta>0$,
    \item Absolute Value: $\sigma_{abs}(u) = \max(u,0)+\max(-u,0)$.
\end{itemize}
The presence of nonlinearities in DNNs is crucial as otherwise the composition of linear layers would produce another linear layer, with factorized parameters. 
When applied after a FC-layer or a convolutional layer we will consider the linear transformation and the nonlinearity as part of one layer. Hence we will denote $f^{(\ell)}_{\theta^{(\ell)}}(\bz^{(\ell-1)})=(f^{(\ell)}_\sigma \circ f^{(\ell)}_W)(\bz^{(\ell-1)})$ for example.

\paragraph{Pooling Layer}
A {\em pooling} layer operates a sub-sampling operation on its input according to a sub-sampling policy $\rho$ and a collection of regions on which $\rho$ is applied. We denote each region to be sub-sampled by $R_d,d=1,\dots,D^{(\ell)}$ with $D^{(\ell)}$ being the total number of pooling regions. Each region contains the set of indices on which the pooling policy is applied leading to
\begin{equation}\boxed{
    f^{(\ell)}_\rho(\bz^{(\ell-1)}(x))_d=\rho (\bz^{(\ell-1)}_{R_d}(x)),d=1,\dots,D^{(\ell)}.}
\end{equation}
where $\rho$ is the pooling operator and  $\bz^{(\ell-1)}_{R_d}(x)=\{\bz^{(\ell-1)}_i(x),i\in R_d\}$. Usually one uses mean or max pooling defined as
\begin{itemize}
    \item Max-Pooling: $\rho_{max}(\bz^{(\ell-1)}_{R_d}(x))=\max_{i\in R_d} \bz^{(\ell-1)}_i(x)$,
    \item Mean-Pooling: $\rho_{mean}(\bz^{(\ell-1)}_{R_d}(x))=\frac{1}{Card(R_d)}\sum_{i\in R_d} \bz^{(\ell-1)}_i(x)$.
\end{itemize}
The regions $R_d$ can be of different cardinality $\exists d_1,d_2 | Card(R_{d_1})\not = Card(R_{d_2}) $ and can be overlapping $\exists d_1,d_2 | Card(R_{d_1})\cap  Card(R_{d_2})\not = \emptyset $. However, in order to treat all input dimension, it is natural to require that each input dimension belongs to at least one region: $\forall k \in \{1,\dots, D^{(\ell-1)}\}, \exists d \in \{1,\dots, D^{(\ell)}\}| k\in R_d$.
The benefits of a pooling layer are three-fold. Firstly, by reducing the output dimension it allows for faster computation and less memory requirement. Secondly, it allows to greatly reduce the redundancy of information present in the input $\bz^{(\ell-1)}$. In fact, sub-sampling, even though linear, is common in signal processing after filter convolutions. Finally, in case of max-pooling, it allows to only backpropagate gradients through the pooled coefficient enforcing specialization of the neurons. The latter is the corestone of the winner-take-all strategy stating that each neuron specializes into what is performs best. Similarly to the nonlinearity layer, we consider the pooling layer as part of its previous layer. 

\paragraph{Skip-Connection} A skip-connection layer can be considered as a bypass connection added between the input of a layer and its output. Hence, it allows for the input of a layer such as a convolutional layer or FC-layer to be linearly combined with its own output. The added connections lead to better training stability and overall performances as there always exists a direct linear link from the input to all inner layers. Simply written, given a layer $f^{(\ell)}_{\theta^{(\ell)}}$ and its input $z^{(\ell-1)}(x)$, the skip-connection layer is defined as
\begin{equation}\boxed{
    f^{(\ell)}_s(\bz^{(\ell-1)(x)};f^{(\ell)}_{\theta^{(\ell)}})=\bz^{(\ell-1)}(x)+f^{(\ell)}_{\theta^{(\ell)}}(\bz^{(\ell-1)}(x)).}
\end{equation}
In case of shape mis-match between $\bz^{(\ell-1)}(x)$ and $f^{(\ell)}_{\theta^{(\ell)}}(\bz^{(\ell-1)}(x))$, a ''reshape'' operator is applied to $\bz^{(\ell-1)}(x)$ before the element-wise addition. Usually this is done via a spatial down-sampling and/or through a convolutional layer with filters of spatial size $(1,1)$.

\paragraph{Recurrent} Finally, another type of layer is the recurrent layer which aims to act on time-series. It is defined as a recursive application along time $t=1,\dots,T$ by transforming the input as well as using its previous output. The most simple form of this layer is a fully recurrent layer defined as 
\begin{align}
    \bz^{(1,t)}(x)&=\sigma\left( W^{(in,h_1)}x^t+W^{(h_1,h_1)}\bz^{(1,t-1)}(x)+b^{(1)}  \right)\\
    \bz^{(\ell ,t)}(x)&=\sigma\left( W^{(in,h_\ell)}x^t+W^{(h_{\ell-1},h_\ell)}\bz^{(\ell-1),t}(x)+W^{(h_{\ell},h_\ell)}\bz^{(\ell,t-1)}(x)+b^{(\ell)}  \right)
\end{align}
while some applications use recurrent layers on images by considering the serie of ordered local patches as a time serie, the main application resides in sequence generation and analysis especially with more complex topologies such as LSTM\cite{graves2005framewise} and GRU\cite{chung2014empirical} networks.
We depict the topology example in Fig. \ref{fig_rnn}.
\begin{figure}
  \centering
  \includegraphics[width=6in]{./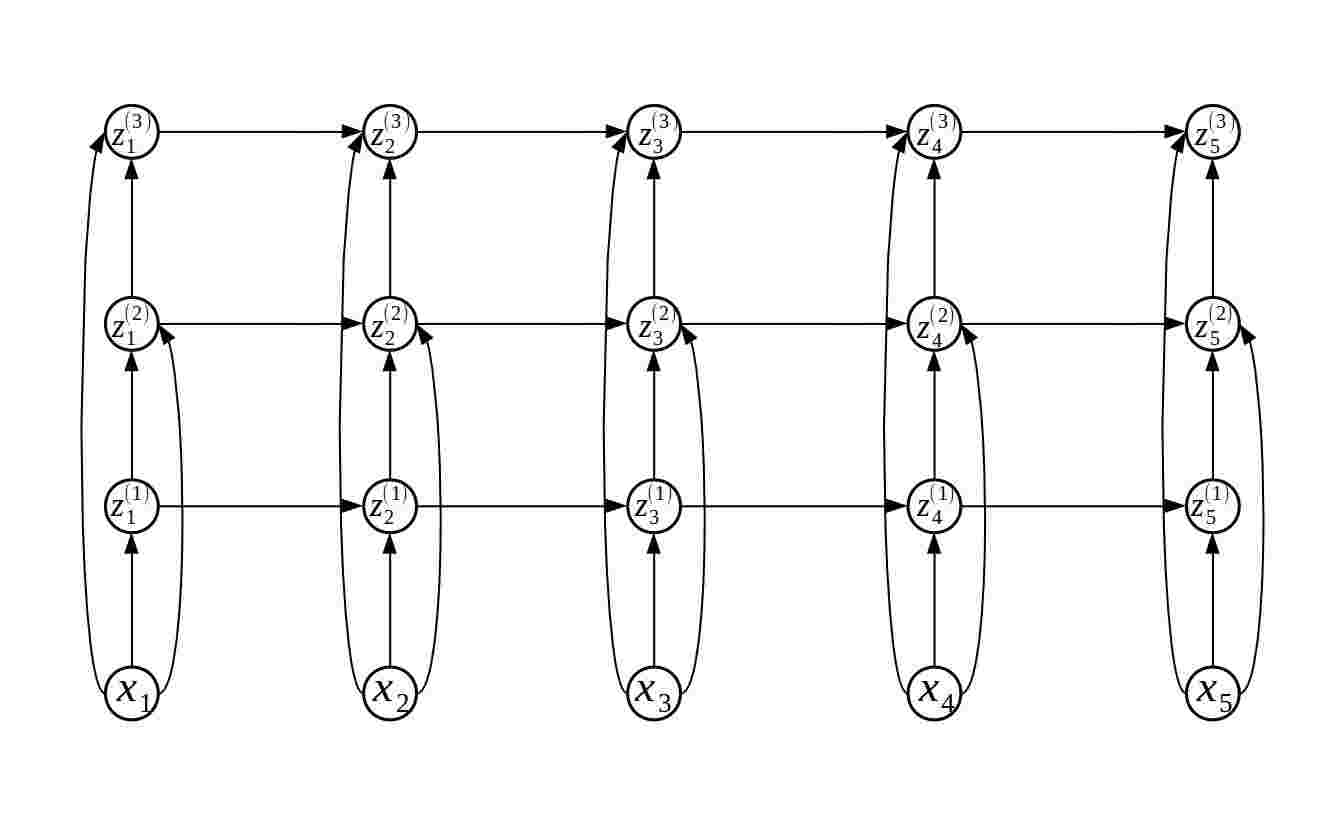}
  \caption{Depiction of a simple RNN with $3$ layers. Connections highlight input-output dependencies.}
\label{fig_rnn}
\end{figure}

\subsection{Deep Convolutional Network}\label{DCN}
The combination of the possible layers and their order matter greatly in final performances, and while many newly developed stochastic optimization techniques allow for faster learning, a sub-optimal layer chain is almost never recoverable.
We now describe a ''typical'' network topology, the deep convolutional network (DCN), to highlight the way the previously described layers can be combined to provide powerful predictors. Its main development goes back to \cite{lecun1995learning} for digit classification.
A DCN is defined as a succession of blocks made of $3$ layers : Convolution $\rightarrow$ Element-wise Nonlinearity $\rightarrow$ Pooling layer. 
In a DCN, several of such blocks are cascaded end-to-end to create a sequence of activation maps followed usually by one or two FC-layers. Using the above notations, a single block can be rewritten as $(f_\rho \circ f_\sigma \circ f_C)$. Hence a basic model with $2$ blocks and $2$ FC-layers is defined as
\begin{align}
    \bz^{(4)}(x)=&(\underbrace{f^{(4)}_W \circ f^{(3)}_\sigma \circ f^{(3)}_W}_{\text{MLP}} \circ \underbrace{f^{(2)}_\rho \circ f^{(2)}_\sigma \circ f^{(2)}_C }_{\text{ Block 2}}\circ \underbrace{f^{(1)}_\rho \circ f^{(1)}_\sigma \circ f^{(1)}_C}_{\text{Block 1}})(x)\\
    =&(f^{(4)}_{\theta^{(4)}}\circ f^{(3)}_{\theta^{(3)}} \circ f^{(2)}_{\theta^{(2)}} \circ f^{(1)}_{\theta^{(1)}})(x).
\end{align}
The astonishing results that a DCN can achieve come from the ability of the blocks to convolve the learned filter-banks with their input, ''separating'' the underlying features present relative to the task at hand. This is followed by a nonlinearity and a spatial sub-sampling to select, compress and reduce the redundant representation while highlighting task dependent features. Finally, the MLP part simply acts as a nonlinear classifier, the final key for prediction. The duality in the representation/high-dimensional mappings followed by dimensionality reduction/classification is a core concept in machine learning referred as: pre-processing-classification.

\subsection{Learning}

In order to optimize all the weights $\Theta$ leading to the predicted output $\hat{y}(x)$, one disposes of (1) a labeled dataset $\mathcal{D}=\{(X_n,Y_n),n=1,...,N\}$, (2) a loss function $\mathcal{L}:\mathbb{R}^C \times \mathbb{R}^C \rightarrow \mathbb{R}$, (3) a learning policy to update the parameters $\Theta$.
In the context of classification, the target variable $Y_n$ associated to an input $X_n$ is  categorical $Y_n \in \{1,\dots,C\}$. 
In order to predict such target, the output of the last layer of a network $\bz^{(L)}(X_n)$ is transformed via a softmax nonlinearity\cite{de2015exploration}. It is used to transform $\bz^{(L)}(X_n)$ into a probability distribution and is defined as
\begin{equation}
    \hat{y}_c(X_n)=\frac{e^{\bz^L_c(X_n)}}{\sum_c e^{\bz^L_c(X_n)}},\in (0,1), c=1,\dots,C
\end{equation}
thus leading to $\hat{y}_c(X_n)$ representing $\mathbb{P}(\text{class of $X_n$ is $c$}|X_n)$. 
The used loss function quantifying the distance between $\hat{y}(X_n)$ and $Y_n$ is the cross-entropy (CE) defined as
\begin{align}
    \mathcal{L}_{CE}(Y_n,\hat{y}(X_n)) &= -\log\left(\hat{y}_{Y_n}(X_n)\right)\\
    &= -\bz^L_{Y_n}(X_n)+\log\left(\sum_{c=1}^C e^{\bz^L_{c}(X_n)}\right).
\end{align}
For regression problems, the target $Y_n$ is continuous and thus the final DNN output is taken as the prediction $\hat{y}(X_n)=\bz^{(L)}(X_n)$. The loss function $\mathcal{L}$ is usually the ordinary squared error (SE) defined as
\begin{align}
    \mathcal{L}_{SE}(Y_n,\hat{y}(X_n))&= \sum_{c=1}^C \left( Y_{n,c}-\hat{y}(X_n)_c\right)^2.
\end{align}

Since all of the operations introduced above in standard DNNs are differentiable almost everywhere with respect to their parameters and inputs, given a training set and a loss function, one defines an update strategy for the weights $\Theta$. This takes the form of an iterative scheme based on a first order iterative optimization procedure. Updates for the weights are computed on each input and usually averaged over {\em mini-batches} containing $B$ exemplars with $B\ll N$. This produces an estimate of the ''correct'' update for $\Theta$ and is applied after each mini-batch. Once all the training instances of $\mathcal{D}$ have been seen, after $N/B$ mini-batches, this terminates an {\em epoch}. The dataset is then shuffled and this procedure is performed again. Usually a network needs hundreds of epochs to converge.
For any given iterative procedure, the updates are computed for all the network parameters by {\em backpropagation} \cite{hecht1988theory}, which follows from applying the chain rule of calculus. Common policies are Gradient Descent (GD) \cite{rumelhart1988learning} being the simplest application of backpropagation, Nesterov Momentum \cite{bengio2013advances} that uses the last performed updates in order to ''accelerate'' convergence and finally more complex adaptive methods with internal hyper-parameters updated based on the weights/updates statistics such as  Adam\cite{kingma2014adam}, Adadelta\cite{zeiler2012adadelta}, Adagrad\cite{duchi2011adaptive}, RMSprop\cite{tieleman2012lecture}, \dots
Finally, to measure the actual performance of a trained network in the context of classification, one uses the accuracy loss defined as
\begin{equation}
    \mathcal{L}_{AC}(Y_n,\hat{y}(X_n))=-1_{\{Y_n=\argmax_{c}\hat{y}_c(X_n)\}},
\end{equation}
defined such that smaller value is better,

\newpage
\section{Understand Deep Neural Networks Internal Mechanisms}
\label{sec:layers}

In this section we develop spline operators, a generalization of spline function which are also generalized DNN layers. By doing so, we will open DNNs to explicit analysis and especially understand their behavior and potential through the spline region inference. DNNs will be shown to leverage template matching, a standard technique in signal processing to tackle perception tasks.

Let first motivate the need to adopt the theory of spline functions for deep learning and machine learning in general.
As described in Sec.\ \ref{background}, the task at hand is to use parametric functionals $f_\Theta$ to be able to understand\cite{cheney1980approximation}, predict, interpolate the world around us\cite{reinsch1967smoothing}. For example, partial differential equations allow to approximate real world physics \cite{bloor1990representing,smith1985numerical} based on grounded principles. For this case, one knows the underlying laws that must be fulfilled by $f_\Theta$. In machine learning however, one only disposes of $N$ observed inputs $X_n,n=1,\dots,N$ or input-output pairs $(X_n,Y_n)_{n=1}^N$. To tackle this approximation problem, splines offer great advantages.
From a computational regard, polynomials are very efficient to evaluate via for example the Horner scheme\cite{pena2000multivariate}. 
Yet, polynomials have ''chaotic'' behaviors especially as their degree grows, leading to the Runge's phenomenon\cite{boyd2009divergence}, synonym of extremely poor interpolation and extrapolation capacities. On the other hand, low degree polynomials are not flexible enough for modeling arbitrary functionals. Splines, however, are defined as a collection of polynomials each one acting on a specific region of the input space. The collection of possible regions $\Omega=\{\omega_1,\dots,\omega_R\}$ forms a partition of the input space. On each of those regions $\omega_r$, the associated and usually low degree polynomial $\phi_{r}$ is used to transform the input $x$.
Through the per region activation, splines allow to model highly nonlinear functional, yet, the low-degree polynomials avoid the Runge phenomenon. Hence, splines are the tools of choice for functional approximation if one seeks robust interpolation/extrapolation performances without sacrificing the modeling of potentially very irregular underlying mappings $f$. 

In fact, as we will now describe in details, current state-of-the-art DNNs are linear spline functions and we now proceed to develop the notations and formulation accordingly.
Let first remind briefly the case of multivariate linear splines.
\begin{defn}
Given a partition $\Omega=\{\omega_1,\dots,\omega_R\}$ of $\mathbb{R}^D$, we denote \textbf{multivariate spline}  with local mappings $\Phi=\{\phi_{1},\dots,\phi_{R}\}$, with $\phi_{\omega_r}:\mathbb{R}^D\rightarrow \mathbb{R}$ the mapping
\begin{align}
    s[\Phi,\Omega](x)=&\sum_{r=1}^R \phi_{r}(x)1_{\{x \in \omega_r\}}\\
    =&\phi[x](x),
\end{align}
where the input dependent selection is abbreviated via
\begin{equation}
    \phi[x]:=\phi_{r} \text{ s.t. } x \in \omega_r.
\end{equation}
If the local mappings $\phi_{r}$ are linear we have $\phi_{r}(x)=\langle a_{r},x\rangle+b_{r}$ with $a_{r} \in \mathbb{R}^D$ and $b_{r}\in \mathbb{R}$. We denote this functional as a \textbf{multivariate linear spline}:
\begin{align}
    s[\textbf{a},\textbf{b},\Omega](x)=&\sum_{r=1}^R \left(\langle a_r,x\rangle + b_r \right)1_{\{x \in \omega_r\}}\\
    =&a[x]^Tx+b[x].
\end{align}
where we explicit the polynomial parameters by $\textbf{a}=\{a_r,\dots,a_{R}\}$ and $\textbf{b}=\{b_{1},\dots,b_{R}\}$.
\end{defn}
In the next sections, we study the capacity of linear spline operators to span standard DNN layers. All the development of the spline operator as well as a detailed review of multivariate spline functions is contained in Appendix \ref{splineoperator}. Afterwards, the composition of the developed linear operators will lead to the explicit analytical input-output mapping of DNNs allowing to derive all the theoretical results in the remaining of the report.
In the following sections, we omit the cases of regularity constraints on the presented functional thus leading to the most general cases.

\subsection{Spline Operators[FINI]}

A natural extension of spline functions is the \textbf{{\em spline operator}} (SO) we denote $S:\mathbb{R}^D \rightarrow \mathbb{R}^K,K>1$. We present here a general definition and propose in the next section an intuitive way to construct spline operators via a collection of multivariate splines, the special case of current DNNs.
\begin{defn}
A spline operator is a mapping $S:\mathbb{R}^D \rightarrow \mathbb{R}^K$ defined by a collection of local mappings $\Phi^S=\{\phi^S_r:\mathbb{R}^D \rightarrow \mathbb{R}^K,r=1,\dots,R\}$ associated with a partition of $\mathbb{R}^D$ denoted as $\Omega^S=\{\omega^S_r,r=1,\dots,R\}$ s.t.
\begin{align*}
    S[\Phi^S,\Omega^S](x)=&\sum_{r=1}^R\phi^S_r(x)1_{\{x \in \omega^S_r\}}\\
    =&\phi^S[x](x),
\end{align*}
where we denoted the region specific mapping associated to the input $x$ by $\phi^S[x]$.

A special case occurs when the mappings $\phi^S_r$ are linear. We thus define in this case the \textbf{\textit{linear spline operator}} (LSO) which will play an important role for DNN analysis. In this case, $\phi^S_r(x)=A_rx+b_r$, with $A_r\in \mathbb{R}^{K \times D},b_r \in \mathbb{R}^D, \forall r$. As a result, a LSO can be rewritten as
\begin{align*}
    S[\textbf{A},\textbf{b},\Omega^S](x)=&\sum_{r=1}^R(A_rx+b_r)1_{\{x \in \omega^S_r\}}\\
    =&A[x]x+b[x]
\end{align*}
where we denoted the collection of intercept and biases as $\textbf{A}=\{A_r,r=1,\dots,R\}$, $\textbf{b}=\{b_r,r=1,\dots,R\}$ and finally the input specific activation as $A[x]$ and $b[x]$.
\end{defn}
Such operators can also be defined via a collection of multivariate polynomial (resp. linear) splines.
Given $K$  multivariate spline functions $s[\Phi_k,\Omega_k]:\mathbb{R}^D \rightarrow \mathbb{R},k=1,\dots,K$, their respective output is ''stacked'' to produce an output vector of dimension $K$. The internal parameters of each multivariate spline are $\Omega_k$, a partition of $\mathbb{R}^D$ with $Card(\Omega_k)=R_k$ and $\Phi_k=\{\phi_{1},\dots,\phi_{R_k}\}$. Stacking their respective output to form an output vector leads to the induced spline operator $S\left[\left(s[\Phi_k,\Omega_k]\right)_{k=1}^K\right]$.
\begin{defn}
The spline operator $S:\mathbb{R}^D \rightarrow \mathbb{R}^K$ defined with $K$ 
multivariate splines $\left(s[\Phi_k,\Omega_k]\right)_{k=1}^K$ with $s[\Phi_k,\Omega_k]:\mathbb{R}^D \rightarrow \mathbb{R}$ is defined as 
\begin{align}
    S\left[\left(s[\Phi_k,\Omega_k]\right)_{k=1}^K\right](x)=\left[ 
    \begin{matrix}
    s[\Phi_1,\Omega_1](x)\\
    \vdots \\
    s[\Phi_K,\Omega_K](x)
    \end{matrix}
    \right].
\end{align}
with $\Omega_k =\{\omega_{k,1},\dots,\omega_{k,R_k)}\}$, $\Phi_k=\{\phi_{k,r} \in \mathbb{R}^D,r=1,\dots, R_k\},k=1,\dots,K$. 
\end{defn}
The use of $K$ multivariate splines to construct a SO does not provide directly the explicit collection of mappings and regions $\Phi^S,\Omega^S$. Yet, it is clear that the SO is jointly governed by all the individual multivariate splines.
Let first present some intuitions on this fact.
The spline operator output is computed with each of the $K$ splines having ''activated'' a region specific functional depending on their own input space partitioning. In particular, each of the region $\omega^S_r$ of the input space leading to a specific joint configuration $\phi^S_r$ is the one of interest, leading to $\Omega^S$ and $\Phi^S$.
We can thus write explicitly the new regions of the spline operator based on the ensemble of partition of all the involved multivariate splines as
\begin{align}\label{omegaS}
    \Omega^S=\left(\bigcup_{(\omega_1,\dots,\omega_K) \in \Omega_1\times \dots \times \Omega_K}\left\{\bigcap_{k\in\{1,\dots,K\}}\omega_k\right\}\right)\setminus\{\emptyset\}.
\end{align}
We also denote the number of region associated to this SO as $R^S=Card(\Omega^S)$.
From this, the local mappings of the SO $\phi^S_r$ correspond to the joint mappings of the splines being activated on $\omega^S_r$ we denote 
\begin{align}
\phi^S[\omega^S_r](x)=\left[ 
    \begin{matrix}
     \phi_1[\omega^S_r](x)\\
    \vdots \\
     \phi_K[\omega^S_r](x)
    \end{matrix}
    \right],
\end{align}
with $\phi_k[\omega^S_r]=\phi_{k,q} \in \Phi_k$ s.t. $\omega^S_r \subset \omega_{k,q}$. In fact, for each region $\omega^S_r$ of the SO there is a unique region $\omega_{k,q_k}$ for each of the splines $k=1,\dots,K$, such that it is a subset as $\omega^S_r\subset \omega_{k,q_k}$ and it is disjoint to all others $\omega^S_r \cap \omega_{k,l}=\emptyset, \forall l \not = q_k,k=1,\dots,K$. In other word we have the following property:
\begin{align}
    \forall (r,k) \in \{1,\dots,R^S\}\times \{1,\dots,K\}, \exists ! q_k \in \{1,\dots,R_k\} \text{ s.t. } \omega^S_r \cap \omega_{k,l}=\left\{\begin{array}{l}
        \omega^S_r,\;\;l = q_k\\
        \emptyset,\;\;l\not = q_k
    \end{array} \right.,
\end{align}
as we remind $\omega^S_r \cap \omega_{k,l}=\omega^S_r\iff \omega^S_r \subset \omega_{k,q_k}$.

This leads to the following SO formulation
\begin{align}
    S\left[\left(s[\Phi_{k},\Omega_k]\right)_{k=1}^K\right](x)=&\sum_{r=1}^{R^S} \left[ 
    \begin{matrix}
    s[\Phi_{1},\Omega_1](x)\\
    \vdots \\
    s[\Phi_{K},\Omega_K](x)
    \end{matrix}
    \right]1_{\{x\in \omega^S_r\}}\nonumber\\
    =&\sum_{r=1}^{R^S} \left[ 
    \begin{matrix}
     \phi_1[\omega^S_r](x)\\
    \vdots \\
     \phi_K[\omega^S_r](x)
    \end{matrix}
    \right]1_{\{x\in \omega^S_r\}}\nonumber\\
    =&\sum_{r=1}^{R^S} \phi^S[\omega^S_r](x)1_{\{x\in\omega^S_r\}}\nonumber\\
    =&\phi^S[x](x),
\end{align}
\begin{figure}
    \centering
    \includegraphics[width=4in]{./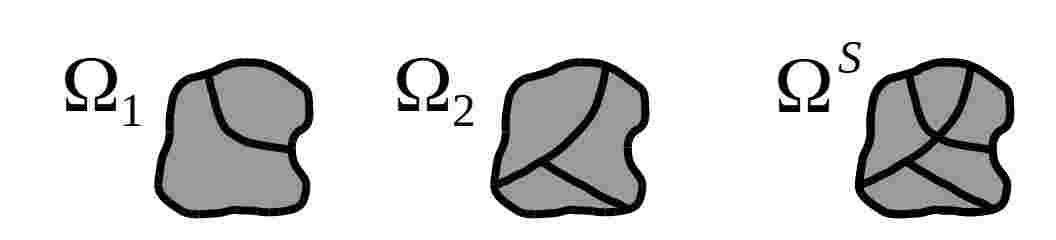}
    \caption{Illustrative examples of the new partition $\Omega^\mathcal{S}$ given two partitions $\Omega_1,\Omega_2$ with $I_1 =2,I_2=3$.}
    \label{fig2}
\end{figure}
We can now study the case of linear splines leading to LSOs.
If a SO is constructed via aggregation of linear multivariate splines,. The linear property allows notation simplifications. It is defined as 
\begin{align}
    S\left[\left(s[\textbf{a}_{k},\textbf{b}_{k},\Omega_k]\right)_{k=1}^K\right](x)=&\sum_{r=1}^{R^S} \left[ 
    \begin{matrix}
    s[\textbf{a}_{1},\textbf{b}_{1},\Omega_1](x)\\
    \vdots \\
    s[\textbf{a}_{K},\textbf{b}_{K},\Omega_K](x)
    \end{matrix}
    \right]1_{\{x\in \omega^S_r\}}\nonumber\\
    =&\sum_{r=1}^{R^S} \left[ 
    \begin{matrix}
    \langle a_{1}[\omega^S_r], x\rangle+b_{1}[\omega^S_r]\\
    \vdots \\
    \langle a_{K}[\omega^S_r], x\rangle+b_{K}[\omega^S_r]
    \end{matrix}
    \right]1_{\{x\in \omega^S_r\}}\nonumber\\
    =&\sum_{r=1}^{R^S} \left(\left[ 
    \begin{matrix}
    a_{1}[\omega^S_r]^T\\
    \vdots \\
    a_{K}[\omega^S_r]^T
    \end{matrix}
    \right]x+
    \left[ 
    \begin{matrix}
    b_{1}[\omega^S_r]\\
    \vdots \\
    b_{K}[\omega^S_r]
    \end{matrix}
    \right]
    \right)1_{\{x\in \omega^S_r\}}\nonumber\\
    =&\sum_{r=1}^{R^S} (A[\omega^S_r] x+b[\omega^S_r])1_{\{x\in\omega^S_r\}}\nonumber\\
    =&A[x]x+b[x],
\end{align}
with $\Omega_k =\{\omega_{k,1},\dots,\omega_{k,R_k)}\}$, $\textbf{a}_k=\{a_{k,r} \in \mathbb{R}^D,r=1,\dots, R_k\}$, $\textbf{b}_k=\{b_{k,r} \in \mathbb{R},r=1,\dots, R_k\}$. 

As it is clear, the collection of matrices and biases and the partitions completely define a LSO. Hence, we denote the set of all possible matrices and biases as    $\textbf{A}=\{A[\omega] ,\omega \in \Omega^S\}$, $\textbf{b}=\{b[\omega] ,\omega \in \Omega^S\}$. Any LSO is thus written as $S\left[\textbf{A},\textbf{b},\Omega^S\right]$.

\subsection{Linear Spline Operator: Generalized Neural Network Layers}\label{subsec:nn}
In this section we demonstrate how current DNNs are expressed as composition of LSOs. 
We first proceed to describe layer specific notations and analytical formula to finally perform composition of LSOs providing analytical DNN mappings in the next section.
\subsubsection{Nonlinearity layers}

We first analyze the elementwise nonlinearity layer. Our analysis deals with any given nonlinearity. If this nonlinearity is by definition a spline s.a. with ReLU, leaky-ReLU, absolute value, they fall directly into this analysis. If not, arbitrary functions such as tanh, sigmoid  are approximated via linear splines.
We remind that a nonlinearity layer $f^{(\ell)}_\sigma$ is defined by applying a nonlinearity $\sigma$ on each input dimension of its input $\bz^{(\ell-1)} \in \mathbb{R}^{D^{(\ell-1)}}$ and produces a new output vector $\bz^{(\ell)} \in \mathbb{R}^{D^{(\ell)}}$. While in general the used nonlinearity is the same applied on each dimension we present here a more general case where one has a specific $\sigma_d$ per dimension. In addition, we present the case where the nonlinearity might not act on only one input dimension but any part of it. We thus define by $\sigma^{(\ell)}[\textbf{a}_{d},\textbf{b}_{d},\Omega_d]:\mathbb{R}^{D^{(\ell-1)}}\rightarrow \mathbb{R}$ the nonlinearity acting on the $d^{th}$ input dimension, $\bz^{(\ell-1)}_d$.
We provide illustration of famous nonlinearities in Table \ref{table_relu} being cases where the output dimension at position $d$ only depends on the input dimension of $d$.

\begin{table}
\centering
\begin{tabular}{|c|c|c|}\hline
ReLU\cite{glorot2011deep} & LReLU\cite{xu2015empirical} & Abs.Value\\ \hline
$\begin{aligned}[t]
    &\Omega_d =\{\omega_{d,1},\omega_{d,2}\},\nonumber\\
    &\omega_{d,1} = \{x\in\mathbb{R}^d : x_d>0\},\nonumber \\
    &\omega_{d,2} = \{x\in\mathbb{R}^d : x_d\leq 0\}\nonumber\\
    &a_{d,1}=\textbf{e}_d,a_{d,2}=0,\nonumber \\
    &b_{k,1}=b_{k,2}=0,
\end{aligned}$     &  
$\begin{aligned}[t]
    &\Omega_d =\{\omega_{d,1},\omega_{d,2}\},\nonumber\\
    &\omega_{d,1} = \{x\in\mathbb{R}^d : x_d>0\},\nonumber \\
    &\omega_{d,2} = \{x\in\mathbb{R}^d : x_d\leq 0\}\nonumber\\
    &a_{d,1}=\textbf{e}_d,a_{d,2}=\eta \textbf{e}_d,\eta>0,\nonumber \\
    &b_{k,1}=b_{k,2}=0,
\end{aligned}$     &  
$\begin{aligned}[t]
    &\Omega_d =\{\omega_{d,1},\omega_{d,2}\},\nonumber\\
    &\omega_{d,1} = \{x\in\mathbb{R}^d : x_d>0\},\nonumber \\
    &\omega_{d,2} = \{x\in\mathbb{R}^d : x_d\leq 0\}\nonumber\\
    &a_{d,1}=\textbf{e}_d,a_{d,2}=-\textbf{e}_d,\nonumber \\
    &b_{k,1}=b_{k,2}=0,
\end{aligned}$    
\\ \hline
\end{tabular}
\caption{Example of multivariate splines associated with standard DNN nonlinearities, note that they are exact by definition as opposed to a linear spline approximating a sigmoid function for example, for which approximation can be made arbitrarily close.}\label{table_relu}
\end{table}
Given a collection of $\mathcal{D}^{(\ell)}$ such nonlinearities $(\sigma[(\textbf{a}_{d},\textbf{b}_{d},\Omega_d])_{d=1}^{D^{(\ell)}}$ of such linear splines, we define the spline operator which defined an actual nonlinear layer as $S^{(\ell)}_\sigma[\textbf{A}_\sigma,\textbf{b}_\sigma,\Omega^S_\sigma]$ with the induced matrix and vector parameters as defined in the previous section. Hence we have 
\begin{align}
    f^{(\ell)}_\sigma(\bz^{(\ell-1)})=&S^{(\ell)}_\sigma[\textbf{A}_\sigma,\textbf{b}_\sigma,\Omega^S_\sigma](\bz^{(\ell-1)})\nonumber \\
    =&A^{(\ell)}_\sigma[\bz^{(\ell-1)}]+b[\bz^{(\ell-1)}].
\end{align}
We also have for the provided examples
\begin{align*}
    \mathbf{b}_{relu}&=\mathbf{b}_{lrelu}=\mathbf{b}_{abs}=\{\textbf{0}\},\\
    \mathbf{A}_{relu}&=\{diag(v)|v \in \{0,1\}^K\},\\
    \mathbf{A}_{lrelu}&=\{diag(v)|v \in \{\eta,1\}^K\},\\
    \mathbf{A}_{abs}&=\{diag(v)|v \in \{-1,1\}^K\}.
\end{align*}
In order to better demonstrate the underlying spline operator mappings induced by typical DNN nonlinearities we provide a detailed example for the ReLU case below with $D^{(\ell-1)}=3,D^{(\ell)}=3$, we now omit the layer notation for the example. We have according to the presented definition that the slopes of the splines are
\begin{center}
\begin{align*}
    \textbf{A}_{relu}= &\left\{A_{(2,2,2)}=
    \left[\begin{matrix}0&0&0\\0&0&0\\0&0&0\end{matrix}\right],
    A_{(2,2,1)}=\left[\begin{matrix}0&0&0\\0&0&0\\0&0&1\end{matrix}\right],
    A_{(2,1,2)}=\left[\begin{matrix}0&0&0\\0&1&0\\0&0&0\end{matrix}\right],
    A_{(2,1,1)}=\left[\begin{matrix}0&0&0\\0&1&0\\0&0&1\end{matrix}\right],
    \right.
    \\&
    \left.
    A_{(1,2,2)}=\left[\begin{matrix}1&0&0\\0&0&0\\0&0&0\end{matrix}\right], A_{(1,2,1)}=\left[\begin{matrix}1&0&0\\0&0&0\\0&0&1\end{matrix}\right],
    A_{(1,1,2)}=\left[\begin{matrix}1&0&0\\0&1&0\\0&0&0\end{matrix}\right],
    A_{(1,1,1)}=\left[\begin{matrix}1&0&0\\0&1&0\\0&0&1\end{matrix}\right]
    \right\},
    \end{align*}
    \begin{align*}
    \textbf{b}_{relu}=&\{\textbf{0}\},
    \end{align*}
    \begin{align*}
    \Omega^S_{relu}=&\{\omega^S_{(2,2,2)},\omega^S_{(2,2,1)},\omega^S_{(2,1,2)},\omega^S_{(2,1,1)},\omega^S_{(1,2,2)},\omega^S_{(1,2,1)},\omega^S_{(1,1,2)},\omega^S_{(1,1,1)}\},\\
    &\omega^S_{(2,2,2)} = \{x \in \mathbb{R}^3:x_1< 0,x_2<0,x_3<0\},\\
    &\omega^S_{(2,2,1)} = \{x \in \mathbb{R}^3:x_1< 0,x_2<0,x_3\geq 0\},\\
    &\omega^S_{(2,1,2)} = \{x \in \mathbb{R}^3:x_1< 0,x_2\geq 0,x_3<0\},\\
    &\omega^S_{(2,1,1)} = \{x \in \mathbb{R}^3:x_1< 0,x_2\geq 0,x_3\geq 0\},\\
    &\omega^S_{(1,2,2)} = \{x \in \mathbb{R}^3:x_1\geq 0,x_2<0,x_3<0\},\\
    &\omega^S_{(1,2,1)} = \{x \in \mathbb{R}^3:x_1\geq 0,x_2<0,x_3\geq 0\},\\
    &\omega^S_{(1,1,2)} = \{x \in \mathbb{R}^3:x_1\geq 0,x_2\geq 0,x_3<0\},\\
    &\omega^S_{(1,1,1)} = \{x \in \mathbb{R}^3:x_1\geq 0,x_2\geq 0,x_3\geq 0\},
\end{align*}
\end{center}
and we thus have when given some inputs that
\begin{align*}
    S_{relu}[\textbf{A}_{relu},\textbf{b}_{relu},\Omega^S_{relu}]([1,2,3]^T)&=A_{(1,1,1)}[1,2,3]^T+b_{(1,1,1)}\\
                                            &=[1,2,3]^T,\\
    S_{relu}[\textbf{A}_{relu},\textbf{b}_{relu},\Omega^S_{relu}]([1,-2,3]^T)&=A_{(1,2,1)}[1,-2,3]^T+b_{(1,2,1)}\\
                                            &=[1,0,3]^T,\\
    S_{relu}[\textbf{A}_{relu},\textbf{b}_{relu},\Omega^S_{relu}]([-1,-2,-3]^T)&=A_{(2,2,2)}[-1,-2,-3]^T+b_{(2,2,2)}\\
                                            &=[0,0,0]^T.\\
\end{align*}

\subsubsection{Sub-Sampling layers}

We now study the case of sub-sampling layers. We first remind briefly that a pooling layer $f^{(\ell)}_\rho$ is defined by a pooling policy $\rho$ and a collection of regions $R_d,d=1,\dots,d^{(\ell)}$. Each of these regions contain indices on which $\rho$ will be applied to form the output vector $\bz^{(\ell)}\in \mathbb{R}^{D^{(\ell)}}$. As for the nonlinear layer, it is common to use the same pooling policy across all regions, yet we now formula the spline functional for the more general case of a pooling per region $\rho_d$ by $\rho^{(\ell)}[\textbf{a}_{d},\textbf{b}_{d},\Omega_d]:\mathbb{R}^{D^{(\ell-1)}}\rightarrow \mathbb{R}$.
We provide illustration of standard cases in Table \ref{table_pooling}.
\begin{table}
\centering
\begin{tabular}{|c|c|}\hline
Max-Pooling & Mean-Pooling\\ \hline
$\begin{aligned}[t]
    &\Omega_d =\{\omega_{d,1},\dots,\omega_{d,Card(R_d)}\},\\
    &\omega_{d,r} = \{x\in\mathbb{R}^d | r=\argmax x_{R_d}\}\nonumber\\
    &[a_{d,r}]_l=\textbf{e}_{R_d(r)},b_{d,r}=0,
\end{aligned}$     &  
$\begin{aligned}[t]
    &\Omega_k = \{\mathbb{R}^d\},\nonumber\\
    &a_{d,1}=\frac{1}{Card(R_d)}(\sum_{i \in R_d} \textbf{e}_i),b_{k,1}=0,\forall k
\end{aligned}$    
\\ \hline
\end{tabular}
\caption{Example of multivariate splines associated with standard DNN pooling.}\label{table_pooling}
\end{table}
Similarly to the nonlinearity case we can now define the spline operator.
Again, given a collection $\Big(\rho[\textbf{a}_d,\textbf{b}_d,\Omega_d]\Big)_{d=1}^{D^{(\ell)}}$ of such affine splines we can create the spline operator denoted by $S_\rho[\textbf{A}_\rho,\textbf{b}_\rho,\Omega^S_\rho]$. For the max-pooling policy we have
\begin{align}
&\textbf{A}_{max} = \left\{\left[ 
    \begin{matrix}
    e_1^T\\
    \vdots \\
    e_{D^{(\ell)}}^T
    \end{matrix}
    \right],e_d \in \{ \mathbf{e}_{d},d \in R_d\}, d=1,\dots, D^{(\ell)}\right\},\textbf{b}_{max}=\{\textbf{0}\},
\end{align}

We now present an illustrative example of the max-pooling LSO with $D^{(\ell-1)}=4,D^{(\ell)}=2$ and $R_1=\{1,2\},R_2=\{3,4\}$ which corresponds to pooling over non-overlapping regions of size $2$. We thus have
\begin{center}
\begin{align*}
    \textbf{A}_{max}=&\left\{A_{(1,1)}=
    \left[\begin{matrix}1&0&0&0\\0&0&1&0\end{matrix}\right],
    A_{(1,2)}=\left[\begin{matrix}1&0&0&0\\0&0&0&1\end{matrix}\right],
    \right. \\ 
    &\left. A_{(2,1)}=\left[\begin{matrix}0&1&0&0\\0&0&1&0\end{matrix}\right],
    A_{(2,2)}=\left[\begin{matrix}0&1&0&0\\0&0&0&1\end{matrix}\right]\right\},
    \end{align*}
    \begin{align*}
    \textbf{b}_{max}=&\{\textbf{0}\},
    \end{align*}
    \begin{align*}
    \bm{\Omega}_{max}=&\{\omega^S_{(1,1)},\omega^S_{(1,2)},\omega^S_{(2,1)},\omega^S_{(2,2)}\},\\
    &\omega^S_{(1,1)} = \{x \in \mathbb{R}^4:1=\argmax_{d\in\{1,2\}}x_d,3=\argmax_{d\in\{3,4\}}x_d\},\\
    &\omega^S_{(1,2)} = \{x \in \mathbb{R}^4:1=\argmax_{d\in\{1,2\}}x_d,4=\argmax_{d\in\{3,4\}}x_d\},\\
    &\omega^S_{(2,1)} = \{x \in \mathbb{R}^4:2=\argmax_{d\in\{1,2\}}x_d,3=\argmax_{d\in\{3,4\}}x_d\},\\
    &\omega^S_{(2,2)} = \{x \in \mathbb{R}^4:2=\argmax_{d\in\{1,2\}}x_d,4=\argmax_{d\in\{3,4\}}x_d\},
    \end{align*}
    \end{center}
and we thus have when given some inputs that
\begin{align*}
    S_{max}[\textbf{A}_{max},\textbf{b}_{max},\bm{\Omega}_{max}]([1,2,3,4]^T)&=A_{(2,2)}[1,2,3,4]^T+b_{(2,2)}\\
                                            &=[2,4]^T,\\
    S_{max}[\textbf{A}_{max},\textbf{b}_{max},\bm{\Omega}_{max}]([1,-2,3,1]^T)&=A_{(1,1)}[1,-2,3,1]^T+b_{(1,1)}\\
                                            &=[1,3]^T,\\
    S_{max}[\textbf{A}_{max},\textbf{b}_{max},\bm{\Omega}_{max}]([-1,-2,-3,0]^T]&=A_{(1,2)}[-1,-2,-3,0]^T+b_{(1,2)}\\
                                            &=[-1,0]^T.\\
\end{align*}

\subsubsection{Linear layers:FC and convolutional}
Finally, in order to provide a complete spline interpretation of DNN layers we present the case of linear layers as convolutional and FC layers. By definition of being linear mappings, they are equivalent to a spline operator with one region corresponding to the input space. We thus define this operator as
\begin{align}
    S^{(\ell)}_{W}[\{W^{(\ell)}\},\{b^{(\ell)}\},\{\mathbb{R}^{D^{(\ell-1)}}\}](\bz^{(\ell-1)})&=W^{(\ell)}\bz^{(\ell-1)}+b^{(\ell)},\;\;\text{FC-layer}\\
    S^{(\ell)}_{\bC}[\{\bC^{(\ell)}\},\{b^{(\ell)}\},\{\mathbb{R}^{D^{(\ell-1)}}\}](\bz^{(\ell-1)})&=\bC^{(\ell)}\bz^{(\ell-1)}+b^{(\ell)},\;\;\text{convolutional layer}
    \end{align}
where we shall omit the trivial parameters and denote $S^{(\ell)}_{W}:=S^{(\ell)}_{W}[\{W^{(\ell)}\},\{b^{(\ell)}\},\{\mathbb{R}^{D^{(\ell-1)}}\}]$ and $S^{(\ell)}_{\bC}:=S^{(\ell)}_{\bC}[\{\bC^{(\ell)}\},\{b^{(\ell)}\},\{\mathbb{R}^{D^{(\ell-1)}}\}]$ .
We derived all the notations and gave examples on how to define standard DNNs layers via linear spline operators. We can now move to the composition of such operators defining the complete DNN mappings.

\subsection{Deriving Analytical DNNs Mappings to Explicit their Faculty to Perform Template Matching}\label{subsub:networks}
To perform perceptual tasks such as object recognition, a standard technique is template matching. It aims as detecting the presence in the input of a class specific template even if the template in the input has suffered some perturbation.
Template matching is well studied when the template perturbation belongs to the standard groups of natural deformations s.a. translation, rotation for example and this process is usually referred as elastic matching. There are also been extension to perform a hierarchical elastic matching in \cite{zhang1997face,bajcsy1989multiresolution,burr1981elastic} by marginalizing out layer after layer all the possible local perturbation.
Many extensions have also been studied to model more complex diffeomorphisms as in \cite{korman2013fast,kim2007grayscale}. A detailed review of elastic matching is proposed in \cite{uchida2005survey}. This task can also be formulated as a problem of best basis selection where the optimal atom is the correct template with the input adapted perturbation. Concept of input dependent basis has been well studied for example in \cite{coifman1992entropy,tropp2004greed,mallat2008wavelet,berger1994removing}. 
Yet, the need for exact mathematical modeling of the template transformations limit the ability to produce algorithms flexible enough to learn classes of diffeomorphisms in a complete data driven, parametric learning approach. As we will see, this is performed by state-of-the-art DNNs.

\subsubsection{Composition of Splines for Explicit DNN Template Matching}
As demonstrated in the previous sections, neural network layers are special cases of LSOs.
From the derived notation and LSOs of the previous section,  we can now proceed to rewrite the complete DNN mapping as composition of such operators. Firstly, we define $S^{(\ell)}[\textbf{A}^{(\ell)},\textbf{b}^{(\ell)},\Omega^{S(\ell)}]:=S^{(\ell)}_{\theta^{(\ell)}}$. We thus have for any DNN 
\begin{equation}
f_\Theta(x)\approx (S^{(L)}_{\theta^{(L)}} \circ \dots \circ S^{(1)}_{\theta^{(1)}})(x),\;\;\;\Theta=\{\theta^{(1)},\dots,\theta^{(L)}\},
\end{equation}
where the approximation becomes an equality if the used layers are splines s.a. with ReLU, max-pooling. If not,  arbitrary close approximation schemes can be found. 
This provides a very intuitive result from this composition of linear mappings.
\begin{theorem}\label{th1}
Any deep network $f_\Theta$ made of LSOs s.a. max-pooling, ReLU, leaky ReLU,\dots is itself a LSO of the form 
\begin{equation}
    f_\Theta(x)= A[x]x+b[x],\forall x.
\end{equation}
For the case where the layers are not natural LSOs s.a. with tanh, sigmoid nonlinearities, then, it can always be approximated arbitrarily closely by a affine spline operator and thus
\begin{equation}
    f_\Theta(x)\approx A[x]x+b[x],\forall x.
\end{equation}
\end{theorem}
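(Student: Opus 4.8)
The plan is to reduce the whole theorem to a single \emph{composition lemma}: the composition of two LSOs is again an LSO, with explicitly computable slopes, offsets, and partition. Once this is in hand, the exact statement follows by a trivial induction on the depth $L$, since each layer $S^{(\ell)}_{\theta^{(\ell)}}$ was already exhibited as an LSO in Section~\ref{subsec:nn} (FC and convolution with the trivial one-cell partition $\{\mathbb{R}^{D^{(\ell-1)}}\}$; ReLU, leaky-ReLU, absolute value, and max-pooling with the explicit partitions tabulated above). The approximation clause then follows by combining the exact statement with the density of linear splines in $C(\mathcal{K})$ on any compact $\mathcal{K}$ together with the (local) Lipschitz continuity of each layer map.

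First I would fix two LSOs $S^{(1)}=S^{(1)}[\textbf{A}^{(1)},\textbf{b}^{(1)},\Omega^{S(1)}]:\mathbb{R}^{D^{(0)}}\to\mathbb{R}^{D^{(1)}}$ and $S^{(2)}=S^{(2)}[\textbf{A}^{(2)},\textbf{b}^{(2)},\Omega^{S(2)}]:\mathbb{R}^{D^{(1)}}\to\mathbb{R}^{D^{(2)}}$ and expand the composition directly: if $x$ lies in region $\omega^{S(1)}_r$ then $S^{(1)}(x)=A^{(1)}_r x+b^{(1)}_r$, and if moreover $S^{(1)}(x)$ lies in region $\omega^{S(2)}_q$ then
\begin{align*}
(S^{(2)}\circ S^{(1)})(x)&=A^{(2)}_q\big(A^{(1)}_r x+b^{(1)}_r\big)+b^{(2)}_q\\
&=\underbrace{A^{(2)}_q A^{(1)}_r}_{=:A[x]}\,x+\underbrace{A^{(2)}_q b^{(1)}_r+b^{(2)}_q}_{=:b[x]}.
\end{align*}
The only real content is to verify that $A[\cdot]$ and $b[\cdot]$ are genuinely piecewise constant with respect to a partition of $\mathbb{R}^{D^{(0)}}$. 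For this I would define, for each pair $(r,q)$, the cell
\[
\omega^{S}_{(r,q)} := \omega^{S(1)}_r\cap\big(A^{(1)}_r(\cdot)+b^{(1)}_r\big)^{-1}\!\big(\omega^{S(2)}_q\big),
\]
i.e.\ the preimage, under the affine piece of $S^{(1)}$ active on $\omega^{S(1)}_r$, of the $q$-th cell of $\Omega^{S(2)}$, intersected with $\omega^{S(1)}_r$. These cells are pairwise disjoint, their union is $\mathbb{R}^{D^{(0)}}$ (because $\Omega^{S(1)}$ partitions the domain and $\Omega^{S(2)}$ partitions $\mathbb{R}^{D^{(1)}}$), on each nonempty cell both $A[\cdot]$ and $b[\cdot]$ are constant, and discarding the empty ones yields the partition $\Omega^S$ of the composed LSO exactly in the bookkeeping style of Eq.~\eqref{omegaS}. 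I expect this to be the main obstacle — not because it is hard, but because it is the one place a careful reader can object: one must note that intersecting a region with the affine preimage of another region preserves the structure of Eq.~\eqref{omegaS}, that the polyhedral case stays polyhedral (affine preimages and finite intersections of polyhedra are polyhedra), and that the boundary points where the indicator convention is ambiguous form a measure-zero set that may be assigned arbitrarily. I would flag this rather than belabour it.

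With the composition lemma established, the exact statement is immediate by induction: repeatedly applying the lemma to $f_\Theta=S^{(L)}_{\theta^{(L)}}\circ\cdots\circ S^{(1)}_{\theta^{(1)}}$ collapses the composition to a single LSO $A[x]x+b[x]$, whose slope along the activation path of $x$ is the ordered product $A^{(L)}[\cdot]\cdots A^{(1)}[\cdot]$ of the per-layer slopes. For the approximation clause I would replace each non-spline nonlinearity $\sigma_d$ (tanh, sigmoid) by a linear spline $\tilde\sigma_d$ with $\sup_{u\in[-M,M]}|\sigma_d(u)-\tilde\sigma_d(u)|\le\varepsilon$ on the compact range relevant to bounded inputs; since every layer map is Lipschitz (linear pieces with constant $\|\bC^{(\ell)}\|$ or $\|W^{(\ell)}\|$, standard nonlinearities and pooling with constant $1$), a telescoping estimate bounds the resulting output perturbation by roughly $\varepsilon\sum_{\ell}\prod_{k>\ell}L_k$, which tends to $0$ as $\varepsilon\to 0$. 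The perturbed network is a composition of LSOs, hence an LSO by the first part, and approximates $f_\Theta$ uniformly on compacta, giving $f_\Theta(x)\approx A[x]x+b[x]$. The induction and this error-propagation step are routine once the composition lemma is pinned down.
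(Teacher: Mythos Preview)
Your proposal is correct and follows the same underlying idea as the paper --- that a composition of LSOs is again an LSO, with the per-layer slopes multiplying and the biases accumulating --- but you supply substantially more rigor than the paper itself does. The paper does not prove Theorem~\ref{th1} in any formal sense: immediately after the statement it simply cites the density of linear splines (\cite{nishikawa1998accurate}) for the approximation clause, and then in Eq.~\eqref{all_eq} writes out the unrolled product $\prod_{\ell} A^{(\ell)}_{\theta^{(\ell)}}$ without ever verifying that the induced partition of the input space is well-defined.

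Your composition lemma, the explicit cell construction $\omega^{S}_{(r,q)} = \omega^{S(1)}_r\cap (A^{(1)}_r(\cdot)+b^{(1)}_r)^{-1}(\omega^{S(2)}_q)$, and the Lipschitz telescoping for the approximation part are all genuine additions. The paper treats these as obvious; you treat them as requiring justification, which is the right instinct. In particular, your observation that affine preimages preserve polyhedrality and that boundary ambiguities are measure-zero is exactly the kind of detail the paper glosses over. So: same approach, but your version is the one that would survive a careful referee.
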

In fact, it has been shown that linear splines can approximate any functions arbitrarily closely \cite{nishikawa1998accurate}. In addition, using linear spline approximations is computationally efficient. For example, using ultra fast sigmoid (a linear spline version) instead of the standard sigmoid results in almost $2\times$ speedup  for 100M float64 elements on a Core2 Duo @ 3.16 GHz \cite{bastien2012theano}.

As opposed to previous work studying DNNs as composition of linear mappings in the spcial case of ReLU coupled with mean o max pooling \cite{rister2017piecewise}, we extend the results to arbitrary DNNs by allowing linear spline approximation of non spline functionals as well as general piecewise linear splines.

We thus propose  to bridge the concept of input adaptive representations with DNNs. To do so we leverage the fact shown above that any DNN can be rewritten as a linear mapping as $f_\Theta(x)=A[x]x+b[x]$, with input dependent intercept $A[x]$ and biases $b[x]$. We thus propose the following definition.
\begin{defn}
As any DNN can be rewritten $f_\Theta(x)=A[x]x+b[x]$, we denote $A[x]$ as the template of the DNN mapping, and specifically $A[x]_{c,.}$ the template associated with class $c$. By nature of the underlying LSOs, the input adaptive template matching is induced by the per region coefficients making DNNs effective hierarchical template matching algorithms.
\end{defn}

We now proceed to write the analytical mapping defined by this composition of LSOs. For clarity, we only set one MLP layer at the end of the DNNs, extensions to any number of MLP layers is straightforward by adding a simple product term over those layers . Finally, while providing formula for $3$ standard topologies, we also aim at presenting the methodology in order for one to generalize the presented results to any used topology, as only replacement of some operators will lead to any possible DNN topology.
With this layer notation we can naturally derive the formula for the output $\bz^{(L)}(x)$ of any DNN with $L$ being the number of layers in the mapping. In fact, as $S^{(\ell)}_{\theta^{(\ell)}}$ is a LSO at each layer $\ell$ we have the output expression given by
\begin{align}\label{all_eq}
    \bz^{(L)}(x)& =  W^{(L)}\left(\underbrace{\left(\prod_{\ell=L-1}^1A^{(\ell)}_{\theta^{(\ell)}} \right)x+\sum_{\ell=1}^{L-1} \left(\prod_{j=L-1}^{\ell+1}A_{\theta^{(\ell)}}^{(j)}\right)b^{(\ell)}_{\theta^{(\ell)}}}_{\text{Convolutional Layers}}\right)+b^{(L)}\\
    &= \underbrace{W^{(L)}\left(\prod_{\ell=L-1}^1A^{(\ell)}_{\theta^{(\ell)}} \right)}_{\text{Template Matching}}x+\underbrace{W^{(L)}\sum_{\ell=1}^{L-1} \left(\prod_{j=L-1}^{\ell+1}A_{\theta^{(\ell)}}^{(j)}\right)b^{(\ell)}_{\theta^{(\ell)}}+b^{(L)}}_{\text{Bias}}\\
    &=A^{(L\rightarrow 1)}[x]x+b^{(L\rightarrow 1)}[x].
\end{align}
We denoted by $A^{(L\rightarrow 1)}[x]$ and $b^{(L\rightarrow 1)}[x]$ the induced templates after unrolling over all the layers. More generally if this is done till layer $\ell$ it is denoted as $A^{(\ell\rightarrow 1)}[x]$ and $b^{(\ell\rightarrow 1)}[x]$.
Given this interpretation we now proceed to derive explicitly what are the templates and biases for some standard topologies below as well as emphasizing the methodology for one to adapt the result to specific cases.
\subsubsection{Deep Convolutional Networks}

We first study the case of standard DCNs as described in \ref{DCN}. 
A DCN is composed of $B$ blocks of $3$ layers defined as
\begin{align}
    \bz^{(\ell)}(x)=&S^{(\ell)}_{\theta^{(\ell)}}(\bz^{(\ell-1)}(x))\nonumber\\
    =&(S^{(\ell)}_\rho \circ S^{(\ell)}_\sigma \circ S^{(\ell)}_{\bC})(\bz^{(\ell-1)}(x))\nonumber\\
    =&A^{(\ell)}_\rho[\bz^{(\ell-1)}(x)]\left(A^{(\ell)}_\sigma[\bz^{(\ell)}(x)]\left(\bC^{(\ell)}\bz^{(\ell-1)}(x)+b^{(\ell)}\right)+b^{(\ell)}_\sigma[\bz^{(\ell)(x)}]\right)+b^{(\ell)}_\rho[\bz^{(\ell)}(x)]\nonumber\\
    =&\underbrace{A^{(\ell)}_\rho A^{(\ell)}_\sigma\bC^{(\ell)}}_{\text{Template Matching}}\bz^{(\ell-1)}(x)+\underbrace{A^{(\ell)}_\rho A^{(\ell)}_\sigma b^{(\ell)}+A^{(\ell)}_\rho b^{(\ell)}_\sigma +b^{(\ell)}_\rho}_{\text{Bias}}\nonumber\\
    :=&A_{\theta^{(\ell)}}^{(\ell)}\bz^{(\ell-1)}(x)+b_{\theta^{(\ell)}}^{(\ell)},
\end{align}
hence for a convolutional block, we have
\begin{align}
    A_{\theta^{(\ell)}}^{(\ell)}&=A^{(\ell)}_\rho A^{(\ell)}_\sigma\bC^{(\ell)},\\
    b^{(\ell)}_{\theta^{(\ell)}}&=A^{(\ell)}_\rho A^{(\ell)}_\sigma b^{(\ell)}+A^{(\ell)}_\rho b^{(\ell)}_\sigma +b^{(\ell)}_\rho.
\end{align}
 The topology implies the input conditioning of the spline tp depend on the previous layer output hence $A^{(\ell)}[\bz^{(\ell-1)}]:=A^{(\ell)}$.
 Using Eq. \ref{all_eq}, we can write the overall DCN mapping as
\begin{align*}
        \bz_{CNN}^{(L)}(x) =& \underbrace{W^{(L)} \prod_{\ell=L-1}^1A_{\rho}^{(\ell)}A^{(\ell)}_{\sigma} \bC^{(\ell)}}_{\text{Template Matching}}x\\
        &+\underbrace{W^{(L)}\sum_{\ell=1}^{L-1}\left(\prod_{j=L-1}^{\ell+1}A_{\rho}^{(j)}A^{(j)}_{\sigma} \bC^{(j)}\right)\left(A_{\rho}^{(\ell)}A^{(\ell)}_{\sigma} b^{(\ell)}+A^{(\ell)}_\rho b_\sigma^{(\ell)}+b_\rho^{(\ell)}\right)+b^{(L)}}_{\text{Bias}}.
\end{align*}
For cases of unbiased nonlinearities and pooling s.a. ReLU and max-pooling, this formula simplifies to
\begin{empheq}[box=\fbox]{align}
    \bz_{CNN}^{(L)}(x) =& \underbrace{W^{(L)} \prod_{\ell=L-1}^1A_{\rho}^{(\ell)}A^{(\ell)}_{\sigma} \bC^{(\ell)}}_{\text{Template Matching}}x+\underbrace{W^{(L)}\sum_{\ell=1}^{L-1}\left(\prod_{j=L-1}^{\ell+1}A_{\rho}^{(j)}A^{(j)}_{\sigma} \bC^{(j)}\right)\left(A_{\rho}^{(\ell)}A^{(\ell)}_{\sigma} b^{(\ell)}\right)+b^{(L)}}_{\text{Bias}}.
\end{empheq}
Hence the per layer templates and biased are defined as 
\begin{align}
    A^{(L\rightarrow 1)}[x]&= \prod_{k=\ell}^1A_{\rho}^{(k)}A^{(k)}_{\sigma} \bC^{(k)},\ell=1,\dots,L-1\\
    b^{(L\rightarrow 1)}[x]&=\sum_{k=1}^{\ell}\left(\prod_{j=\ell}^{k+1}A_{\rho}^{(j)}A^{(j)}_{\sigma} \bC^{(j)}\right)\left(A_{\rho}^{(k)}A^{(k)}_{\sigma} b^{(k)}\right),\ell=1,\dots,L-1&=
\end{align}
\subsubsection{Deep Residual Networks}
We now present the case of Residual Networks. A generic residual layer\cite{he2016deep} is defined as
\begin{align}
    z^{(\ell)}(x)=&S^{(\ell)}_{\theta^{(\ell)}}(\bz^{(\ell-1)}(x))\nonumber \\
    =&A_{\rho}^{(\ell)}\left(A_{\sigma,in}^{(\ell)}\left(\bC_{in}^{(\ell)}z^{(\ell-1)}(x)+b_{in}^{(\ell)}\right)+\bC^{(\ell)}_{out}\bz^{(\ell-1)}(x)+b_{out}^{(\ell)}+b_{\sigma,in}^{(\ell)}\right)+b^{(\ell)}_\rho\nonumber\\
    =&\underbrace{A_{\rho}^{(\ell)}\left(A_{\sigma,in}^{(\ell)}\bC_{in}^{(\ell)}+\bC^{(\ell)}_{out}\right)}_{\text{Template Matching}}\bz^{(\ell-1)}(x)+\underbrace{A_{\rho,in}^{(\ell)}A_{\sigma,in}^{(\ell)}b_{in}^{(\ell)}+A_{\rho,in}^{(\ell)}b_{out}^{(\ell)}+A_{\rho,in}^{(\ell)}b_{\sigma,in}^{(\ell)}+b^{(\ell)}_\rho}_{\text{Bias}}\nonumber\\
    :=&A_{\theta^{(\ell)}}^{(\ell)}\bz^{(\ell-1)}(x)+b_{\theta^{(\ell)}}^{(\ell)},
\end{align}
with the nonlinearity conditioned on $\bC_{in}^{(\ell)}z^{(\ell-1)}(x)+b_{in}^{(\ell)}$ and the pooling on $A_{\sigma,in}^{(\ell)}\left(\bC_{in}^{(\ell)}z^{(\ell-1)}(x)+b_{in}^{(\ell)}\right)+\bC^{(\ell)}_{out}\bz^{(\ell-1)}(x)+b_{out}^{(\ell)}+b_{\sigma,in}^{(\ell)}$.
Hence for a residual block, we have
\begin{align}
    A_{\theta^{(\ell)}}^{(\ell)}&=A_{\rho}^{(\ell)}A_{\sigma,in}^{(\ell)}\bC_{in}^{(\ell)}+\bC^{(\ell)}_{out},\\
    b^{(\ell)}_{\theta^{(\ell)}}&=A_{\rho}^{(\ell)}A_{\sigma,in}^{(\ell)}b_{in}^{(\ell)}+A_{\rho}^{(\ell)}b_{out}^{(\ell)}+A_{\rho}^{(\ell)}b_{\sigma,in}^{(\ell)}+b_\rho^{(\ell)}.
\end{align}
Using Eq. \ref{all_eq}, we can write the overall Resnet mapping as
\begin{align*}
\bz_{RES}^{(L)}(x) =& \underbrace{W^{(L)}\left[ \prod_{\ell=L-1}^1A_{\rho}^{(\ell)}\left(A_{\sigma,in}^{(\ell)}\bC_{in}^{(\ell)}+ \bC^{(\ell)}_{out} \right) \right]}_{\text{Template Matching}}x\\
&+\underbrace{\sum_{\ell=L-1}^1\left(\prod_{j=L-1}^{\ell+1}A_{\rho}^{(j)}( A_{\sigma,in}^{(j)}\bC_{in}^{(j)}+ \bC^{(j)}_{out})\right)\left( A_{\rho}^{(\ell)}A_{\sigma,in}^{(\ell)}b_{in}^{(\ell)}+A_{\rho}^{(\ell)}b_{out}^{(\ell)} +A_{\rho}^{(\ell)}b^{(\ell)}_{\sigma,in}+b^{(\ell)}_\rho\right)+b^{(L)}}_{\text{Bias}}.
\end{align*}
It is common in Resnet to not have a pooling operation $A_{\rho}^{(\ell)}=I,b_{\rho}^{(\ell)}=0$ but instead to apply a linear sub-sampling via the stride parameter of the convolution. Standard convolutions have a stride of $(1,1)$ corresponding to no sub-sampling. Stride of $(k,k)$ naturally correspond to $(k,k)$ linear sub-sampling. Also, if no bias nonlinearity is used, then the Resnet recursion simplifies to
\begin{equation}
    \boxed{\bz_{RES}^{(L)}(x) = \underbrace{W^{(L)}\left[ \prod_{\ell=L-1}^1\left(A_{\sigma,in}^{(\ell)}\bC_{in}^{(\ell)}+ \bC^{(\ell)}_{out} \right) \right]}_{\text{Template Matching}}x+\underbrace{\sum_{\ell=L-1}^1\left(\prod_{i=L-1}^{\ell+1}( A_{\sigma,in}^{(\ell)}\bC_{in}^{(\ell)}+ \bC^{(\ell)}_{out})\right)\left( A_{\sigma,in}^{(\ell)}b_{in}^{(\ell)}+b_{out}^{(\ell)} \right)+b^{(L)}}_{\text{Bias}}}.
\end{equation}
Interestingly one can rewrite the Resnet formulation in the case $\bC_{out}^{(\ell)}=I$ as 
\begin{align*}
    \bz_{RES}^{(L)}(x) =& x +\sum_{\ell=1}^{L-1}\left( \prod_{k=\ell}^1 A_{\sigma,in}^{(k)}W_{in}^{(k)} \right) x+\sum_{\ell=L-1}^1\left(\prod_{i=L-1}^{\ell+1}( \bC^{(i)}_{out}+A_{\sigma,in}^{(i)}W_{in}^{(i)} )\right)\left( b_{out}^{(\ell)}+A_{\sigma,in}^{(\ell)}b_{in}^{(\ell)} \right)\\
    =&\sum_{\ell=L-1}^1\left(\prod_{i=L-1}^{\ell+1}( \bC^{(i)}_{out}+A_{\sigma,in}^{(i)}W_{in}^{(i)} )\right)\left( b_{out}^{(\ell)}+A_{\sigma,in}^{(\ell)}b_{in}^{(\ell)} \right)\\
    &
    \left.\begin{array}{l}
    +x\\
    +A_{\sigma,in}^{(1)}W_{in}^{(1)}x\\
    +A_{\sigma,in}^{(2)}W_{in}^{(2)}A_{\sigma,in}^{(1)}W_{in}^{(1)}x\\
    +A_{\sigma,in}^{(3)}W_{in}^{(3)}A_{\sigma,in}^{(2)}W_{in}^{(2)}A_{\sigma,in}^{(1)}W_{in}^{(1)}x\\
    \vdots\\
    +\left(\prod_{k=L-1}^1 A_{\sigma,in}^{(k)}W_{in}^{(k)} \right) x
    \end{array}\right\}\text{Ensemble of Models}
\end{align*}
In fact, is has been shown in \cite{veit2016residual} that deep residual networks behave like ensemble of relatively shallow models.

\subsubsection{Deep Recurrent Networks}

Similarly, we can derive the one step of a standard fully recurrent neural network \cite{graves2013generating} as
\begin{align*}
    \bz_{RNN}^{(1,t)} &=A^{(1,t)}_{\sigma}[W^{(1)}_{in}x^t+W^{(1)}_{rec}\bz^{(1,t-1)}_{RNN}+b^{(1)}](W^{(1)}_{in}x^t+W^{(1)}_{rec}\bz^{(1,t-1)}_{RNN}+b^{(1)})+b_{\sigma}^{(1,t)},\\
    \bz_{RNN}^{(\ell,t)} &=A^{(\ell,t)}_{\sigma}[W^{(\ell)}_{in}x^t+W^{(\ell)}_{rec}\bz^{(\ell,t-1)}_{RNN}+W^{(\ell)}_{up}\bz^{(\ell-1,t)}_{RNN}+b^{(\ell)}](W^{(\ell)}_{in}x^t+W^{(\ell)}_{rec}\bz^{(\ell,t-1)}_{RNN}\\
    &+W^{(\ell)}_{up}\bz^{(\ell-1,t)}_{RNN}+b^{(\ell)})+b_{\sigma}^{(\ell,t)},\text{ $\ell>1$}.
\end{align*}
By the double recursion of the formula (in time and in depth) we first proceed by writing the time unrolled RNN mapping as
\begin{align*}
    \bz^{(1,T)}_{RNN}(x) &= \sum_{t=T}^1\Big(\prod_{k=T}^{t+1}A^{(1,k)}_{\sigma}W^{(1)}_{rec}\Big)A^{(1,t)}_{\sigma}(W^{(1)}_{in}x^t+ b_{\sigma}^{(1,t)}+A^{(1,t)}_{\sigma} b^{(1)}) \\
    &= \sum_{t=T}^1\Big(\prod_{k=T}^{t+1}A^{(1,k)}_{\sigma}W^{(1)}_{rec}\Big)A^{(1,t)}_{\sigma}W^{(1)}_{in}x_t+\sum_{t=T}^1\Big( \prod_{k=T}^{t+1}A^{(1,k)}_{\sigma} W_{rec}^{(1)} \Big)\Big[ b_{\sigma}^{(1,t)}+A^{(1,t)}_{\sigma} b^{(1)} \Big]\\
    \bz^{(\ell,T)}_{RNN}(x) 
    &=  \sum_{t=T}^1\Big(\prod_{k=T}^{t+1}A^{(\ell,k)}_{\sigma}W^{(\ell)}_{rec}\Big)A^{(\ell,t)}_{\sigma}W^{(\ell)}_{in}x^t\nonumber\\
    &+\sum_{t=T}^1\Big( \prod_{k=T}^{t+1}A^{(\ell,k)}_{\sigma} W_{rec}^{(\ell)} \Big)\Big[ b_{\sigma}^{(\ell,t)}+A^{(\ell,t)}_{\sigma} b^{(\ell)}+ A^{(\ell,t)}_{\sigma}W^{(\ell)}_{up}\bz^{(\ell-1,t)}_{RNN}(x)\Big],\text{ $\ell >1$.}
\end{align*}
The presented formula unrolled in time are still recursive in depth. While the exact unrolled version would be cumbersome for any layer $\ell$ we propose a simple way to find the analytical formula based on the possible paths an input can take till the final time representation of layer $\ell$. To do so, one can look in Fig. \ref{fig:blah1},\ref{fig:blah2}.
\begin{figure}
\begin{minipage}[b]{0.45\linewidth}
  \centering
  \includegraphics[width=3.in]{./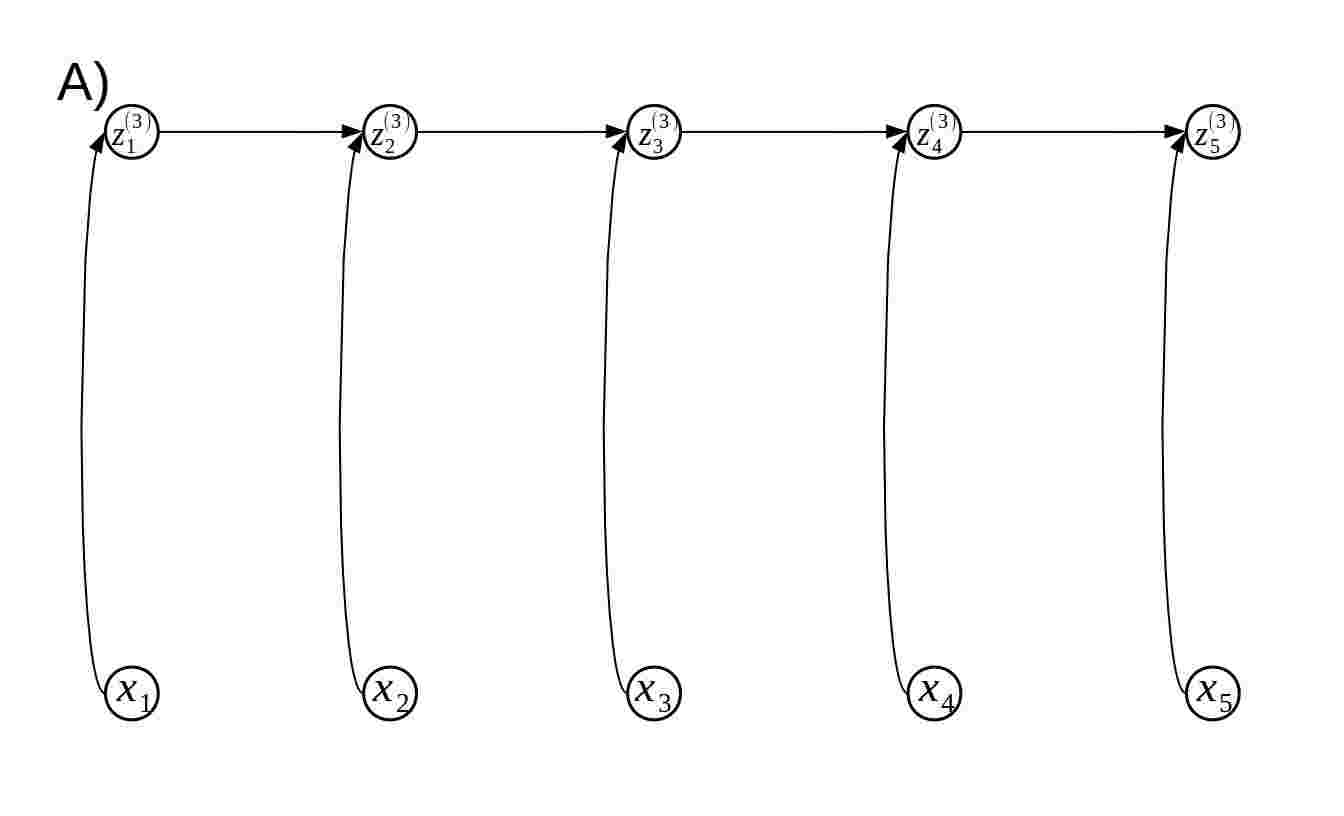}
  \caption{At any given layer, there always exists a direct input to representation path and recursion.}
  \label{fig:blah1}
\end{minipage}
\hfill
\begin{minipage}[b]{0.45\linewidth}
  \centering
  \includegraphics[width=3.in]{./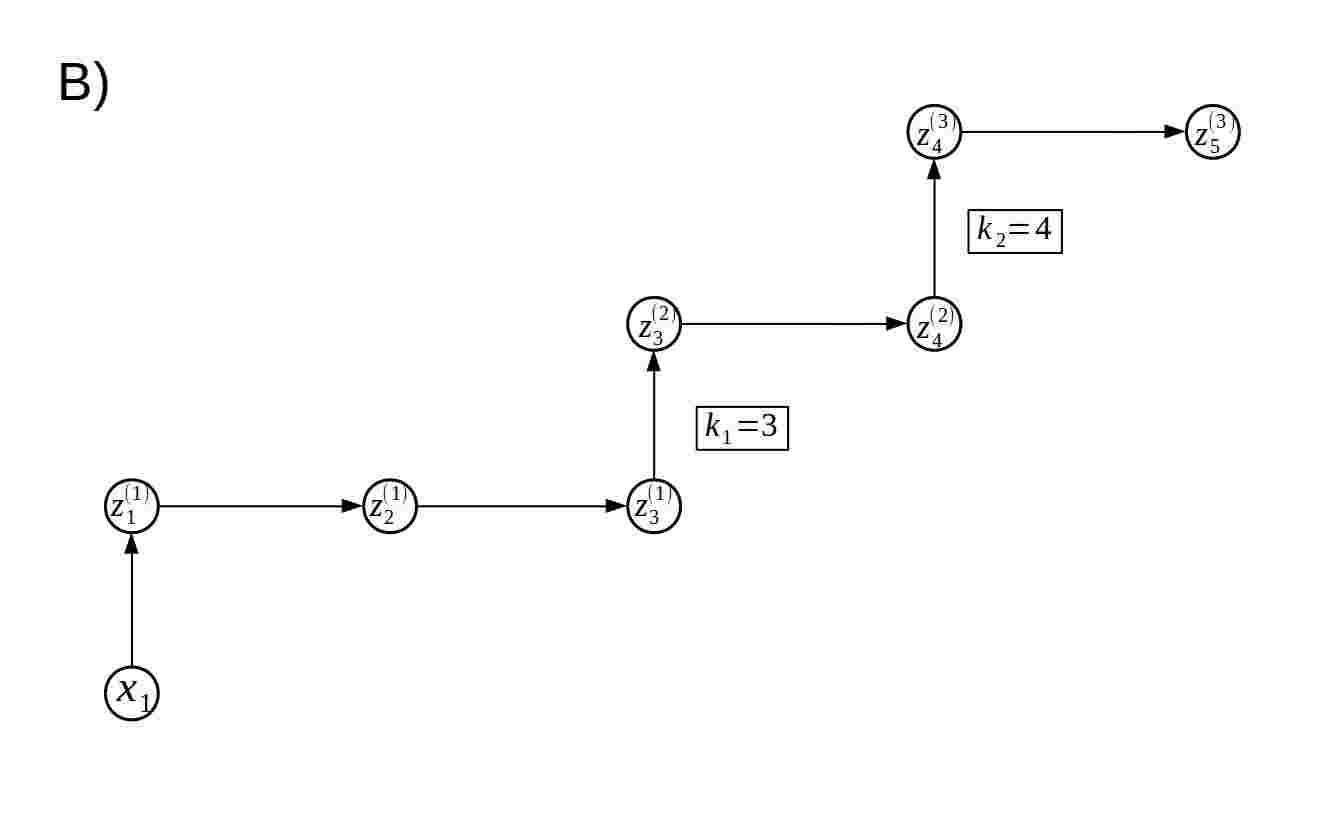}
  \caption{In addition, all the possible input path going through the hidden layers. Combinatorial number of paths yet fully determined by the succession of forward in time or upward in layer successions.}
  \label{fig:blah2}
\end{minipage}
\end{figure}
Hence we can thus decompose all those paths by blocks of forward interleave with upward paths.
With this, we can see that the possible paths are all the path from the input to the final nodes, they can not got back in time nor down in layers. Hence they are all the possible combinations for forward in time or upward in depth. We can thus find the exact output formula below for RNN as
\begin{empheq}[box=\fbox]{align}
    \bz^{(2,T)}_{RNN}(x) 
    =&  \sum_{t=1}^{T}\sum_{k_1=t}^{T}\Big(\prod_{q=T}^{k_1+1}A^{(2,q)}_{\sigma}W^{(2)}_{rec}\Big)A^{(2,k_1)}_{\sigma}W^{(2)}_{up}\Big( \prod_{q=t+1}^{k_1-1}A^{(1,q)}_{\sigma} W_{rec}^{(1)} \Big)A^{(1,t)}_{\sigma}W^{(1)}_{in}x^t\\
    \bz^{(3,T)}_{RNN}(x) 
    =&  \sum_{t=1}^{T}\sum_{k_1=t}^{T}\sum_{k_2\geq k_1}^{T}\Big(\prod_{q=T}^{k_2+1}A^{(3,q)}_{\sigma}W^{(3)}_{rec}\Big)A^{(3,k_2)}_{\sigma}W^{(3)}_{up}\Big(\prod_{q=k_1}^{k_2-1}A^{(2,q)}_{\sigma}W^{(2)}_{rec}\Big)A^{(2,k_1)}_{\sigma}W^{(2)}_{up}\nonumber\\
    &\Big( \prod_{q=t+1}^{k_1-1}A^{(1,q)}_{\sigma} W_{rec}^{(1)} \Big)A^{(1,t)}_{\sigma}W^{(1)}_{in}x^t \\
    \bz^{(4,T)}_{RNN}(x) 
    =&  \sum_{t=1}^{T}\sum_{k_1=t}^{T}\sum_{k_2\geq k_1}^{T}\sum_{k_3\geq k_2}^{T}\Big(\prod_{q=T}^{k_3+1}A^{(4,q)}_{\sigma}W^{(4)}_{rec}\Big)A^{(4,k_3)}_{\sigma}W^{(4)}_{up}\Big(\prod_{q=T}^{k_2+1}A^{(3,q)}_{\sigma}W^{(3)}_{rec}\Big)A^{(3,k_2)}_{\sigma}W^{(3)}_{up}\nonumber \\
    &\Big(\prod_{q=k_1}^{k_2-1}A^{(2,q)}_{\sigma}W^{(2)}_{rec}\Big)A^{(2,k_1)}_{\sigma}W^{(2)}_{up}\Big( \prod_{q=t+1}^{k_1-1}A^{(1,q)}_{\sigma} W_{rec}^{(1)} \Big)A^{(1,t)}_{\sigma}W^{(1)}_{in}x^t\\
    \dots&\nonumber
\end{empheq}

\subsection{Template Matching with DNN: How and Why}\label{sub:ada}
We have seen in the last section the template matching formulation of DNNs via LSOs simply as being the slope of the linear transform. By definition of template matching, there exists an internal ''matching'' procedure performed by the DNN. We propose to study this inference problem in this section in standard DNN and why can we label DNNs as template matching machines. As we will see, a greedy, per layer, maximization problem is governing the spline selection and thus template inference. We then study the impact of choosing different LSOs, such as ReLU or absolute value and their impact in the inference problem each one performs.
Deriving such results will allow two main applications. 
Firstly, with the convexity property, one can derive arbitrary splines with regions that can be implicitly changed and learned ''online'', as the selection will become intrinsically partition agnostic, known as adaptive partitioning \cite{hannah2013multivariate}. Secondly, the inference problem will be of great interest when dealing with deep neural networks analysis in further sections.
We first briefly describe some theoretical results to link inference-LSOs-template matching.

\subsubsection{Template Matching in the Context of Splines}
We study in this context under what condition spline functions can be considered to perform template matching. Let first define what do we refer to as template inference.
\begin{defn}
For a spline functional (univariate;multivariate;SO), given a partition of the input space denoted by $\Omega=\{\omega_1,\dots,\omega_R\}$ and local mappings $\phi_1,\dots,\phi_R$, the inference problem refers to, given an input $x$, finding the region in which it belongs:
\begin{center}
    Given $x$: Find $\omega \in \Omega$ s.t. $x \in \omega$.
\end{center}
This region is then used to perform the actual mapping via $\phi[x]$.
\end{defn}
As we now describe, this problem can represent very interesting behaviors linked with template selection in some cases, especially when the functional is convex.
We study in this section the convexity criteria for spline operators and the associated spline inference problem.
Note that we focus now on linear functionals, provided results can easily be extended.
\begin{theorem}\label{theorem1}
Given a linear multivariate spline $s[\textbf{a},\textbf{b},\Omega]$ we have 
\begin{align}
    s[\textbf{a},\textbf{b},\Omega](x)=&a[x]^Tx+b[x]\\
    =&\max_{r=1,\dots,R}a_r^Tx+b_r,\forall x,
\end{align}
if and only if $s[\textbf{a},\textbf{b},\Omega]$ is a convex function\cite{hannah2013multivariate}.
\end{theorem}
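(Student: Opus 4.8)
The statement is an equivalence between the max-affine identity $s[\textbf{a},\textbf{b},\Omega](x)=\max_{r=1,\dots,R}\big(a_r^Tx+b_r\big)$ holding for all $x$ and convexity of $s$, and the two implications are of very unequal difficulty. The ``max-affine $\Rightarrow$ convex'' direction is immediate and I would dispatch it in one line: such an $s$ is a pointwise maximum of the finitely many affine maps $x\mapsto a_r^Tx+b_r$, and a pointwise supremum of affine functions is convex.

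The real content is the converse, so most of the work goes there. Assume $s$ is convex; the goal is to recover the max-affine form with exactly the slopes and offsets $(a_r,b_r)$ that define the spline. First I would recall two standard facts about a convex function that is finite on all of $\mathbb{R}^D$: it is continuous, and wherever it is differentiable its graph lies above the tangent hyperplane, i.e.\ $s(x)\ge s(x_0)+\langle\nabla s(x_0),\,x-x_0\rangle$ for every $x$. The plan is then to fix a region $\omega_r$ with nonempty interior, pick $x_0\in\mathrm{int}(\omega_r)$, and observe that $s(y)=a_r^Ty+b_r$ for all $y$ in a small ball about $x_0$; hence $s$ is differentiable at $x_0$ with $\nabla s(x_0)=a_r$ and $s(x_0)=a_r^Tx_0+b_r$. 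Substituting into the first-order inequality gives $s(x)\ge a_r^Tx+b_r$ for every $x\in\mathbb{R}^D$, so each affine piece globally minorizes $s$ and therefore $s(x)\ge\max_r\big(a_r^Tx+b_r\big)$. For the reverse inequality, given $x$ I pick $r$ with $x\in\omega_r$, so $s(x)=a_r^Tx+b_r\le\max_{r'}\big(a_{r'}^Tx+b_{r'}\big)$; combining the two inequalities yields the identity.

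The delicate point --- and essentially the only obstacle --- is the treatment of partition cells with empty interior, the lower-dimensional ``seams'' where the gradient argument is not available. I would dispose of these either by the standing convention, customary for piecewise-linear spline partitions, that every $\omega_r$ is full-dimensional (degenerate cells being merged into their neighbours), or by a continuity argument: the minorization established above for the full-dimensional pieces already forces $s$ to agree with their maximum on a dense open subset of $\mathbb{R}^D$, and continuity of the convex function $s$ then propagates this to all of $\mathbb{R}^D$, after which any affine piece carried by a lower-dimensional cell must coincide there with that maximum. For the precise statement and hypotheses I would point to \cite{hannah2013multivariate}.
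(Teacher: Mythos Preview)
The paper does not supply its own proof of this theorem; it states the result with a citation to \cite{hannah2013multivariate} and an illustrative figure, then moves on. Your proposal therefore provides a genuine proof where the paper offers none, and the argument is correct and standard: the max-of-affines $\Rightarrow$ convex direction is immediate, and for the converse you correctly invoke the first-order convexity inequality at an interior point of each full-dimensional cell to obtain the global minorization $s(x)\ge a_r^Tx+b_r$, then combine with the trivial upper bound coming from the spline's own definition on the cell containing $x$.

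One small remark on your treatment of degenerate cells. Your first option---the convention that every $\omega_r$ is full-dimensional---is the clean route and is in fact needed for the theorem as literally stated. Your continuity alternative establishes that $s$ equals the maximum over the \emph{full-dimensional} pieces on all of $\mathbb{R}^D$, but it does not by itself prevent a lower-dimensional piece from overshooting elsewhere: an affine function constrained only on a set of measure zero can take arbitrary values off that set, so the identity $s(x)=\max_{r}(a_r^Tx+b_r)$ over \emph{all} $r$ can genuinely fail without a full-dimensionality hypothesis. This is a looseness in the theorem's phrasing rather than a flaw in your reasoning, and you rightly flag the issue and defer to \cite{hannah2013multivariate} for the precise standing assumptions.
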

This theorem states that given a convex spline function, finding the region to which an input $x$ belongs to is equivalent to finding the region in which the mapping leads to the highest output. 
We provide an illustrative example in Fig. \ref{figexp}.
This result provides ways to create adaptive partitioning convex splines simply by learning the collection of hyperplanes with the mappings defined as the maximum of the hyperplane projections.
\begin{figure}
    \centering
    \includegraphics[width=5in]{./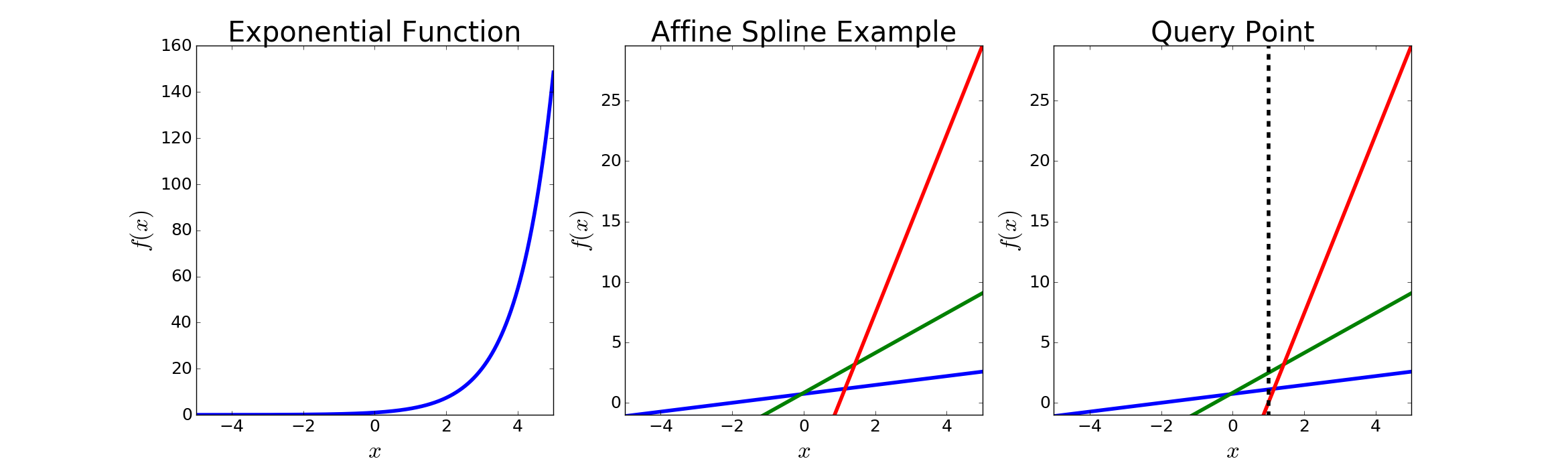}
    \caption{Illustrative examples of the theorem where one can see that the sub-region/sub-function associated with the query point is the one returning the maximum value among all the possible sub-functions.}
    \label{figexp}
\end{figure}
We can now extend the result to LSOs made of a collection of $K$ multivariate splines.
\begin{theorem}\label{theorem2}
Given an LSO defined $S\left[\left(s[\textbf{a}_{k},\textbf{b}_{k},\Omega_k]\right)_{k=1}^K\right]$, with all internal linear multivariate splines $s[\textbf{a}_{k},\textbf{b}_{k},\Omega_k]$ being convex, we have
\begin{align}
S[\textbf{A},\textbf{b},\Omega^S](x)=\Phi^*(x),
\end{align}
with $\phi^{S*}=\argmax_{\phi^S \in \Phi^S} \langle \phi^S(x),1\rangle$, with $\Phi^S=\{\phi^S_r,r=1,\dots,R\}$ and with $\phi^S_r(x)=A_rx+b_r$.
\end{theorem}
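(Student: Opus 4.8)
\medskip

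The plan is to collapse the joint region selection over $\Omega^S$ into $K$ independent per-spline selections and then apply Theorem \ref{theorem1} coordinate by coordinate. First I would unfold the objective: since $\phi^S(x)\in\mathbb{R}^K$ has components $\phi^S_k(x)$, the linear form is $\langle\phi^S(x),1\rangle=\sum_{k=1}^K\phi^S_k(x)$, and as $\phi^S$ ranges over $\Phi^S$ the tuple of components $(\phi^S_1,\dots,\phi^S_K)$ ranges exactly over the tuples $(\phi_{1,q_1},\dots,\phi_{K,q_K})$ with $\bigcap_{k}\omega_{k,q_k}\neq\emptyset$; this is precisely the construction of $\Omega^S$ in Eq.~\ref{omegaS} together with the ``$\exists!\,q_k$'' property recorded just after it. Hence
\[
  \max_{\phi^S\in\Phi^S}\langle\phi^S(x),1\rangle
  \;=\;\max_{(q_1,\dots,q_K)\,:\,\bigcap_k\omega_{k,q_k}\neq\emptyset}\;\sum_{k=1}^K\bigl(a_{k,q_k}^Tx+b_{k,q_k}\bigr).
\]

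Next I would prove the decoupling identity $\max_{(q_1,\dots,q_K)}\sum_k(a_{k,q_k}^Tx+b_{k,q_k})=\sum_k\max_{q_k}(a_{k,q_k}^Tx+b_{k,q_k})$. The inequality ``$\le$'' holds because the constrained maximum is bounded by the unconstrained one, which for a separable objective is the sum of the per-slot maxima. For ``$\ge$'' I exhibit a feasible tuple attaining the right-hand side: for each $k$ let $q_k$ be the unique index with $x\in\omega_{k,q_k}$ (it exists and is unique because $\Omega_k$ partitions $\mathbb{R}^D$); then $x\in\bigcap_k\omega_{k,q_k}$, so the tuple is feasible, and by Theorem \ref{theorem1} applied to the convex spline $s[\textbf{a}_k,\textbf{b}_k,\Omega_k]$ we get $a_{k,q_k}^Tx+b_{k,q_k}=s[\textbf{a}_k,\textbf{b}_k,\Omega_k](x)=\max_{r}(a_{k,r}^Tx+b_{k,r})$. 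Thus this tuple hits the per-coordinate maximum in every slot, which gives ``$\ge$''.

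Then I would identify the maximizer with the locally active region. The tuple $(q_1,\dots,q_K)$ built above is, by definition, the one indexing $\phi^S[x]$: the SO local map on the region $\omega^S_r\ni x$ is characterized by $\omega^S_r\subset\omega_{k,q_k}$ for every $k$, and $x\in\omega^S_r$ forces exactly these $q_k$. Therefore $\phi^{S*}(x)=\phi^S[x](x)$ up to ties. To dispose of ties, note that any other maximizing tuple $(q_1',\dots,q_K')$ must also attain the per-coordinate maximum in each slot, whence $a_{k,q_k'}^Tx+b_{k,q_k'}=a_{k,q_k}^Tx+b_{k,q_k}$ for all $k$; so every argmax evaluates at $x$ to the same vector $\phi^S[x](x)$. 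Combining the three steps, $\phi^{S*}(x)=\phi^S[x](x)=S[\textbf{A},\textbf{b},\Omega^S](x)$, which is the assertion.

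The main obstacle is bookkeeping rather than conceptual: carefully justifying that a choice of $\phi^S\in\Phi^S$ is equivalent to $K$ independent choices of component maps subject only to the nonemptiness constraint, and that this constraint is never active at the optimum because the region genuinely containing $x$ is always feasible --- this is exactly what Eq.~\ref{omegaS} and the uniqueness of $q_k$ provide. A secondary point worth stating cleanly is well-definedness of the right-hand side: the argmax over $\Phi^S$ may fail to be a singleton on the boundaries between regions, but by the per-coordinate optimality established in the second step all maximizers agree at $x$, so $\phi^{S*}(x)$ is unambiguous.
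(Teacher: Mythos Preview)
Your proof is correct and follows essentially the same route as the paper: decouple the joint maximization $\max_{\phi^S\in\Phi^S}\langle\phi^S(x),1\rangle$ into $K$ independent per-coordinate maximizations and then invoke Theorem~\ref{theorem1} on each convex component spline. If anything, you are more careful than the paper, which silently replaces the search over $\Phi^S$ by the unconstrained product $\Phi_1\times\cdots\times\Phi_K$; your feasibility argument (the tuple of regions actually containing $x$ has nonempty intersection and, by Theorem~\ref{theorem1}, already attains every coordinatewise maximum) and your handling of ties make this step explicit.
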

\begin{proof}
\begin{align}
    \phi^{S*}=\argmax_{\phi^S \in \Phi^S} \langle \phi^S(x),1 \rangle =&\argmax_{[\phi_1,\dots,\phi_K]\in \Phi_1\times \dots\times \Phi_K} \langle \Phi(x),1 \rangle\nonumber\\
    =&\left[
    \begin{matrix}
    \argmax_{\phi_1 \in \Phi_1}  \sum_{k=1}^K\phi_k(x)\\
    \vdots \\
    \argmax_{\phi_K\in \Phi_K}  \sum_{k=1}^K\phi_k(x)\\
    \end{matrix}
    \right]\nonumber\\
    =&\left[
    \begin{matrix}
    \argmax_{\phi_1\in \Phi_1}  \phi_1(x)\\
    \vdots \\
    \argmax_{\phi_K\in \Phi_K}  \phi_K(x)\\
    \end{matrix}
    \right]\nonumber\\
    =&\left[
    \begin{matrix}
    \phi_1[x]\\
    \vdots \\
    \phi_K[x]\\
    \end{matrix}
    \right],\;\; s_k \text{ convex } \forall k\nonumber\\
    =&\Phi[x]
\end{align}
\end{proof}
This last theorem leverages the independence between the multivariate splines making up the spline operator. As a result, the per multivariate spline region selection solved via the $\max$ operator in case the are convex can be done for all multivariate spline simultaneously via the $\langle.,1 \rangle $ operator, leading to the sum of the output dimensions.

\subsubsection{DNNs Are Composition of Adaptive Partitioning Splines}
As demonstrated in the last section, convex splines defined through a max over hyperplanes projections is defined as adaptive partitioning as changing the hyperplanes induces changes in the input space partitioning. Hence for regression problems for example, optimal partitions can be found in this manner simply by tweaking the hyperplanes parameters \cite{hannah2013multivariate,magnani2009convex} and has been shown to be very performant in the context of Nonlinear Least Square Regression. In fact, hyperplanes combination to solve function approximation problems go back to \cite{breiman1993hinging} reinforcing the fact that we can now see current state-of-the-art (sota) DNN as efficient composition of such approaches. We now describe this last statement in details.
Current sota DNNs leverage the Relu or LReLU nonlinearity, both convex, as well as max and/or mean-pooling, being also convex mappings. Based on the results drawn from the last section we can thus see that all the succession of linear layers such as FC-layer of convolutional layer followed by nonlinearities and possibly sub-sampling correspond to adaptive partitioning multivariate spline function.
In fact, one has in those cases

\begin{tabular}{|p{0.92\linewidth}|}\hline 
\begin{theorem}
DNNs with convex activation functions s.a. Relu or LReLU, and/or convex sub-sampling s.a. mean or max pooling applied on linear layers are composition of partition adaptive splines\cite{hannah2013multivariate,magnani2009convex},
\begin{align}
S_{\sigma}(Wx+b)=A_{\sigma,r^*}x+b_{\sigma,r^*},\;\text{ with } r^*=\argmax_{r=1,\dots,R^S_\sigma} \langle A_r(Wx+b)+b_r,1\rangle.
\end{align}
\end{theorem}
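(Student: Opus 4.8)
The plan is to deduce the statement directly from Theorem~\ref{theorem2} after checking that both ingredients of $S_\sigma(Wx+b)$ are linear spline operators of the kind that theorem handles. First I would note, using the previous subsection, that the linear layer $z\mapsto Wz+b$ is the LSO $S_W=S_W[\{W\},\{b\},\{\mathbb{R}^{D^{(\ell-1)}}\}]$ with a single region, so $S_\sigma(Wx+b)=(S_\sigma\circ S_W)(x)$ and precomposition by $S_W$ is merely an affine substitution of variables. Next I would verify the convexity hypothesis of Theorem~\ref{theorem2} for the second factor: for an elementwise nonlinearity the $d$-th coordinate of $S_\sigma$ is $z\mapsto\sigma(z_d)$, and $\sigma_{relu}(u)=\max(u,0)$, $\sigma_{lrelu}(u)=\max(u,\eta u)$ with $0<\eta\le 1$, $\sigma_{abs}(u)=\max(u,-u)$ are each a pointwise maximum of two affine functions of $z$, hence convex; for pooling the $d$-th coordinate is $z\mapsto\max_{i\in R_d}z_i$ (max pooling), again a pointwise maximum of affine functions and thus convex, or $z\mapsto\frac{1}{\mathrm{Card}(R_d)}\sum_{i\in R_d}z_i$ (mean pooling), which is affine and a fortiori convex. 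So each of the multivariate splines comprising $S_\sigma$ (resp.\ $S_\rho$) is convex, which is exactly what Theorem~\ref{theorem2} requires.

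Then I would apply Theorem~\ref{theorem2} to $S_\sigma[\textbf{A}_\sigma,\textbf{b}_\sigma,\Omega^S_\sigma]$ evaluated at the point $y:=Wx+b$. It gives $S_\sigma(y)=\phi^{S*}(y)$ with $\phi^{S*}=\argmax_{\phi^S\in\Phi^S}\langle\phi^S(y),1\rangle$; writing $\phi^S_r(z)=A_rz+b_r$, this means the active piece has index $r^*=\argmax_{r=1,\dots,R^S_\sigma}\langle A_r y+b_r,1\rangle=\argmax_r\langle A_r(Wx+b)+b_r,1\rangle$, which is precisely the selection rule in the statement. On that piece, $S_\sigma(y)=A_{r^*}y+b_{r^*}=A_{r^*}(Wx+b)+b_{r^*}=(A_{r^*}W)x+(A_{r^*}b+b_{r^*})$, so setting $A_{\sigma,r^*}:=A_{r^*}W$ and $b_{\sigma,r^*}:=A_{r^*}b+b_{r^*}$ yields $S_\sigma(Wx+b)=A_{\sigma,r^*}x+b_{\sigma,r^*}$. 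To identify the composite as a genuine partition-adaptive spline, I would observe that the partition of $\mathbb{R}^{D^{(\ell-1)}}$ governing it is $\{\,\{x:Wx+b\in\omega\}:\omega\in\Omega^S_\sigma\,\}\setminus\{\emptyset\}$, the preimage of a polyhedral partition under an affine map, whose cell boundaries move when $W$, $b$, or the hyperplanes defining the $A_r,b_r$ are perturbed; hence it is a multivariate linear spline of the adaptive-partitioning type of \cite{hannah2013multivariate,magnani2009convex}.

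The only place requiring genuine care is this convexity check of the elementary blocks, together with the remark that convexity and the LSO structure are stable under precomposition with an affine map (a convex function composed with an affine map is convex); everything else is the bookkeeping already carried out in the proof of Theorem~\ref{theorem2}. I would also flag two small subtleties. Leaky ReLU is convex only for slope parameter $\eta\le 1$ (for $\eta>1$ it is concave), so "convex activation" must be read with that restriction. And the argmax need not be unique on the lower-dimensional set where two pieces tie; this is harmless and handled exactly as in Theorem~\ref{theorem2}, since all maximizing pieces return the same value at such a point. Finally, the same argument extends verbatim to a full block $S_\rho\circ S_\sigma\circ S_W$: since max- and mean-pooling are convex and coordinatewise nondecreasing and $\sigma$ is coordinatewise convex, the composition $S_\rho\circ S_\sigma$ is again a convex spline operator, so Theorem~\ref{theorem2} applies to it as well.
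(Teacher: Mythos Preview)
Your proposal is correct and follows essentially the same approach as the paper: verify that ReLU, LReLU, absolute value, max-pooling and mean-pooling are coordinatewise convex linear splines, then invoke Theorem~\ref{theorem2} at the affinely shifted point $y=Wx+b$. The paper does not give a separate proof for this theorem; it simply states that these activations and poolings are convex and that the result therefore follows from Theorem~\ref{theorem2}, so your write-up is in fact more detailed than the paper's own justification (in particular your remarks on the $\eta\le 1$ restriction for leaky ReLU and on ties in the argmax are additional care not spelled out in the source).
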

\\ \hline
\end{tabular}\\ \\
Given the last theorem, one might wonder if the local per layer partition optimization can be extended to a global adaptive partitioning. This question is answered in Appendix \ref{global} where we provide sufficient condition to obtain a globally convex DNN, hence making the last theorem not only applicable on a per layer basis but overall the mapping. 
In fact, composition of such layers are in general not globally convex with unconstrained weights. 
We now have linked DNN to known powerful frameworks for function approximation and can now provide ways to visualize the final inferred templates with standard DNNs.
We propose to do so in the net section in order to highlight the extrem adaptivity DNNs have with this regard.

\subsubsection{Input Encoding and Template Visualization}
In this section we present experiments on MNIST and CIFAR10 to provide visualization of the adapted templates given few samples. We also provide a simple methodology to compute the templates.
Since the final DNN can be expressed as $f_\Theta(x)=A[x]x+b[x]$, it is clear that we can obtain the adapted template for class $c$ as $a[x]_{c,.}=\frac{d f_\Theta(x)_c}{dx}$. 
We present below computed templates for one model, the LargeCNNmean. We remind that specific model descriptions are provided in Appendix \ref{description}. All the other templates related to other topologies are provided in Appendix.
\begin{figure}[!htb]
    \centering
    \begin{minipage}{.5\textwidth}
        \centering
        \includegraphics[width=3in]{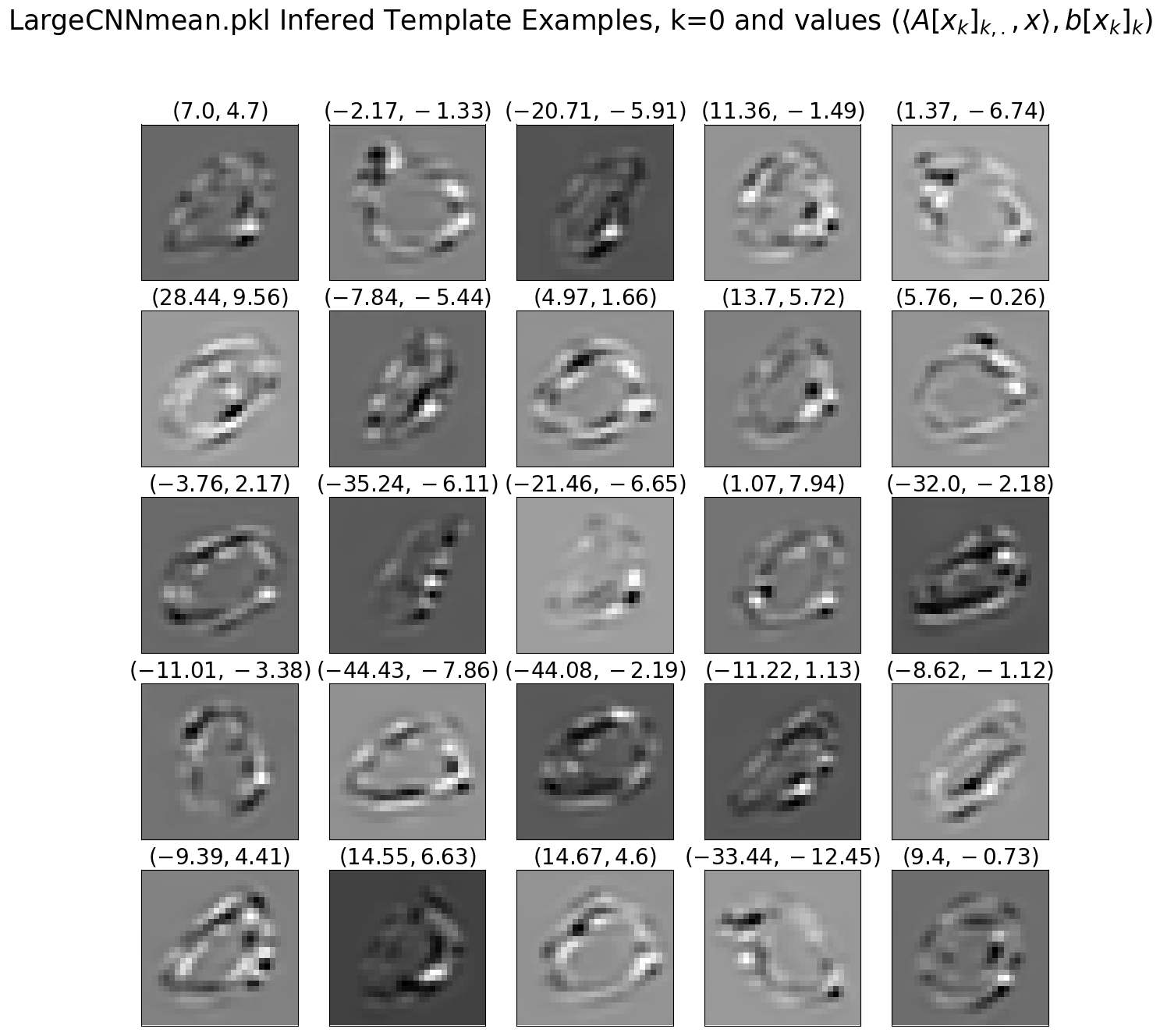}
    \end{minipage}%
    \begin{minipage}{0.5\textwidth}
        \centering
        \includegraphics[width=3in]{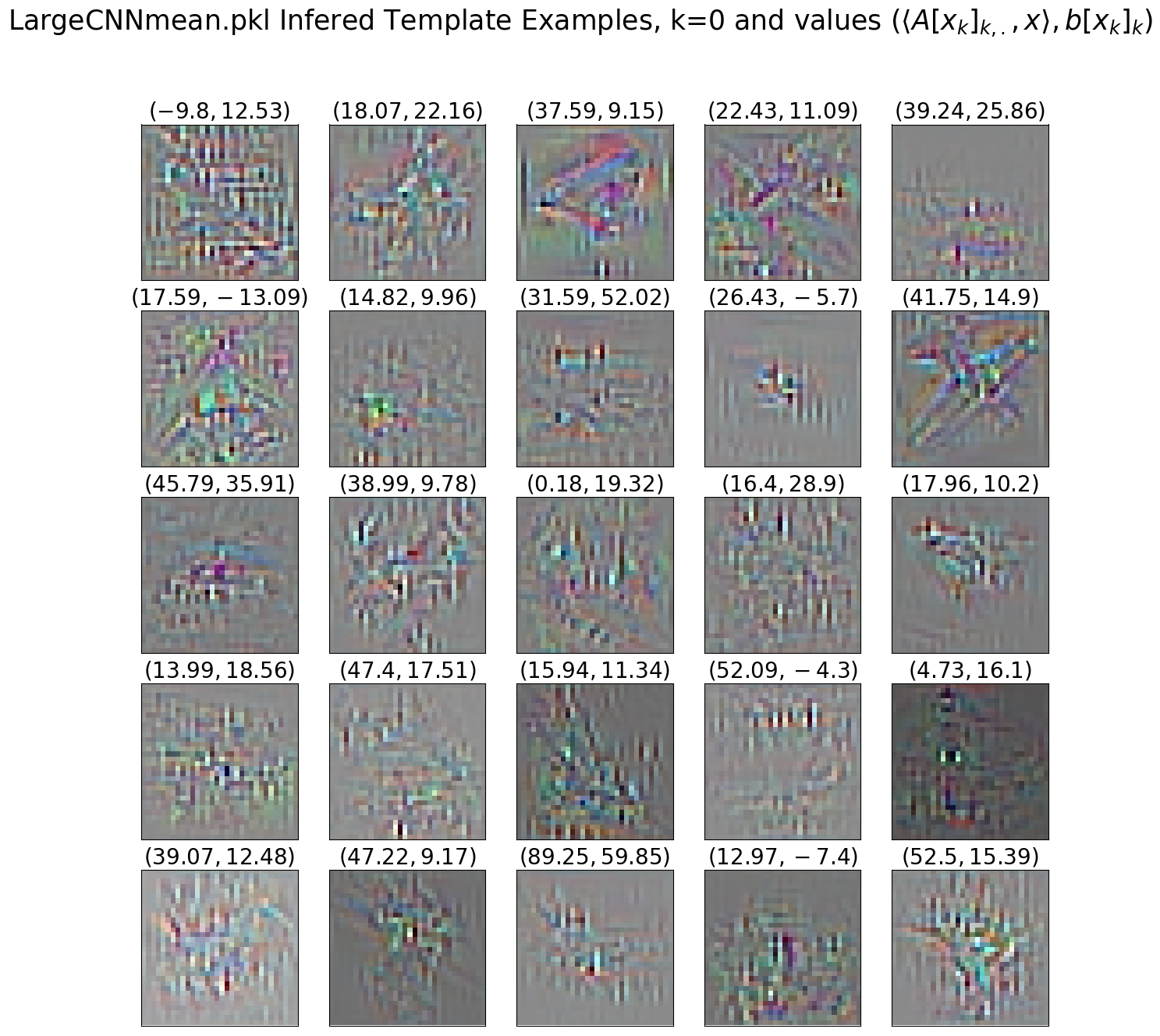}
        \label{fig:prob1_6_1}
    \end{minipage}
        \caption{Left: Examples of Templates for class $0$ given inputs $X_n$ belonging to class $0$ ($Y_n=0$). As one can see, the class $0$ templates fully adapt to their input for LargeCNNmean. Right: Templates on CIFAR10 associated to class plane. Interestingly the background is not captured and only the class shape is present.}\label{fig:oneclass_templates}
\end{figure}
We present in Fig. \ref{fig:oneclass_templates} different templates induced for one class with the input belonging to this same class, hence this template matching matches the class of interest. In the title of each subplot is provided the template matching output $\langle A[X_n]_{Y_n},X_n\rangle$ as well as the bias $b[X_n]_{Y_n}$. One can see the adaptivity of the templates but also for the CIFAR10 case (on the right) the ''denoising'' of the background. Only the object of interest remains.
\begin{figure}
    \centering
    \includegraphics[width=6in]{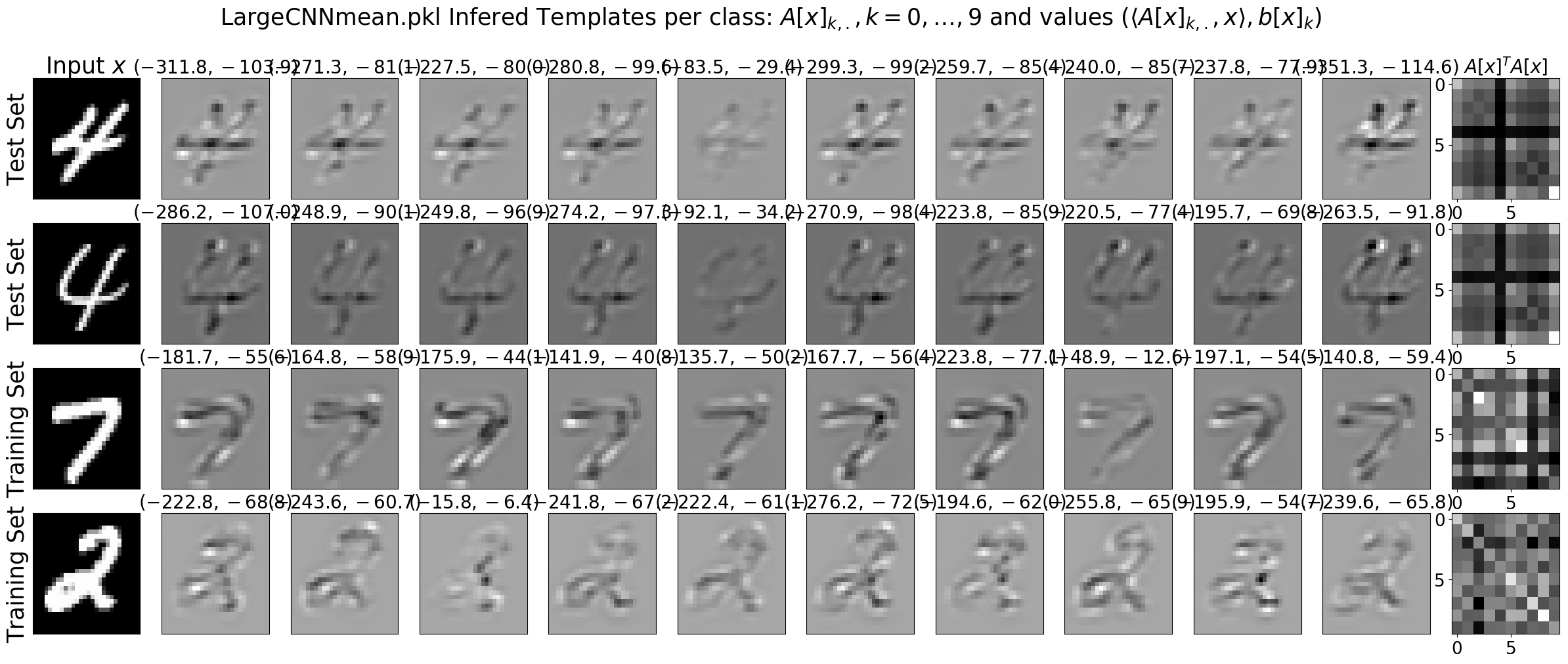}
    \caption{Depiction on MNIST for $4$ inputs belonging to class $4,4,7,2$ of all the classes templates with classes from $0$ (left) to $9$ (right). Right column is the Gram matrix of the templates representing the correlation between templates of different classes for each input. In the subplot titles are the values of the template matching and the bias.}
    \label{fig:mnist_templates}
\end{figure}
We also present in Fig. \ref{fig:mnist_templates} more specific templates for $4$ different inputs. In particular we provide the templates of all the classes for a given input as well as the gram-matrix of those templates, representing their correlation. It is interesting to denote that the wrong class templates are not $0$ neither white noise like but on the contrary inversely correlated w.r.t. the input and the correct template class. This phenomenon will be explained in details in the later section discussing the optimal template and their convergence.
\begin{figure}
    \centering
    \includegraphics[width=6in]{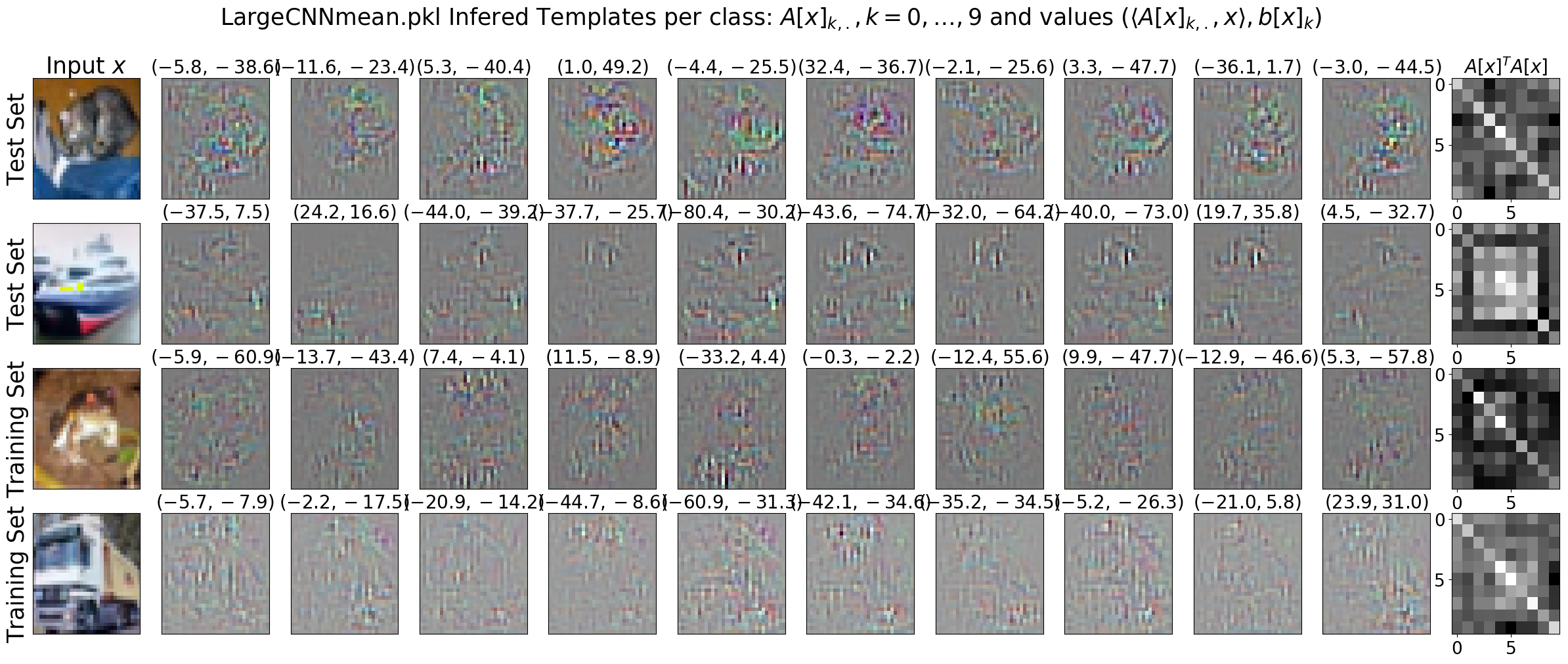}
    \caption{Depiction on CIFAR10 for $4$ inputs belonging to class cat,boat,frog,truck, of all the classes templates with classes from $0$ (left) to $9$ (right). Right column is the Gram matrix of the templates representing the correlation between templates of different classes for each input. In the subplot titles are the values of the template matching and the bias.}
    \label{fig:cifar_templates}
\end{figure}
Finally, in Fig. \ref{fig:cifar_templates} we provide the same type of analysis but for CIFAR10. Clearly the shape and class are less distinguishable in the templates, yet they are fitted to the input so that the template matching of the right class is indeed the maximum of all the classes.

Concerning the encoding of a given input $x$, a DNN find very close interpretation with standard signal processing tools. In fact, it encodes $x$ via localization information and amplitudes as would be the case of Fourier transform with phase and amplitude. In fact, we denote by amplitude the result of $A[x]x$ and by phase the collection of regions at each layer in which the input belongs to. If we denote by $\Omega[x]^{(\ell)}$ the region at level $\ell$ in which $\bz^{(\ell)}$ belongs to, then a DNN representation of an input $x$ is defined as follows.
\begin{prop}
A DNN encoding of an input $x$ is define as the complementary couple of amplitude and phase defined as $(A[x]x,(\Omega[x]^{(\ell)})_{\ell=1}^L)$. This information is indeed enough to fully reconstruct an input as we will demonstrate later with deep neural network inversion and semi supervised applications.
\end{prop}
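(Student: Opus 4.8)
The plan is to reduce the statement to a linear-algebraic inversion by exploiting Theorem~\ref{th1} and the layer-wise decomposition of Section~\ref{subsub:networks}. The key observation is that the phase $(\Omega[x]^{(\ell)})_{\ell=1}^L$ already determines \emph{every} per-layer affine map: selecting the region $\Omega[x]^{(\ell)}$ fixes the slope $A^{(\ell)}$ and offset $b^{(\ell)}$ of the $\ell$-th LSO (for a ReLU layer the phase is exactly the activation pattern $v\in\{0,1\}^{D^{(\ell)}}$, which pins $A^{(\ell)}_\sigma=\mathrm{diag}(v)$; for max-pooling it records the selection matrix; linear layers contribute their parameters unchanged). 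Hence the phase determines the composite template $A[x]$, the bias $b[x]$, and therefore -- given the amplitude $A[x]x$ -- the whole output $\bz^{(L)}(x)=A[x]x+b[x]$ as well as each intermediate representation $\bz^{(\ell)}(x)=A^{(\ell\to1)}[x]\,x+b^{(\ell\to1)}[x]$. This is why amplitude and phase are genuinely complementary: the phase linearizes the network, and the amplitude then selects the point on the linearized image.

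I would then reconstruct $x$ by backward substitution through these now-fixed affine relations. Peeling off layers from $\ell=L$ down to $\ell=1$, at each step one solves $\bz^{(\ell)}(x)=A^{(\ell)}\bz^{(\ell-1)}(x)+b^{(\ell)}$ for $\bz^{(\ell-1)}(x)$, whose solution set is the affine subspace $(A^{(\ell)})^{+}(\bz^{(\ell)}(x)-b^{(\ell)})+\ker A^{(\ell)}$, intersected with the polytope $\Omega[x]^{(\ell-1)}$ supplied by the phase. Running this down to $\ell=1$, where $\bz^{(0)}(x)=x$, returns the set of all inputs consistent with the encoding; the content of the proposition is that, together with the constraints carried by the phase, this set is the singleton $\{x\}$. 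Equivalently, one must show that the restriction of $x\mapsto\bz^{(L)}(x)$ to the region $\omega[x]$ of the composite LSO is injective once the intermediate phases are also recorded, which I would establish under a mild non-degeneracy hypothesis on $\Theta$ (full column rank of the active sub-blocks of each $A^{(\ell)}$, and no layer discarding more coordinates than the corresponding phase equalities restore).

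The main obstacle is precisely this injectivity. Since $A[x]$ is a $C\times D$ matrix with $C\ll D$, and since pooling layers and the final collapse to $\mathbb{R}^{C}$ are genuinely many-to-one, the amplitude alone leaves a large affine indeterminacy in $x$; the whole burden falls on showing that the phase reintroduces exactly the lost degrees of freedom. Concretely, an active ReLU neuron fixes a coordinate of a pre-activation and a max-pool region records which input coordinate attained the maximum, so the nonlinearities that appear to destroy information in the forward pass in fact encode it in the phase. I expect the rigorous version of this step to require either a genericity assumption on the weights or a reading of ``fully reconstruct'' as recovery of $x$ within its phase polytope, with the deferred deep-network-inversion construction realizing a regularized pseudo-inverse $\widehat{x}=A[x]^{+}(A[x]x)+(\text{phase-dependent correction})$ computable by back-propagation at no extra cost.
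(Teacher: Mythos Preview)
The paper supplies no self-contained proof at this point; the proposition is an explicit forward reference, and the promised demonstration is the content of Section~\ref{subsub:all_recon}. There the argument is quite different from yours. Rather than peeling off layers with pseudo-inverses and intersecting with phase polytopes, the paper simply \emph{defines} the reconstruction as
\[
f_\Theta^{-1}(X_n)\;:=\;A[X_n]^{T}\bigl(A[X_n]X_n+b[X_n]\bigr)\;=\;A[X_n]^{T}f_\Theta(X_n)\;=\;\Bigl(\tfrac{d f_\Theta(X_n)}{dX_n}\Bigr)^{T}f_\Theta(X_n),
\]
notes that $A[X_n]=\tfrac{d f_\Theta(X_n)}{dX_n}$ so the whole thing is one backward pass, and then establishes exactness only in the regime of \emph{optimal templates}: when $A^*[X_n]_c\propto\pm X_n$ as in the regularized-optimum theorem, the sum $\sum_c\langle A[X_n]_c,X_n\rangle A[X_n]_c$ collapses to $X_n$. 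For a generic trained network no exact recovery is claimed; the formula is treated as an approximate (``boundary'') inverse and used as a reconstruction loss for semi-supervised training.

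Your route is more structural and, in intent, more general: it tries to use the phase constraints themselves to close the dimension gap, without assuming the templates have any special form. That is a legitimate alternative, and you are honest about the obstacle. One small caveat on your heuristic: an \emph{active} ReLU neuron does not ``fix'' anything via the phase --- it simply passes its pre-activation through, so that coordinate is recoverable from the backward-substituted output, not from the phase; it is the \emph{inactive} neurons whose outputs are clamped to $0$, and there the phase contributes only a half-space inequality on the pre-activation, not an equality. That is precisely where continuous degrees of freedom leak, and why your scheme needs the genericity / full-rank hypotheses you flag. The paper sidesteps this entirely by assuming the rows of $A[x]$ are all collinear with $X_n$, so that $A[x]^{T}A[x]$ acts as a scalar on $X_n$. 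In short: the paper's approach buys a one-line operational inverse and a usable loss at the price of an ``optimal template'' hypothesis; yours buys a clearer accounting of what assumptions are actually required for exact recovery, but does not match what the paper eventually does.
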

\newpage

\section{Theoretical Results: Generalization, Optimal Learning, Memorization}
In this section, we will leverage the derived spline framework to provide analytical understanding of current DNNs. To do so, we first demonstrate the impacts of regularization into the quest of generalization performances for DNNs. Through regularization we can obtain analytical optimal templates and thus provide a clear methodology to guide DNNs towards this optimum.
Through this analysis, results on adversarial examples, DNN inversion and optimization schemes will be studied. Let first review the generalization problem for DNNs and the mathematical tools that can be leveraged.

\subsection{What is Generalization for DNN and why Regularization is Key}\label{generalization}
As detailed in the introduction Section \ref{sec:intro}, generalization is the ability for the approximant $f_\Theta$ to reproduce the behavior of the true unknown functional $f$ on new points not present in the training set $\mathcal{D}$. This being very general we now have to distinguish two important cases. First, when $f$ represents the ''law of nature'' in the sense that it follows some fundamental unbreakable laws. Hence, given an input $x$, only one possible outcome $f(x)$ exists, fully determined by fundamental laws such as thermodynamics. A typical example would be to predict the internal energy of a system. The second case concerns machine learning. It is the one where $f$ is ''human'', corresponding to human perceptions, a qualitative, normed interpretation of an input $x$. Typical example would be for $x$ to be a pixel representation of a scene and the associated $f(x)$ the human associated label of the \textit{main object of interest}. However, from individuals to others, the core definition of $f$ might change, raising questions about the term generalization in computer vision. Due to this unclear definition, the only quantitative measure one has of generalization is the use of a test set on which $f_\Theta$ has not be trained. This test set is used to compare to predictions based on $f_\Theta$ with the known  correct outputs of $f$. Since we focus on accuracy performance, we denote this loss by $\mathcal{L}_{AC}$ standing for accuracy loss. As a result, one uses $\mathcal{L}_{CE}$ the cross-entropy loss and $\mathcal{D}$ to update the parameters $\Theta$ and $\mathcal{L}_{AC}$ with $\mathcal{D}_{test}$ for a quantitative measure of generalization of the trained approximant. 
Due to the discussed context, we have to consider generalization differently than simply maximizing the test set accuracy. In fact, we propose here to define generalization as a measure of performance consistency from the training set to the test set. Hence, even a less accurate model is favored if its ability do not vary from samples used for its training to new samples. 
\begin{defn}
We define the generalization measure $G$ of a trained network $f_\Theta$ by the average empirical difference of performance between training set and test set as
\begin{equation}
    \mathcal{L}_G(\mathcal{D},\mathcal{D}_{test},f_\Theta)=d\left(\mathcal{L}_{AC}(\mathcal{D},f_\Theta),\mathcal{L}_{AC}(\mathcal{D}_{test},f_\Theta)\right),
\end{equation}
with $d$ a distance metric, $\mathcal{L}_{AC}$ a generic performance metric linked with quality of the prediction, accuracy loss in our case
\begin{align*}
&\mathcal{L}_{AC}(\mathcal{D},f_\Theta)=\frac{1}{Card(\mathcal{D})}\sum_{(X_n,Y_n) \in \mathcal{D}}\mathcal{L}_{AC}(Y_n,\hat{y}(X_n))\\ 
&\mathcal{L}_{AC}(\mathcal{D}_{test},f_\Theta)=\frac{1}{Card(\mathcal{D}_{test})}\sum_{(X_n,Y_n) \in \mathcal{D}_{test}}\mathcal{L}_{AC}(Y_n,\hat{y}(X_n)).
\end{align*}
\end{defn}
As a result, a network performing similarly on train and test set is considered as optimal in term of generalization of its underlying learned representation.
This results in a systematic way to measure and learn topologies as we now define the overall objective.
\begin{center}
\begin{tabular}{|p{0.97\linewidth}|}\hline
\begin{defn}
The optimal network given a finite training set $\mathcal{D}$ and test set $\mathcal{D}_{test}$ is defined as
\begin{align}
    f^*_{\Theta^*}=\argmin_{f_\Theta}\underbrace{\mathcal{L}_{AC}(\mathcal{D},f_\Theta)}_{\text{Empirical Risk Minimization: $\Theta^*$}}+\underbrace{\mathcal{L}_G(\mathcal{D},\mathcal{D}_{test},f_\Theta)}_{\text{Structural Risk Minimization: $f^*$}}
\end{align}
\end{defn}
\\ \hline
\end{tabular}
\end{center}
This search of the optimal approximant can thus be done in a two-step process by first fixing a topology and then minimizing $\mathcal{L}_{CE}$ synonym of minimization of $\mathcal{L}_{AC}$ on the training set. Doing this over multiple topologies and then selecting the optimal network by search of the one with minimum generalization loss $\mathcal{L}_G$.
Since this results in learning of tremendous possible models, one usually tries to find a way to translate $\mathcal{L}_G$ into a differentiable loss that can be used on the training set.
This usually takes the form of standard regularization such as Tikhonov, dropout and so on. Yet, those approaches can only impact the final parameters $\Theta$, and thus have only a limited impact on the true generalization loss as opposed to topological changes in $f$. Nevertheless, we now develop in the following section precise analysis and results to link regularization with generalization, overfitting. This will also to understand the dataset memorization problem and what generalization actually means for DNNs. The next section will however build on those results and attempt to tackle this problem from a broader point of view via a systematic way to estimate $\mathcal{L}_G$ prior learning hence allowing easier design search.

Given an approximant $f_\Theta$, the search for best generalization performances is commonly interpreted as finding parameters $\Theta$ in a flat-minima region. A flat-minima region is a part of the parameter space associated with great generalization performance of the approximant. The term flatness is easily interpreted as follows. One seeks $\Theta+\epsilon$ to also belong to this region. Hence moving around $\Theta$ still produce great generalization leading to a flat generalization performance as $G(f_\Theta))\approx G(f_{\Theta+\epsilon})$. This is opposed to sharp-minima where $||G(f_\Theta))- G(f_{\Theta+\epsilon})|| \gg ||\epsilon||$.
This analysis started long before current DNN outbreaks. Generalization, in addition of being associated to flat minima \cite{wolpert1994bayesian} is also mapped to  complexity of networks
\cite{hochreiter1995simplifying} which is linked with Kolmogorov complexity and Minimum Description Length. This comes from the fact that flat minima are associated to simpler networks which then leads to high generalization \cite{schmidhuber1994discovering,hochreiter1995simplifying}. However, standard analysis if hardly applied as the measure and definition of a DNN complexity is still not clear. Practical approaches aiming at guiding $\Theta$ towards flat-minima then took different forms.
From one side, reduction of the number of degrees of freedom via weight sharing led to promising results \cite{nowlan1992simplifying,rumelhart1986parallel,
lang1990time,yann1987modeles,lecun1989generalization} while being very general and model agnostic. Another approach uses early-stopping as in \cite{morgan1990generalization,weigend1990predicting,
vapnik1992principles,moody1994architecture,guyon1992structural} motivated by the famous  point of inflexion of the testing error, first reducing till a breaking point where it increases. This point is the optimal to stop training as generalization error is minimized. 
Both methods require inside information and expertise. Thus the search for a more principled method possibly adaptive to any case led to regularization studies.
To do so, penalization of complex networks was applied and  led to great advances in the flat-minima search.This complexity based approach takes the form of Occam's razor principle \cite{blumer1987occam} and was in practice applied via weight penalization
\cite{mackay1996bayesian,hinton1986learning,
hinton1987learning,plaut1986experiments,williams1995bayesian}. For example, with norm based regularization. In fact, in the case of Tikhonov regularization, a very intuitive interpretation of the wights appear: \textbf{the weight amplitudes is equal or proportional to their error derivative, a.k.a their importance}. By making weights amplitude correlated to their role in the loss minimization, only the necessary one will reaming nonzero hence simplifying the network through sparsity of the model/connections.
Finally, input and/or weight noise applied during training has also found equivalences with regularization. It consists of perturbing the input or the current set of parameters with additive or multiplicative noise throughout the learning phase. This has recently took the form of dropout, a multiplicative noise with Bernoulli variables \cite{srivastava2014dropout,gal2016dropout,srivastava2014dropout}  randomly turning neurons or connections to $0$ in DNNs. This concept goes back to synaptic noise \cite{murray1993synaptic},
and \cite{matsuoka1992noise} where generalization performances of neural networks trained with backpropagation is studied via introduction of noise to the input. It was then shown in 1995 \cite{bishop2008training} that introduction of noise during learning is equivalent to a generalized Tikhonov regularization technique. More precisely, it has been shown that while additive noise provides an induced penalty term on the norm of the weights a la Tikhonov, multiplicative noise provides a weighting of this regularization based one the Fisher information of the weights \cite{li2016whiteout}.
A probabilistic interpretation of dropout indeed demonstrates the push of the weights towards sparse solutions \cite{nalisnick2015scale}. 
Going further, an explicit regularization term is found from dropout and extended in \cite{wager2013dropout}.
Based on those approaches, we can now have the following intuitive explanation of why current topologies work so well:
\begin{itemize}
    \item Multi-Objective Regularization: Introduction of noise during learning coupled with explicit norm based penalties on the weights
    \item Weight-Sharing: Convolutional topologies allow extremely  efficient and smart weight sharing for perceptual tasks reducing the number of degrees of freedom while providing very high-dimensional mappings
    \item Cross-Validation and Early-Stopping: huge resources now allow fine search of topologies and hyper-parameters
\end{itemize}
We now provide in the next section theoretical results in the context of regularization. As we will see norm constraints on the free parameters, hence the templates for DNNs allow to obtain closed form optimal theoretical templates. From this, different results will be derived from adversarial example existence to dataset memorization and network inversion.


\subsection{Learning Optimal Templates}
 
In this section we study the learning of the DNN templates for two cases. First in the case of a loss function without regularization terms. Secondly when sparsity constraints is imposed. For both cases we study what are the optimal templates, their convergence and demonstrate the need for regularization. Regularization will be shown to make the problem of optimal template learning well-defined as well as being robust to poor weight initialization. Based on this, we then provide a methodology to quantify the quality of a given DNN topology and weight initialization schemes simply based on the induced templates and their potentials in the next Section \ref{subsub:sys}.

\subsubsection{Unregularized Learning Solution: Unstable Training}\label{subsub:colinear}

We present here the general analysis for any DNN using the cross-entropy loss function coupled with softmax activations. 
We denote by $A_c$ the $c^{th}$ template. There is no input conditioning ($A[x]$) as we aim at finding the explicit optimal form this template should have given the input $x$. Hence for now $A_c$ is an generic template.
The global loss function to be minimized is the negative cross-entropy between the true label $Y_n$ and the estimation $\hat{y}(X_n)$. We remind that it is defined as
\begin{equation}
    \mathcal{L}_{CE}(Y_n,\hat{y}(X_n))=-(\langle A_{Y_n},x\rangle+b_{Y_n})+\log\Big(\sum_{c=1}^C e^{\langle A_c,x\rangle+b_c} \Big).
\end{equation}
From this we apply standard iterative gradient based minimization procedures to seek the optimal templates $A_c$ for a given input $X_n$ and for each classes.
We have
\begin{align}
    \frac{\partial \mathcal{L}_{CE}(Y_n,\hat{y}(X_n))}{\partial A_{Y_n}}&=-X_n +X_n \frac{ e^{\langle A_{Y_n},X_n\rangle+b_{Y_n}}}{\sum_{c=1}^C e^{\langle A_c,X_n\rangle+b_c}}\nonumber \\
    &= X_n(\hat{y}(X_n)_{Y_n}-1),\\
    \frac{\partial \mathcal{L}_{CE}(Y_n,\hat{y}(X_n))}{\partial A_c}&=X_n\frac{ e^{\langle A_c,X_n\rangle+b_c}}{\sum_{k=1}^C e^{\langle A_k,X_n\rangle+b_k}} \nonumber \\
    &=X_n\hat{y}(X_n)_{c},\forall c \not = Y_n,
\end{align}
leading to the following gradient update rule with learning rate $\lambda$
\begin{align}
    A_{Y_n}^{(t+1)}=&A_{Y_n}^{(t)}-\lambda X_n(\hat{y}_{Y_n}(X_n)-1)\nonumber \\
    =&A_{Y_n}^{(t)}+\lambda X_n(1-\hat{y}_{Y_n}(X_n))\\
    A_{c}^{(t+1)} =& A_c^{(t)}- \lambda X_n \hat{y}(X_n)_{c}, \forall c \not = Y_n.
\end{align}
We do not analyze the behavior for the biases $b$ since they do not interfere with the optimal templates.
It is clear that $\hat{y}(X_n)_{c}>0$ as well as $(1-\hat{y}_{Y_n}(X_n))>0$. This implies that the update rule adds the re-scaled input $X_n$ for the correct template $A_{Y_n}$ whereas for the other classes, $X_n$ is substracted. This way, by adding or substracting the input to the templates, the template matching mapping can either increase or decrease. 
It becomes clear that the sensitivity to initialization is extreme as given a starting random template, it can only moves along the input $X_n$ direction. Hence there are infinitely many optimal templates, one for each starting point. Moreover, the similarity between the templates and the input will also depend on this initialization. We depict this phenomenon in Fig. \ref{fig:memorization} on the left subplot.
For the case of a structured templates as in practice it is defined as a composition of affine mappings with different internal parameters, it is clear that the update will push the template as close as possible to this optimal based on the ability of the mappings composition to produce it. With increasing number of free parameters and network complexity it is fair to assume that the induced update will be close to this optimum.
We now dive into the regularized case.

\subsubsection{Regularized Learning: Global Optimum, Robust, implies Dataset Memorization}\label{subsub:memo}
By adding a regularization penalty to the loss function such as sparsity constraint with norm based loss, we can obtain analytical optimal templates as the optimization problem becomes well-defined. As we will see, dataset memorization is the global optimum.

While memorization is often associated to overfitting and bad performances, we will see that this general statement is more ambiguous. By memorization, we denote the ability for DNNs template  to become collinear to their input.
In fact, the term memorization itself should be seen as a good ability of a DNN if it holds for arbitrary inputs from the manifold of interest. In this case, the DNN would be able to span all inputs of interests, effectively making it a basis of the training set, testing set and so on. 
We now study the impact of norm constraints on the optimal templates of DNNs with only assumption that all inputs $X_n$ have same energy, in particular $||X_n||^2=1$.

\begin{theorem}
In the case where all inputs have identity norm $||X_n||=1,\forall n$ and assuming all templates denoted by $A[X_n]_c,c=1,\dots,C$ have a norm constraints as $\sum_{c=1}^C ||A[X_n]_c||^2\leq K,\forall X_n$ then the unique globally optimal templates are
 \begin{equation}
     A^*[X_n]_c=\left\{ \begin{array}{l}
          \sqrt{\frac{C-1}{C}K}X_n,\iff c=Y_n\\
          -\sqrt{\frac{K}{C(C-1)}}X_n,\text{ else}
     \end{array}\right.
 \end{equation}
 \end{theorem}
\begin{figure}[t!]
    \centering
    \includegraphics[width=6in]{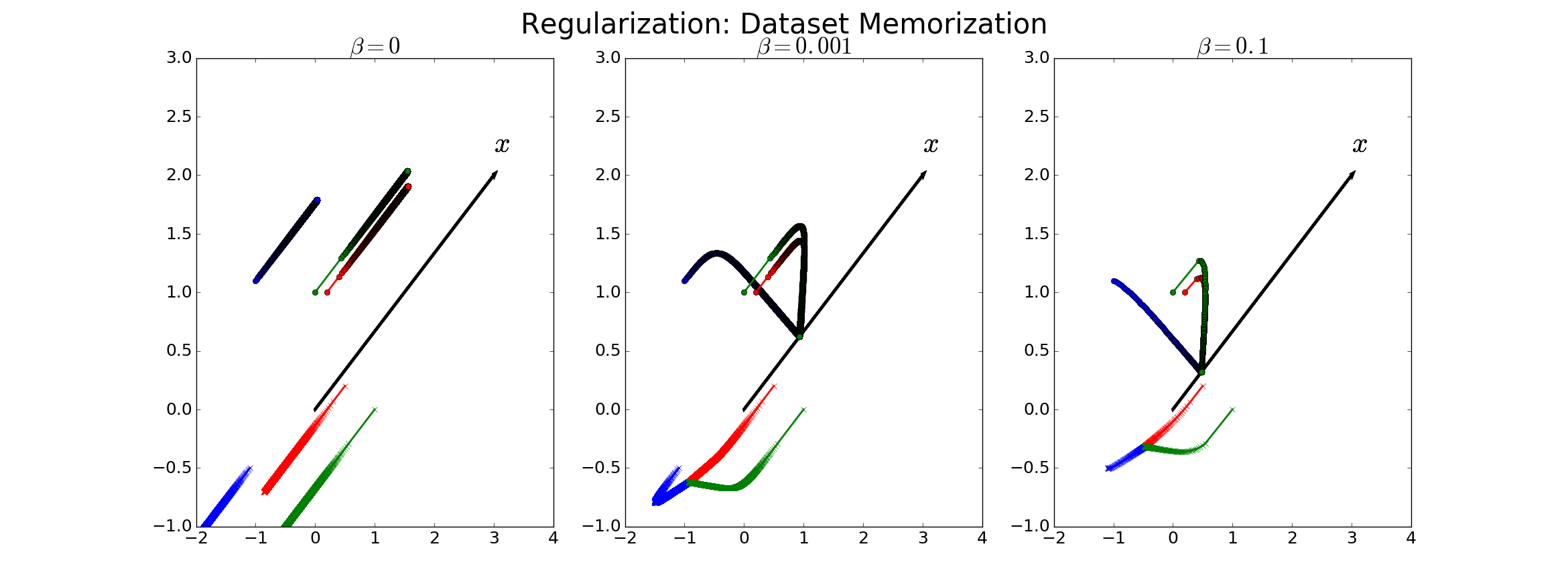}
    \caption{Tikhonov based regularization implies a push of the templates towards dataset memorization.}
    \label{fig:memorization}
\end{figure}
In order to highlight and provide intuition on the two derived results, we provide in Fig. \ref{fig:memorization} a simple example in $2$ dimensional case. We initialize the templates randomly and set the templates of opposite classes to be equal a initialization to better see the update differences based on a same starting point. On the left part no Tikhonov regularization has been applied whereas the middle and right plots contains standard and strong regularization. We can see that with regularization, templates converge toward a rescaled $X_n$, with scale depending on the regularization parameter. This convergence speed is also dependent on this parameter. On the other hand for the no penalty case, a simple move of the templates can be seen as ill-posed and leads to very pool templates. Hence, dataset memorization can not be said to always be synonym to overfitting and thus poor performance as they might correspond to optimal templates.
From the derived results, we now propose to analyze adversarial examples though the lenses of templates and optimal templates. As we will see, adversarial examples are natural and should not be fought, high sensitivity however should be controlled.

\subsection{Adversarial Examples Are Natural, being Fooled is Not}

In this section we propose a two-fold analysis of adversarial examples. 
Note that we focus here on model based adversarial examples which are optimized based on a trained DNN. 
Firstly we demonstrate that existence of adversarial noise is due to optimal templates.
In particular we demonstrate that training with current loss function and condition can only lead to adversarial example existence. 
Secondly we study the sensitivity of DNN mappings in general including adversarial noise sensitivity. To do so we propose a simple methodology to compute the Lipschitz constant of any DNN layer via the corresponding LSO. By allowing explicit computation of the overall mapping sensitivity we further justify the need for sparsity constraints on the parameters.
We first review briefly what are adversarial examples.

We remind that we denote a differentiable mapping $f_\Theta: \mathbb{R}^D \rightarrow \mathbb{R}^C$ that we call a pre-processing system producing a feature map e.g. SIFT, wavelets, DCN. In order to produce a class distribution $\hat{y}$ a softmax nonlinearity is applied as presented in Section \ref{sec:intro}.
We further denote as $X_n^{Y_n}$ an input image $X_n$ of class $Y_n$. Adding to it a small optimized noise in order to fool the system to predict a wrong class $k,k\not = Y_n$ lead to the new input denoted $X_n^{Y_n \rightarrow k}$.
In order to generate an adversarial example we use
\begin{equation}\label{adveq}
    X_n^{Y_n \rightarrow k}=X_n+\alpha \frac{d \hat{y}(X_n^{Y_n})_{k}}{d X_n^{Y_n}}.
\end{equation}
It thus intuitively corresponds to pushing by an amplitude of $\alpha \in \mathbb{R}^+$ the natural input $X_n^{Y_n}$ with the wrong class template $A[X_n^{Y_n}]_k$. 
Clearly, the demonstration that a very small $\alpha$ leads to a complete change in the prediction $\hat{y}(X_n)$ implies that the joint system is not Lipschitz contractive since
\begin{align}
     \| \hat{y}(X_n^{Y_n})-\hat{y}(X_n^{Y_n \rightarrow k})\| \gg &\| X_{n}-X_n^{Y_n \rightarrow k}\|\\
     \gg &\alpha \|  \frac{d \hat{y}(X_n)_{k}}{d X_n}     \|.
\end{align}

We first review briefly previous work on adversarial examples.
In order to become more robust to adversarial examples \cite{gu2014towards,lyu2015unified} developed gradient regularization techniques by imposing a sparsity penalty term on the partial derivatives of the output the the deep nets layers w.r.t. their inputs. This is as we showed above on way to reduce the amplitude of the adversarial examples by imposing that $||\frac{d f_\Theta(X_n)}{dX_n}|| < \epsilon$. On the other hand, using adversarial examples during training has also been done showing better performances on the test set as in \cite{shaham2015understanding}. However none of these methods provide a guarantee on the generalization of the technique for new unseen examples and adversarial examples. On the opposite in \cite{fawzi2015analysis}, has shown that for linear classifier, invariance can not be achieved in general. Similarly, in \cite{gu2014towards} the idea of being invariant is rejected stating that one can always engineer adversarial noise, however this happens to not be the case following the sufficient condition we propose.
Another approach is proposed by defensive distillation \cite{papernot2016distillation} which consists of artificially increasing the output of $f_\Theta(X_n)$ or the input to the softmax as stated which thus forces the prediction $\hat{y}(X_n)$ to become $\epsilon$-close to a one hot representation and thus making the norm of the derivative w.r.t. input close to $0$-norm, this is interesting has it exploits the vanishing gradient problem which has the same time pushed to $0$ the margin to robustify the network against adversarial example and has been questioned in \cite{carlini2016defensive}.
While competitive robustness has been achieved in \cite{gu2014towards} via the jacobian penalty term it was already hint that a deeper problem resides in the training procedures and the loss function as it is confirmed.
In \cite{szegedy2013intriguing} additional weight decay on the parameters is applied to the loss function thus reducing $||\frac{df_\Theta(X_n)}{dX_n}||$ leading to smaller adversarial energy without yet removing its presence.

\subsubsection{Unregularized Optimal Templates Imply Adv. Noise}
We study the impact of the templates concerning the presence of adversarial examples.
When regularization is used during training, the optimal templates for any given input $X_n^{Y_n}$ are of the form $A[X_n^{Y_n}]_{Y_n} \propto X_n^{Y_n}$ and $A[X_n^{Y_n}]_{c} \propto -X_n^{Y_n},c \not = Y_n$. Hence it is clear that adversarial examples based on model optimization can only scale down the input $X_n$ hence $X_n^{Y_n \rightarrow c}=\kappa X_n^{Y_n},\kappa <1$.
For the case of unregularized templates however, the input additive transformation will introduce the initial template. As we saw, the final template after learning results from the initialized temples plus a succession of updates pushing it by a rescaled version of $X_n$. Hence we denote $A[X_n^{Y_n}]_c=A[X_n^{Y_n}]_c^0+\beta X_n^{Y_n}$.
The noisy input thus corresponds to $X_n^{Y_n \rightarrow c}=X_n^{Y_n}+\kappa (A[X_n^{Y_n}]_c^0-\beta X_n^{Y_n})$. As a result, it does not just scale down the input but add a random noise to the input. This random noise corresponding to part of the wrong class template, implies much higher unstabilities and can look like standard noise to us, as it is in practice.

We now present two properties on the gradient based update in the case of no regularization that will be used to demonstrate the rise of adversarial examples as part of the weight optimum convergence. This first one stats that as the number of classes increase, the wrong class templates will be less and less updated in the opposite direction of the input making the initialization noise more present in the adversarial noise possible leading in the extreme case
$X_n^{Y_n \rightarrow c}=X_n^{Y_n}+\kappa (A[X_n^{Y_n}]_c^0)$
\begin{prop}
The sum of the updates for the wrong classes add up the the opposite of the update of the correct class denoted as 
\begin{align*}
\sum_{c\not = Y_n} \frac{d \hat{y}(X_n^{Y_n})_c}{dX_n^{Y_n}}=&\sum_{c\not = Y_n} -X_n\hat{y}(X_n)_c\\
=&-X_n(1-\hat{y}(X_n)_{Y_n})\\
=&-\frac{d \hat{y}(X_n^{Y_n})_{Y_n}}{dX_n^{Y_n}}.
\end{align*}
\end{prop}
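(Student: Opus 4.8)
The plan is to read all three equalities off the gradient formulas already obtained in Section~\ref{subsub:colinear}, together with the single fact that the softmax output is a probability vector.

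First I would establish the per-class identity $\frac{d\hat{y}(X_n^{Y_n})_c}{dX_n^{Y_n}}=-X_n\hat{y}(X_n)_c$ for every wrong class $c\neq Y_n$. Writing the logit as $z_c=\langle A[x]_c,x\rangle+b[x]_c$ with $A[x]$ locally constant (Theorem~\ref{th1}), the chain rule through the cross-entropy/softmax gives $\partial\mathcal{L}_{CE}/\partial z_c=\hat{y}(X_n)_c-\mathbf{1}_{\{c=Y_n\}}$, so the gradient-type direction associated with class $c$ is the rescaled input $X_n\hat{y}(X_n)_c$; with the sign convention fixed by Eq.~\eqref{adveq} this reads $-X_n\hat{y}(X_n)_c$ for $c\neq Y_n$ and $X_n\bigl(1-\hat{y}(X_n)_{Y_n}\bigr)$ for $c=Y_n$. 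This is exactly the first equality of the statement, and, applied to $c=Y_n$, the last one.

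Next comes the middle equality, which is pure bookkeeping: since $\hat{y}(X_n)$ is produced by a softmax, $\sum_{c=1}^C\hat{y}(X_n)_c=1$, hence $\sum_{c\neq Y_n}\hat{y}(X_n)_c=1-\hat{y}(X_n)_{Y_n}$; factoring $-X_n$ out of $\sum_{c\neq Y_n}\bigl(-X_n\hat{y}(X_n)_c\bigr)$ gives $-X_n\bigl(1-\hat{y}(X_n)_{Y_n}\bigr)$. Chaining with the previous paragraph, $-X_n\bigl(1-\hat{y}(X_n)_{Y_n}\bigr)=-\frac{d\hat{y}(X_n^{Y_n})_{Y_n}}{dX_n^{Y_n}}$, and the proof closes.

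I do not expect any genuine obstacle: at bottom the statement is the identity $\frac{d}{dX_n}\sum_{c=1}^C\hat{y}(X_n)_c=\frac{d}{dX_n}(1)=0$, restated in the template-matching language of the earlier sections. The one point worth a sentence of care is the identification of the input-gradient $\frac{d\hat{y}_c}{dX_n}$ with the rescaled input $\pm X_n\hat{y}(X_n)_c$ — i.e., making explicit that a gradient step moves the class-$c$ template by exactly the wrong-class / correct-class update of Section~\ref{subsub:colinear}, so that the wrong-class updates sum to the negative of the correct-class update.
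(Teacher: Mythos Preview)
Your proposal is correct and follows essentially the same route as the paper: the proposition is stated with its proof built into the displayed chain of equalities, and those equalities rest precisely on the two ingredients you name — the per-class gradient formulas from Section~\ref{subsub:colinear} (the update for $A_c$ is $-X_n\hat{y}(X_n)_c$ for $c\neq Y_n$ and $X_n(1-\hat{y}(X_n)_{Y_n})$ for $c=Y_n$) and the softmax normalization $\sum_c\hat{y}(X_n)_c=1$. Your closing remark that this is, at bottom, differentiating the identity $\sum_c\hat{y}_c=1$ is a nice gloss the paper does not make explicit.
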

We can thus see that as the number of classes grow the less the wrong class templates will move away from their initial point.
We now present adversarial example specific analysis to quantify their impact on a given network using the tools and remarks developed in the previous section.
Using the chain rule and the definition of adversarial examples defined in Eq. \ref{adveq} we can measure the sensitivity of a network via analysis of the norm of $\frac{d \hat{y}(X_n)_{c}}{dX_n}$. We thus proceed to derive Lipschitz constant of DNNs in the next section.

\subsubsection{Ensuring Adversarial Noise Robustness via Lipschitz Constant minimization: Contractive DNNs}\label{subsub:lip}

In this section we describe the space contraction properties of DNNs and composition of LSOs in general.
As we will see, deriving the exact formula for any given deep neural network is straightforward and will allow us to better understand what causes ''chaotic'' behaviors as seen with adversarial examples.
Let first remind that for differential mappings $f_\Theta^{(\ell)}:\mathbb{R}^{D^{(\ell-1)}}\rightarrow \mathbb{R}^{D^{(\ell)}}$ the Lipschitz constant $K$ is equal to the infinite norm of the total derivative. Hence for LSOs $S[\textbf{A},\textbf{b},\Omega]$ differentiable almost everywhere we have 
\begin{align}
    K\leq \max_{r=1,\dots,R^S}||A_r||^2,
\end{align}
We now briefly present the Lipschitz constant of the most used layers, namely the ReLU layer, the pooling layer and the affine transforms. Finally, we will conclude with the softmax nonlinearity, which is present in any classification framework.

Firstly we study the general case of the affine transforms.
\begin{theorem}
For the affine mappings and nonlinearity layers we have
\begin{align}
     K_W=||Wx_1+b-(Wx_2+b)||^2=&||Wx_1-Wx_2||^2 \nonumber \\
     \leq&||W||^2||x_1-x_2||^2
\end{align}
\begin{align}
     K_{\bC}=||\bC x_1+b-(\bC x_2+b)||^2=&||\bC x_1-\bC x_2||^2 \nonumber \\
     \leq&||\bC ||^2||x_1-x_2||^2
\end{align}
\begin{align}
     K_{\sigma}=||A_\sigma[x_1]x_1-A_\sigma[x_2]x_2||^2     \leq&\max_{r=1,\dots,R^S}||A_{\sigma,r}||^2||x_1-x_2||^2\nonumber \\
     \leq&D^2||x_1-x_2||^2
\end{align}
\begin{align}
     K_{\rho}=||A_\rho[x_1]x_1-A_\rho[x_2]x_2||^2     \leq&\max_{r=1,\dots,R^S}||A_{\rho,r}||^2||x_1-x_2||^2\nonumber \\
     \leq&D^2||x_1-x_2||^2
\end{align}
with $D$ representing the output dimension.
Those translate into the norm of the weight for the FC and convolutional layers and upper bounded by the output dimension for ReLU,LReLU and max-pooling.
\end{theorem}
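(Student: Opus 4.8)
The plan is to handle the four layer types in two groups: the genuinely affine layers (fully-connected and convolutional), where the claim is immediate, and the piecewise-linear layers (nonlinearity and pooling), where the region-dependent slope forces a mean-value argument built on the LSO representation of Theorem~\ref{th1}. For the FC layer, since $f^{(\ell)}_W(x)=W^{(\ell)}x+b^{(\ell)}$ is affine, the bias cancels in the difference: $f^{(\ell)}_W(x_1)-f^{(\ell)}_W(x_2)=W^{(\ell)}(x_1-x_2)$, and the bound $\|W^{(\ell)}(x_1-x_2)\|^2\le\|W^{(\ell)}\|^2\|x_1-x_2\|^2$ is just the definition of the operator norm after squaring. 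The convolutional case $f^{(\ell)}_{\bC}(x)=\bC^{(\ell)}x+b^{(\ell)}$ is literally the same computation with $\bC^{(\ell)}$ in place of $W^{(\ell)}$, since the special circulant-block structure of $\bC^{(\ell)}$ plays no role here.

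The substantive part is the nonlinearity and pooling layers. The obstacle is that, unlike the affine case, one cannot factor out $(x_1-x_2)$ because $A_\sigma[x_1]\ne A_\sigma[x_2]$ in general. To get around this I would use that, for the exhibited activations (ReLU, leaky ReLU, absolute value; see Table~\ref{table_relu}) and poolings (Table~\ref{table_pooling}), the layer is a \emph{continuous} LSO $S^{(\ell)}_\sigma[\textbf{A}_\sigma,\textbf{0},\Omega^S_\sigma]$ with finitely many polyhedral regions and zero offsets, differentiable almost everywhere with Jacobian $A_{\sigma,r}$ on the interior of $\omega^S_r$. Parametrising the segment $\gamma(t)=x_1+t(x_2-x_1)$ for $t\in[0,1]$, the curve $t\mapsto S^{(\ell)}_\sigma(\gamma(t))$ crosses only finitely many regions, is continuous, and piecewise linear in $t$ with derivative (where defined) equal to $A_\sigma[\gamma(t)](x_2-x_1)$, whose norm is at most $(\max_r\|A_{\sigma,r}\|)\,\|x_1-x_2\|$. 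Integrating this bound over $t\in[0,1]$ via the fundamental theorem of calculus gives $\|S^{(\ell)}_\sigma(x_1)-S^{(\ell)}_\sigma(x_2)\|\le(\max_r\|A_{\sigma,r}\|)\,\|x_1-x_2\|$, and squaring yields $K_\sigma\le\max_r\|A_{\sigma,r}\|^2\,\|x_1-x_2\|^2$. The identical argument applied to the max- or mean-pooling LSO produces the corresponding statement for $K_\rho$.

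Finally I would establish the crude dimension-dependent bound $\max_r\|A_{\cdot,r}\|^2\le D^2$. For the listed nonlinearities the slope matrices are diagonal with entries in $\{0,1\}$, $\{\eta,1\}$ or $\{-1,1\}$, and for max-pooling each $A_{\rho,r}$ is a selection matrix with a single unit entry per row; in every case the matrix has at most $D^{(\ell)}D^{(\ell-1)}$ nonzero entries, each bounded by $\max(1,\eta)$, so bounding the operator norm by the Frobenius norm gives the stated estimate in terms of the output dimension $D$. The main obstacle in the whole proof is precisely the piecewise-linear step: the per-region slope bound only controls the output difference because these layers are continuous across region boundaries, so that the mean value inequality along the segment is legitimate; without continuity one could not pass from ``slope bounded on each region'' to ``globally Lipschitz''. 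Everything else is routine norm manipulation.
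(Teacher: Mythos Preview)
Your proposal is correct and follows essentially the paper's approach: the paper's only argument is the sentence immediately preceding the theorem, namely that for an LSO differentiable almost everywhere the Lipschitz constant is bounded by $\max_r\|A_r\|$, and the four displayed inequalities are just this principle specialised to each layer type. Your segment-parametrisation argument is exactly the standard way to make that one-line claim rigorous for piecewise-linear continuous maps, and the paper does not provide any further detail (the appendix only treats the softmax case).
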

All the demonstration of the results are provided in the Appendix.
We also present the softmax nonlinearity which is a strictly contractive operator. In fact, we have the following result.
The softmax layer is strictly contractive with $K\leq \frac{C-1}{C^2}$
In general given a composition of affine spline operators with parameters $\textbf{A}^{(\ell)},\textbf{b}^{(\ell)}$ for the $\ell^{th}$ operator, we have the Lipschitz constant of their composition defined as 
\begin{equation}
    ||f_\Theta(x)-f_\Theta(y)||^2 \leq \left(\prod_{\ell=1}^L\max_{r=1,\dots,R} || A^{(\ell)}_{r}||^2\right) ||x-y||^2,
\end{equation}
The composition of LSOs thus inherits this property regarding its Lipschitz constant. Based on the previously derived upper bounds we can thus analyze in general the regularity property of DNNs depending on their layer composition and special topologies or weight constraints.
In particular, we propose to study the case of adversarial examples, a typical application of perturbation leading to unstable outputs. In fact, as presented in earlier section, adversarial examples represents an optimized perturbation introduced into an input before being fed into the DNN mapping. The large SNR implies that if the DNN output changes drastically, there is a clear regularity drawback for the mapping. As in practice this perturbation is able to completely fool the network making it predict with very high accuracy the incorrect class. Hence, practical evidence demonstrate the lack of contractivity.
Hence, based on the previous analysis, we see that two reasons exist.
\begin{cor}
Adversarial examples are caused by an ''explosion'' of the weight norms coupled with very high-dimensional mappings.
\end{cor}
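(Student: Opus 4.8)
The plan is to treat this corollary as a direct consequence of the composition bound on the Lipschitz constant displayed immediately above it, together with the per-layer estimates of the preceding theorem and the definition of adversarial examples in Eq.~\ref{adveq}. First I would recall that an adversarial perturbation of amplitude $\alpha$ generated via $X_n^{Y_n\rightarrow k}=X_n+\alpha\frac{d\hat{y}(X_n^{Y_n})_k}{dX_n^{Y_n}}$ produces an output displacement $\|\hat{y}(X_n^{Y_n})-\hat{y}(X_n^{Y_n\rightarrow k})\|$, and that ``being fooled'' means precisely that this displacement is large while the input displacement $\|X_n-X_n^{Y_n\rightarrow k}\|=\alpha\|\frac{d\hat{y}(X_n)_k}{dX_n}\|$ is small; equivalently, the ratio of the two exceeds the Lipschitz constant $K$ of the composed map $\hat{y}\circ f_\Theta$ along the segment joining the two inputs. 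So the occurrence of a successful attack is controlled entirely by how large $K$ can be.

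Next I would substitute into $K\leq \frac{C-1}{C^2}\prod_{\ell=1}^L\max_{r=1,\dots,R}\|A^{(\ell)}_r\|^2$ the layer-wise bounds already established: for an affine (FC or convolutional) layer $\max_r\|A^{(\ell)}_r\|^2=\|W^{(\ell)}\|^2$ (resp. $\|\bC^{(\ell)}\|^2$), while for a ReLU, LReLU, or max-pooling layer $\max_r\|A^{(\ell)}_r\|^2\leq (D^{(\ell)})^2$. Reading off this product shows that the only two mechanisms that can make the certified sensitivity blow up are (i) the spectral norms of the linear weights being large --- the ``explosion of the weight norms'' --- and (ii) the output dimensions $D^{(\ell)}$ of the spline (nonlinearity and pooling) layers being large --- the ``very high-dimensional mappings.'' That establishes the attribution claimed in the statement.

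To turn this one-sided observation into a genuine dichotomy I would also spell out the converse regime: if every linear layer is rescaled so that $\|W^{(\ell)}\|\leq 1$ and the dimensional factors are absorbed into the normalization, then $K\leq \frac{C-1}{C^2}<1$, the composed map $\hat{y}\circ f_\Theta$ is strictly contractive, and by the contractivity inequality $\|\hat{y}(X_n^{Y_n})-\hat{y}(X_n^{Y_n\rightarrow k})\|^2\leq K\,\|X_n-X_n^{Y_n\rightarrow k}\|^2$ a perturbation of small amplitude $\alpha$ can only induce a proportionally small, hence non-fooling, output change. Thus outside the ``weight explosion plus high dimension'' regime adversarial fooling is impossible, which is the content of the corollary.

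The main obstacle is that this corollary is interpretive rather than a sharp theorem: the composition estimate is an upper bound on $K$, so a large bound does not by itself force the existence of a fooling direction. The honest fix is to couple it with the earlier analysis of Section~\ref{subsub:colinear} and the unregularized-optimal-template discussion, which show that the gradient directions $\frac{d\hat{y}(X_n)_k}{dX_n}$ are genuinely active (the wrong-class templates carry a nonzero component learned along $X_n$ and along their initialization), so the certified bound is essentially attained along exactly the directions an attacker uses. Once that is granted, the remaining work is pure bookkeeping: identifying which factors in $\prod_\ell\max_r\|A^{(\ell)}_r\|^2$ are the dangerous ones, which are precisely the two named quantities.
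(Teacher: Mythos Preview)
Your proposal is correct and follows essentially the same approach as the paper: the corollary is not given a formal proof there either, but is stated as an immediate reading of the composition Lipschitz bound $\prod_\ell \max_r\|A^{(\ell)}_r\|^2$ together with the per-layer estimates (weight norms for linear layers, output-dimension factors for nonlinearity and pooling layers). Your write-up is in fact more careful than the paper's own treatment --- you add the contractive converse regime and honestly flag that a large upper bound does not by itself force a fooling direction --- but the core argument is the same identification of the two multiplicative culprits in the bound.
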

In order to solve or lessen this effect two solutions appear.
Firstly, regularization applied on the weights can reduce the norms hence the irregularity of the mapping. As seen before, this penalty term is also crucial for optimal template learning. However there is also a second way to prevent unstable outputs and it is via sparsity of the activation. This denotes the number of neurons firing after a ReLU or LReLU nonlinearity and can be easily measured given an input and a given layer as $||A_\sigma^{(\ell)}[\bz^{(\ell-1)}[\bz^{(\ell-1)}||_0$. This activation sparsity in fact can be upper bounded by a quantity smaller than the unconstrained $D^{(\ell)}$. For example, replacing the standard nonlinearity with one letting go through only the $\kappa$ order statistics brings down the nonlinearity Lipschitz upper bound to $\kappa$. Another solution can be to impose an extra layer before the nonlinearity with aim to structure the input such that it can not be all positive, the worst case for the ReLU. This solution is discussed in details in Sec. \ref{subsec:sparse}.

\newpage

\section{Extension}
In this section we propose to leverage the developed tools to propose some solution to current DNN drawbacks. This will consist of proposing a systematic way to ensure regularization and generalization measures via DNN inversion and input reconstruction. This will also allow the development of a generic semi-supervised and unsupervised strategy for DNNs. Secondly, we will study the impact of inhibitor connections to provide network stability, bias removal. One of the key concept of this part will consist of studying DNNs from a dual point of view : forward (template inference) pass and backward (reconstruction, learning) path. As we will see, adding the right connections can increase the forward sparsity whereas densifying the backward pass. Finally, we will develop a simple methodology to measure the quality and potential of untrained DNN topologies and weight initialization. This will find great application in topology search and automated DNN design as there is no longer need to train the network to obtain a qualitative measure.
In all cases, we also provide experiments on MNIST, CIFAR10.

\subsection{DNN Inversion: Input Reconstruction is Necessary for Generalization}\label{subsub:all_recon}

Deep learning systems have made great strides recently in a wide range of difficult machine perception tasks.
However, most systems are still trained in a fully supervised fashion requiring a large set of labeled data, which can be extremely tedious and costly to acquire.
Hence, there is a great need to study the inversion problem of DNN such that {\em semi-supervised} algorithms can be used, leveraging both labeled and unlabeled data for learning and inference. 
Limited progress has been made on semi-supervised learning algorithms for deep neural networks \cite{rasmus2015semi,salimans2016improved,patel2015probabilistic,nguyen2016semi} and today's methods suffer from a range of drawbacks, including training instability, lack of topology generalization, and computational complexity.
Most importantly, there exists no universal methodology to equip any given deep net with an inversion scheme.
In this section, we develop a universal methodology to invert a network allowing input reconstruction. This will allow for semi-supervised learning which can also be extended to unsupervised tasks with arbitrary DNN mappings.
Our approach simply relies on the derived inverse mapping strategy of a deep network allowing to add an additional term to the loss function. This extra term will guide the weight updates such that information contained in unlabeled data are incorporated to the network.
Our key insight is that the defined and general inverse function can be easily derived and computed; thus for unlabeled data points we can both compute and minimize the error between the input signal and the estimate provided by applying the inverse function to the network output without extra cost or change in the used model.
The simplicity of this approach, coupled with its universal applicability promise to significantly advance the purview of semi-supervised and unsupervised learning.
A series of experiments demonstrate that these modified networks have attain state-of-the-art performance in a range of semi-supervised learning tasks.

A major drawback to supervised learning is the need for a massive set of fully labeled training data. 
{\em Semi-supervised learning} relaxes this requirement by leaning $\Theta$ based on two datasets:  a fully labeled set $\mathcal{D}$ of $N$ training data pairs and a ''complementary'' unlabeled set $\mathcal{D}_u:=\{X_n,n=1,...,N_u\}$ of $N_u$ training inputs.
Unlabelled training data is useful for learning, because the unlabelled inputs provide information on the statistical distribution of the data and will help to guide the learning of $\Theta$ to classify the supervised dataset as well as characterize the unlabeled samples present in $\mathcal{D}_u$.
The lionshare of deep learning research has focused on supervised learning, because it has not been clear how to best incorporate unlabeled data points into the loss function to incorporate those unlabeled examples information in $f_\Theta$.  
However, there has been limited progress in a few directions which we now review.

When considering network inversion, standard approaches \cite{dua2000inversion} The only DNN models will reconstruction ability are based on autoencoders as \cite{ng2011sparse}. While more complex topologies have been used such as stacked convolutional autoencoder \cite{masci2011stacked} there exists two main drawbacks. Firstly, the difficulty to train complex models in a stable manner even when using per layer optimization. Secondly, the difficulty to leverage supervised information into the learning of the representation. If one considers the problem of semi-supervised learning with deep neural networks, different methods have been developed.
The improved {\em generative adversarial network} (GAN) technique \cite{salimans2016improved} couples two deep networks; a generative model that can create new signal samples on the fly and a discriminative network predicting the class of the labeled examples as well as the generated versus original nature of the input. Both neural nets are trained jointly as in typical GAN frameworks but the fact that the discriminator has to perform both tasks force it to incorporate the unlabeled signal information to distinguish them from the fake one generated by the generator. 
This enables the generator and the classifier to better model the data distribution by using the labeled examples to learn the class boundaries and the unlabeled examples to learn the distinction between ``true'' and ``fake'' signals.
The main drawbacks to this approach include training instability of GANS \cite{arjovsky2017wasserstein}, its lack of portability to time series, and high-resolution images (e.g., Imagenet\cite{deng2009imagenet}) which so far are out-of-reach of GANs\cite{salimans2016improved} as well as the extra time and memory cost of training two deep networks.
The {\em semi-supervised with ladder network} approach \cite{rasmus2015semi} employs a per-layer denoising reconstruction loss, which enables the system to be viewed as a stacked denoising autoencoder which is a standard and until now only way to tackle unsupervised tasks.
By forcing the last denoising autoencoder to output an encoding describing the class distribution of the input, this deep unsupervised model is turned into a semi-supervised model. 
The main drawback from this methods relies in the lack of a clear path to generalize it to other network topologies, such as recurrent network or residual networks. Also, the per layer ''greedy'' reconstruction loss might be too restrictive unless correctly weighted pushing the need for a precise and large cross-validation of hyper-parameters.
The probabilistic formulation of deep convolutional nets presented in \cite{patel2015probabilistic,nguyen2016semi} natively supports semi-supervised learning. 
The main drawback of this approach lies in its probabilistic nature requiring activation function to be ReLU and the DNN topology to be a DCN as well as inheriting standard difficulties of probabilistic graphical models in the context of large scale high dimensional dataset.

We propose a simple way to invert any piecewise differentiable mapping including DNNs. We provide the inverse mapping which requires no change in the current models and is computationally optimal as input reconstruction results in a backward pass as is used today for weight updates via backpropagation. This efficiency coupled with grounded mathematical motivation highlighting the need to reconstruct makes the approach a necessary step to improve DNNs performances. While tremendous applications would leverage this generalized inversion scheme, we will focus here one demonstrating the benefits of input reconstruction as a network regularized and provide semi-supervised experiments where we are able to reach state-of-the-art results with different neural network topologies. 
Highlighting that any supervised model can be kept as they are while simply changing the loss function should bring the proposed method to any known or to be known model in deep learning.

As demonstrated in previous sections, DNNs can be considered as composition of linear splines or be closely approximated as such. As a result, DNNs can be rewritten as a linear spline of the form
\begin{equation}
f_\Theta(x) = A[x]X_n+b[x],
\end{equation}
where we denote by $f_\Theta$ a general DNN, $x$ a generic input, $A[x],b[x]$ the spline parameters conditioned on the input region. Based on this interpretation, DNN can be considered as template matching machines where $A[x]$ plays the role of an input adaptive template. To illustrate this point we provide for two common topologies the exact input-output mappings. For a standard deep convolutional neural network (DCN) with succession of convolutions, nonlinearities and pooling, one had
\begin{empheq}[box=\fbox]{align}
    \bz_{CNN}^{(L)}(x) =& \underbrace{W^{(L)} \prod_{\ell=L-1}^1A_{\rho}^{(\ell)}A^{(\ell)}_{\sigma} \bC^{(\ell)}}_{\text{Template Matching}}x+\underbrace{W^{(L)}\sum_{\ell=1}^{L-1}\left(\prod_{j=L-1}^{\ell+1}A_{\rho}^{(j)}A^{(j)}_{\sigma} \bC^{(j)}\right)\left(A_{\rho}^{(\ell)}A^{(\ell)}_{\sigma} b^{(\ell)}\right)+b^{(L)}}_{\text{Bias}}.
\end{empheq}
where $\bz^{(\ell)}$ represents the latent representation at layer $\ell$. Similarly for a deep residual network one has
\begin{equation}
    \boxed{\bz_{RES}^{(L)}(x) = \underbrace{W^{(L)}\left[ \prod_{\ell=L-1}^1\left(A_{\sigma,in}^{(\ell)}\bC_{in}^{(\ell)}+ \bC^{(\ell)}_{out} \right) \right]}_{\text{Template Matching}}x+\underbrace{\sum_{\ell=L-1}^1\left(\prod_{i=L-1}^{\ell+1}( A_{\sigma,in}^{(\ell)}\bC_{in}^{(\ell)}+ \bC^{(\ell)}_{out})\right)\left( A_{\sigma,in}^{(\ell)}b_{in}^{(\ell)}+b_{out}^{(\ell)} \right)+b^{(L)}}_{\text{Bias}}}.
\end{equation}

Based on those findings, and by imposing a simple L2 norm upper bound on the templates, it has been shown that optimal DNNs should be able to generate templates proportional to their input, positively for the belonging class and negatively for the others. 

\begin{theorem}
In the case where all inputs have identity norm $||X_n||=1,\forall n$ and assuming all templates denoted by $A[X_n]_c,c=1,\dots,C$ have a norm constraints as $\sum_{c=1}^C ||A[X_n]_c||^2\leq K,\forall X_n$ then the unique globally optimal templates are
 \begin{equation}
     A^*[X_n]_c=\left\{ \begin{array}{l}
          \sqrt{\frac{C-1}{C}K}X_n,\iff c=Y_n\\
          -\sqrt{\frac{K}{C(C-1)}}X_n,\text{ else}
     \end{array}\right.
 \end{equation}
 \end{theorem}

As a result, We now leverage the analytical optimal DNN solution to demonstrate that reconstruction is indeed implied by such an optimum.

\subsubsection{Optimal DNN Leads to Input Reconstruction}
The study of reconstruction in the context of DNN differs from standard signal processing. In fact, for any given input, the number of templates is defined as the number of classes $C$ which is in general much smaller than the number of input dimensions $D$. Hence, approaches based on the study of orthogonal or over-complete basis can not be applied. Also, composition of mappings is hardly studied in this context unless dealing with deep NMF (CITE) types of approaches. On the other hand, the templates or atoms are adapted to the input as opposed to NMF where they are optimized explicitly. Firstly, we define the inverse of a DNN as
\begin{align}
f_\Theta^{-1}(X_n)=&A[X_n]^T(A[X_n]X_n+b[X_n])\nonumber \\
=&\sum_{c=1}^C (<A[X_n]_c,X_n>+b[X_n]_c)A[X_n]_c.
\end{align}
\begin{theorem}
In the case of no or negligible  $A[X_n]^Tb[X_n]$, optimal templates defined in the previous Theorem lead to exact reconstruction as we have by definition
\begin{align*}
\sum_{c=1}^C \langle A[X_n]_c,x\rangle A[X_n]_c=&(C-1) K\frac{1}{C(C-1)}||X_n||^2X_n +K\frac{C-1}{C}||X_n||^2X_n=X_n
\end{align*}
Hence optimal templates imply perfect reconstruction.
\end{theorem}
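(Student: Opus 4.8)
The plan is a direct verification by substitution. I would plug the closed-form optimal templates $A^*[X_n]_c$ from the preceding theorem into the definition $f_\Theta^{-1}(X_n) = \sum_{c=1}^C\big(\langle A[X_n]_c,X_n\rangle + b[X_n]_c\big)A[X_n]_c$, invoke the standing hypothesis to drop the $A[X_n]^T b[X_n]$ contribution, and show that the remaining quadratic term collapses onto $X_n$ (up to the template-energy constant $K$).

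Concretely, the key observation is that each optimal template is a scalar multiple of $X_n$, so writing $A^*[X_n] = v\,X_n^T$ with $v\in\mathbb{R}^C$, $v_{Y_n} = \sqrt{\tfrac{C-1}{C}K}$, and $v_c = -\sqrt{\tfrac{K}{C(C-1)}}$ for $c\neq Y_n$, one gets $A^*[X_n]^T A^*[X_n] = \|v\|^2\,X_n X_n^T$ and hence $f_\Theta^{-1}(X_n) = \|v\|^2\,\|X_n\|^2\,X_n$ after discarding the bias. It then remains to compute $\|v\|^2$: the correct class contributes $\tfrac{C-1}{C}K$ and the $C-1$ wrong classes contribute $(C-1)\cdot\tfrac{K}{C(C-1)} = \tfrac{1}{C}K$, which sum to $K$. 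With $\|X_n\| = 1$ this gives $f_\Theta^{-1}(X_n) = K\,X_n$, exactly the stated identity; the displayed line in the statement is this same computation carried out term by term, the two fractions $\tfrac{1}{C}$ and $\tfrac{C-1}{C}$ adding to $1$.

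I do not expect a genuine obstacle: the result is essentially a one-line algebraic identity once the explicit templates are available. The two points worth a careful sentence are (i) justifying "negligible $A[X_n]^T b[X_n]$" — this is exactly where the hypothesis bites, and one can appeal to the unbiased-nonlinearity/unbiased-pooling regime (ReLU, max-pooling) in which the accumulated bias vanishes, or to a norm bound on $b$ controlling its contribution; and (ii) flagging that "perfect reconstruction" is meant up to the global scale $K$ inherited from the energy budget $\sum_c\|A[X_n]_c\|^2\leq K$, so the literal equality $f_\Theta^{-1}(X_n) = X_n$ holds only after normalizing $K = 1$ (or reconstructing $X_n$ up to this known scalar).
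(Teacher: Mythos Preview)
Your proposal is correct and follows exactly the paper's approach: the paper's ``proof'' is nothing more than the one-line substitution displayed inside the theorem statement itself, i.e.\ splitting the sum into the $C-1$ wrong-class terms and the single correct-class term and adding. Your rank-one factorization $A^*[X_n]=vX_n^T$ is a slightly cleaner way to package the same arithmetic, and your point (ii) about the factor $K$ is well taken---the paper's displayed equality $\cdots = X_n$ tacitly requires $K=1$ (in addition to $\|X_n\|=1$), which it does not make explicit.
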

However, exact reconstruction is in practice not optimal as from all the content present in $X_n$ only a subpart is sufficient for the task at hand, there also is presence of noise and so on. In fact, the quantity $A[X_n]^Tb[X_n]$ is negligible but not null representing this fact. Hence if we now consider the problem of reconstructing a noisy input, we can bridge this scheme to standard thresholding and denoising scheme such as wavelet thresholding. In particular we study now the case of Relu or LReLU activations and mean or max pooling. The ReLU with bias is equivalent to asymmetric soft thresholding. Hence we can see the inversion of DNNs are equivalent to a composition of soft-thresholding based denoising operators. With minimization of the reconstruction error, one then has an adaptive filter-bank able to span the dataset. While not being sufficient for generalization it is necessary as absence of templates adapted to an input is synonym of false induced representation.
We now present a particular application of the derived reconstruction loss: semi-supervised.

\subsubsection{Boundary Inversion Method, Implementation and Loss Function}

We briefly describe how to apply the proposed strategy to a given task with arbitrary DNN. As exposed earlier all the needed changes happen in the loss functions by adding extra terms. As a result if automatic differentiation is used as in Theano\cite{bergstra2010theano}, TensorFlow\cite{abadi2016tensorflow} for example then it is sufficient to change the loss function and all the updates will be adapted via the change in the gradients for each of the parameters.
The great efficiency of this inversion schemes is due to the following.
As we have seen in the previous section, any deep network can be rewritten as a linear mapping. This leads to a simple derivation of a network inverse defined as $f^{-1}$ that will be used to derive our unsupervised and semi-supervised loss function via
\begin{align}
    f^{-1}(x,A[x],b[x])=&A[x]^T\Big(A[x]x+b[x]\Big)\nonumber \\
                       =&A[x]^Tf(x;\Theta)\nonumber \\
                       =&\frac{d f(x;\Theta)}{dx}^Tf(x;\Theta).
\end{align}
The main efficiency argument thus comes from the fact that
\begin{equation}
    A[x]=\frac{d f(x;\Theta)}{dx},
\end{equation}
allowing to very efficiently compute this matrix on any deep networks via differentiation as it would be done to back-propagate a gradient for example.

Interestingly for neural networks and many common frameworks s.a. wavelet thresholding, PCA,\dots, $\mathcal{E}$ is considered as the reconstruction error as $(\frac{df(x)}{dx})^Tf(x)$ is the definition of the inverse transform.
In particular and for illustration purposes, we present in the table below some common frameworks for which $\mathcal{E}$ represents exactly the reconstruction loss and thus Eq.\ref{inverse} is considered as the inverse transform.
\begin{table}[h]
\centering
\caption{Examples of frameworks with similar inverse transform definition.}
\scriptsize 
\begin{tabular}{|c|c|c|c|c|}\hline
     & $\alpha_i$ & $f(x)_i$& loss  \\ \hline
Sparse Coding     & Learned    & $\frac{<x,W_i>}{||W_i||^2}$&  $||x-\sum_i\alpha_i \frac{d f(x)_i}{dx}||^2+ \lambda ||\alpha||_1$ \\ \hline
NMF  & Learned & $<x,W_i>$&  $||x-\sum_i\alpha_i \frac{d f(x)_i}{dx}||^2$ s.t. $J_{f}(x) \geq 0$ \\ \hline
PCA  & $f(x)_i$ & $<x,W_i>$&  $||x-\sum_i\alpha_i \frac{d f(x)_i}{dx}||^2$ s.t. $J_{f}(x)$ orthonormal \\ \hline
Soft Wavelet Thresh. & $f(x)_i$ & $\max\Big(|<x,W_i>|-b_i,0\Big)sign(<x,W_i>)$ & $||x-\sum_i \alpha_i\frac{d f(x)_i}{dx}||^2$ \\ \hline     
Hard Wavelet Thresh. & $f(x)_i$ & $1_{|<x,W_i>|-b_i>0}<x,W_i>$ & $||x-\sum_i \alpha_i\frac{d f(x)_i}{dx}||^2$ \\ \hline     
Best Basis (WTA) & $f(x)_i$ & $1_{i=\argmax_i \frac{<x,W_i>}{||W_i||^2}}<x,W_i>$ &$||x-\sum_i \alpha_i\frac{d f(x)_i}{dx}||^2$ \\ \hline
k-NN &  $1$  & $1_{i=\argmax <x,W_i>-||W_i||^2/2}<x,W_i>$ & $||x-\sum_i\alpha_i\frac{df(x)_i}{dx}||^2$\\ \hline
\end{tabular}
\end{table}
\normalsize
This inversion scheme is often seen as an ill-posed problem. In fact, for the ReLU case for example, given an output activation, the negative values that have been filtered can not be reconstructed. However with the proposed method, the implied reconstruction is $0$. This corresponds to reconstruction the input on the boundary of the region defined by the current ReLU activation. As one reconstruct with values further away from $0$ into the negative side, as the reconstruction goes away from the region. Hence our propose scheme can be seen as an optimistic case where the input was assumed to lie on the boundaries of the region.

We now describe how to incorporate this loss for semi-supervised and unsupervised learning.
We first define the $R$ reconstruction loss as
\begin{equation}
    R(X_n) = ||(\frac{df_\Theta(X_n)}{dX_n})^Tf_\Theta(X_n)-X_n||^2.
\end{equation}
While we use the mean squared error, any other reconstruction loss which is differentiable can be used s.a. cosine similarity.
We also introduce an additional ''specialization'' loss defined as the Shannon entropy of the prediction:
\begin{align}
E(\hat{y}(X_n))=-\sum_{c=1}^C\hat{y}(X_n)_c\log(\hat{y}(X_n)_c),
\end{align}
pushing the output distribution to have low entropy when minimized. The need for this loss is to make the unlabeled prediction with low-entropy a.k.a predicting a one-hot representation.
As a result, we define our complete loss function as the combination of the standard cross entropy loss for labeled data denoted by $L_{CE}(Y_n,\hat{y}(X_n))$, the reconstruction loss and entropy loss as
\begin{equation}\label{eqss}
    \mathcal{L}(X_n,Y_n)=\alpha L_{CE}(Y_n,\hat{y}(X_n))1_{\{Y_n \not = \emptyset\}}+(1-\alpha)[\beta R(X_n)+(1-\beta)E(X_n)], \alpha,\beta \in [0,1]^2,
\end{equation}
The parameters $\alpha,\beta$ are introduced to form a convex combination of the losses, $2$ of them being unsupervised, with $\alpha$ controlling the ratio between supervised and unsupervised loss and $\beta$ the ration between the two unsupervised losses.

One can also use the presented loss to perform unsupervised tasks and clustering. In fact, we can see that by setting $\alpha=0$ we are in a fully unsupervised framework, and, depending on the value of $\beta$, pushing the mapping $f_\Theta$ to produce a low-entropy, clustered, representation or rather being unconstrained and simply producing optimal reconstruction. Even in a fully unsupervised and reconstruction case $(\alpha=0,\beta=1)$ the proposed framework is not similar to a deep-autoencoder for two main reasons. The first one lies in the fact that there is no greedy (per layer) reconstruction loss, only the final output is considered in the reconstruction loss. Secondly, while in both case there is parameter sharing, in our case there is also ''activation'' sharing which corresponds to the states (spline) that were used in the forward pass that will also be used for the backward one. In a deep autoencoder, the backward activation states are induced by the backward projection and will most likely not be equal to the forward ones.

\subsubsection{Semi-Sup Experiments for State-of-the-art Performances across Topologies}

We now present results of the approach on a semi-supervised tasks on the MNIST dataset where we are able to obtain state-of-the-art performances with different topologies showing the ability of the method to generalize to any topology as well as being competitive. MNIST is made of $70000$ grayscale images of shape $28 \times 28$ which is split into a training set of $60000$ images and a test set of $10000$ images.
We present results for the case with $N_L=50$ which represents the number of samples from the training set which are labeled. All the others are unlabeled.
In addition, $10$ different topologies are tested to show the portability of the approach. The DNNs architecture details as well as training procedures are detailed below.
Furthermore, we tested $\alpha \in \{0.7,0.5,0.3\}$ and found that better results were obtained on average with $\alpha = 0.7$ and thus present below all results  with $\alpha=0.7,\beta=0.5$.
Running the proposed semi-supervised scheme on MNIST led to the results presented in the table below.
We are able to obtain better results than all the standard benchmarks but one. In particular, with $N_L=50$ we are also able to outperform most method which use $100$ labels.
We used Theano and Lasagne libraries. Details on the learning procedure and the used topologies are provided in the next section, the code for reproducible results is in the attached materials. The column Sup1000 for MNIST corresponds to the accuracy after training of DNN using only supervised loss on $1000$ data, showing the great impact of the proposed solution.
\begin{table}[htbp]
\centering
\begin{tabular}{|c|c|c|c|} \hline
$\mathbf{N_L} $              & $\mathbf{50}$   & $\mathbf{100}$ & $\mathbf{Sup 1000}$\\ \hline
SmallCNNmean      &  \textbf{99.07},$(0.7,0.2)$    &      &$94.9$\\ \hline
SmallCNNmax        & $98.63$,$(0.7,0.2)$        &      &$95.0$\\ \hline
SmallUCNN        & $98.85$,$(0.5,0.2)$      &      &$96.0$\\ \hline
LargeCNNmean      &  $98.63$,$(0.6,0.5)$    &      &$94.7$\\ \hline
LargeCNNmax        & $98.79$,$(0.7,0.5)$       &      &$94.8$\\ \hline
LargeUCNN        & $98.23$,$(0.5,0.5)$      &      &$96.1$\\ \hline
Resnet2-32mean           &  \textbf{99.11},$(0.7,0.2)$       &       &$95.5$\\
\hline
Resnet2-32max           &  \textbf{99.14},$(0.7,0.2)$       &       &$94.9$\\ 
\hline
UResnet2-32           &  $98.84$,$(0.7,0.2)$   &       &$95.6$\\ 
\hline
Resnet3-16mean           &  $98.67$,$(0.7,0.2)$       &       &$95.4$\\
\hline
Resnet3-16max           &  $98.56$,$(0.5,0.5)$       &       &$94.8$\\  \hline
UResnet3-16           &  $98.7$,$(0.7,0.2)$       &       &$95.5$\\ 
\hline \hline
Improved GAN\cite{salimans2016improved}        & $97.79 \pm 1.36$ &$99.07 \pm 0.065$  &\\ \hline
Auxiliary Deep Generative Model\cite{maaloe2016auxiliary} & - &$99.04$ & \\ \hline
LadderNetwork\cite{rasmus2015semi} & - & $98.94 \pm 0.37$& \\ \hline
Skip Deep Generative Model\cite{maaloe2016auxiliary} & - & $98.68$ & \\ \hline
Virtual Adversarial\cite{miyato2015distributional} & - & $97.88$ & \\ \hline
catGAN \cite{springenberg2015unsupervised} & - &$98.61 \pm 0.28$& \\ \hline
DGN \cite{kingma2014semi} & - & $96.67 \pm 0.14$& \\ \hline \hline
DRMM\cite{nguyen2016semi} &$78.27$&$86.59$& \\ \hline
DRMM +NN penalty & $77.9$ &$87.72$& \\ \hline
DRMM+KL penalty & $97.54$ &$98.64$& \\ \hline
DRMM +KL+NN penalties       & \textbf{99.09} & $99.43$ &  \\ \hline
\end{tabular}\caption{Test Error on MNIST for $50$ and $100$ labeled examples for the used networks as well as comparison with other methods. The column Sup1000 demonstrates the raw performance of the same networks trained only with the supervised loss with $1000$ labels.}
\end{table}

In addition we present in Fig. \ref{fig:reconstruction} the reconstruction for the case $N_L=50$ as well as the test set accuracy in Fig. \ref{fig:testerror}. 
\begin{figure}[!htb]
    \centering
    \begin{minipage}{.5\textwidth}
        \centering
        \includegraphics[width=0.9\linewidth]{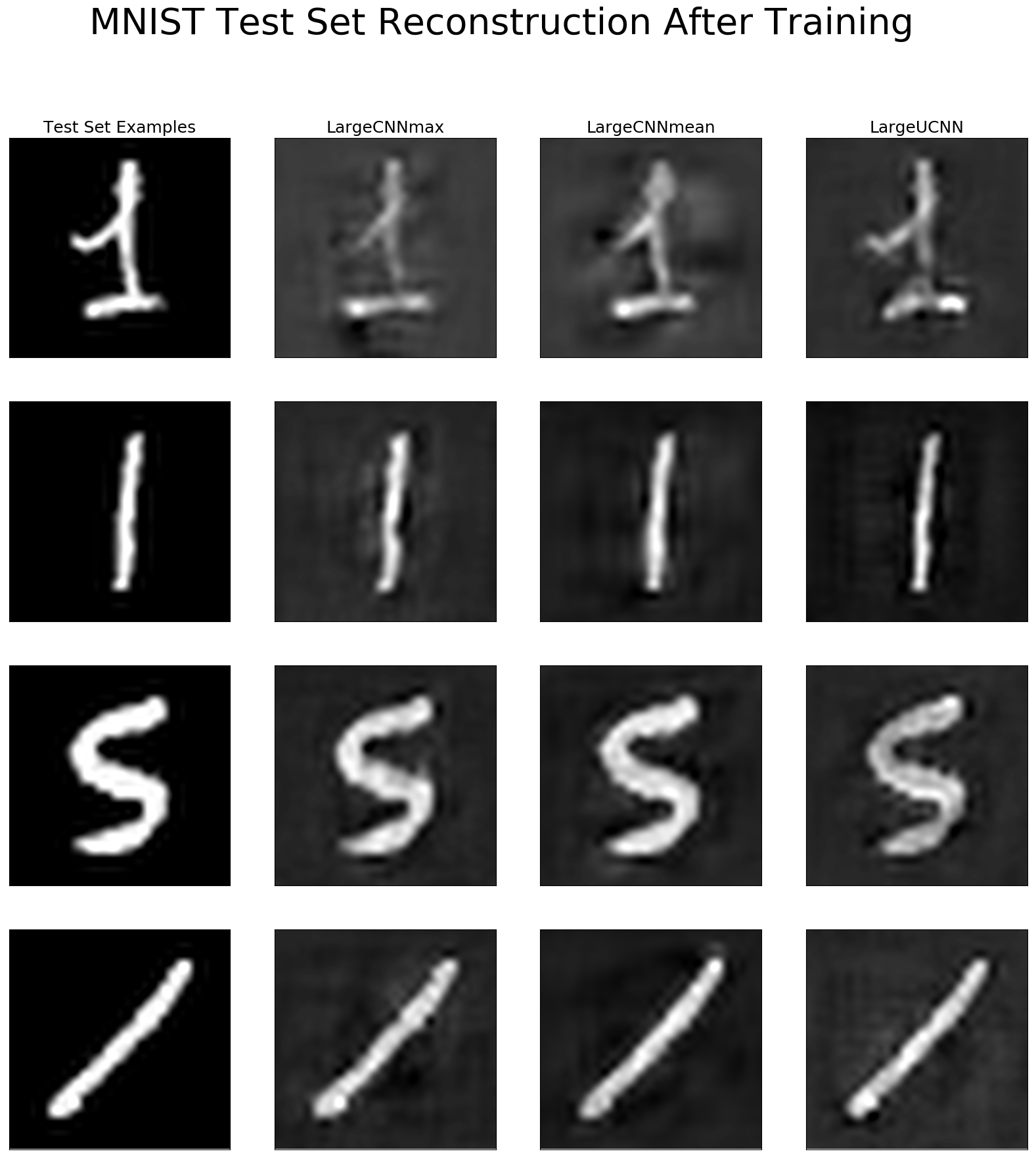}
    \end{minipage}%
    \begin{minipage}{0.5\textwidth}
        \centering
        \includegraphics[width=0.9\linewidth]{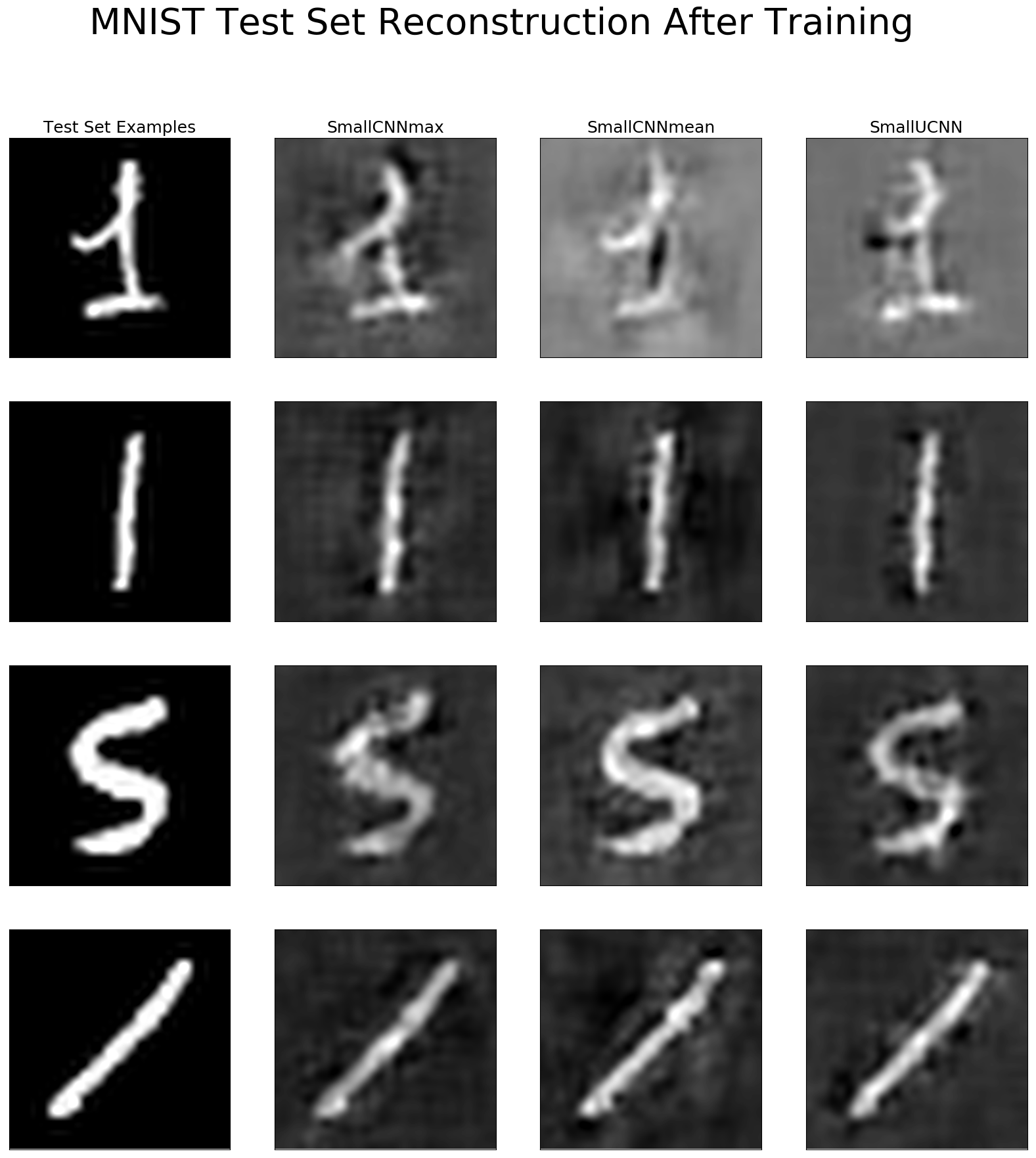}
        \label{fig:prob1_6_1}
    \end{minipage}
    \\
    \begin{minipage}{.5\textwidth}
        \centering
        \includegraphics[width=0.9\linewidth]{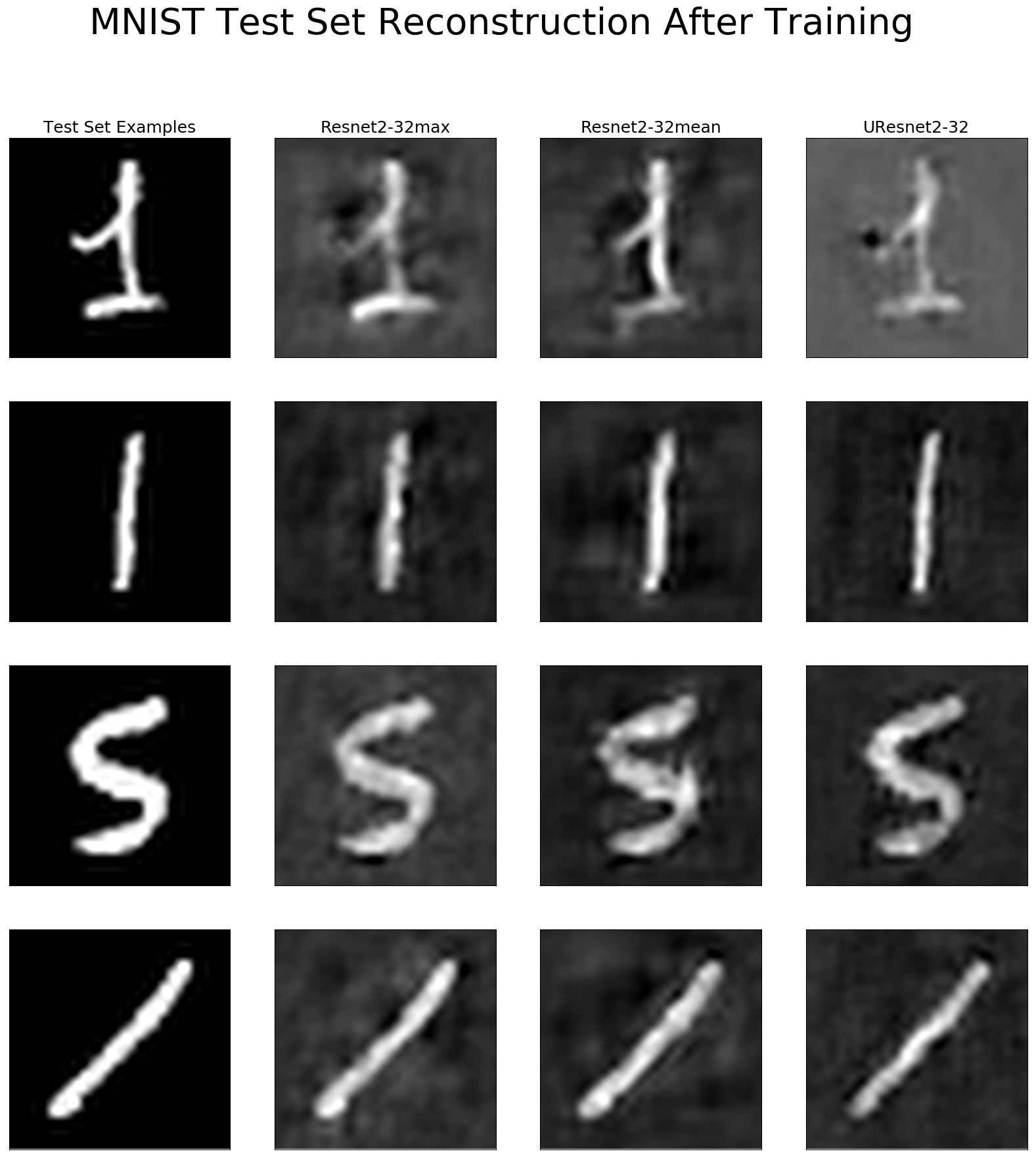}
    \end{minipage}%
    \begin{minipage}{0.5\textwidth}
        \centering
        \includegraphics[width=0.9\linewidth]{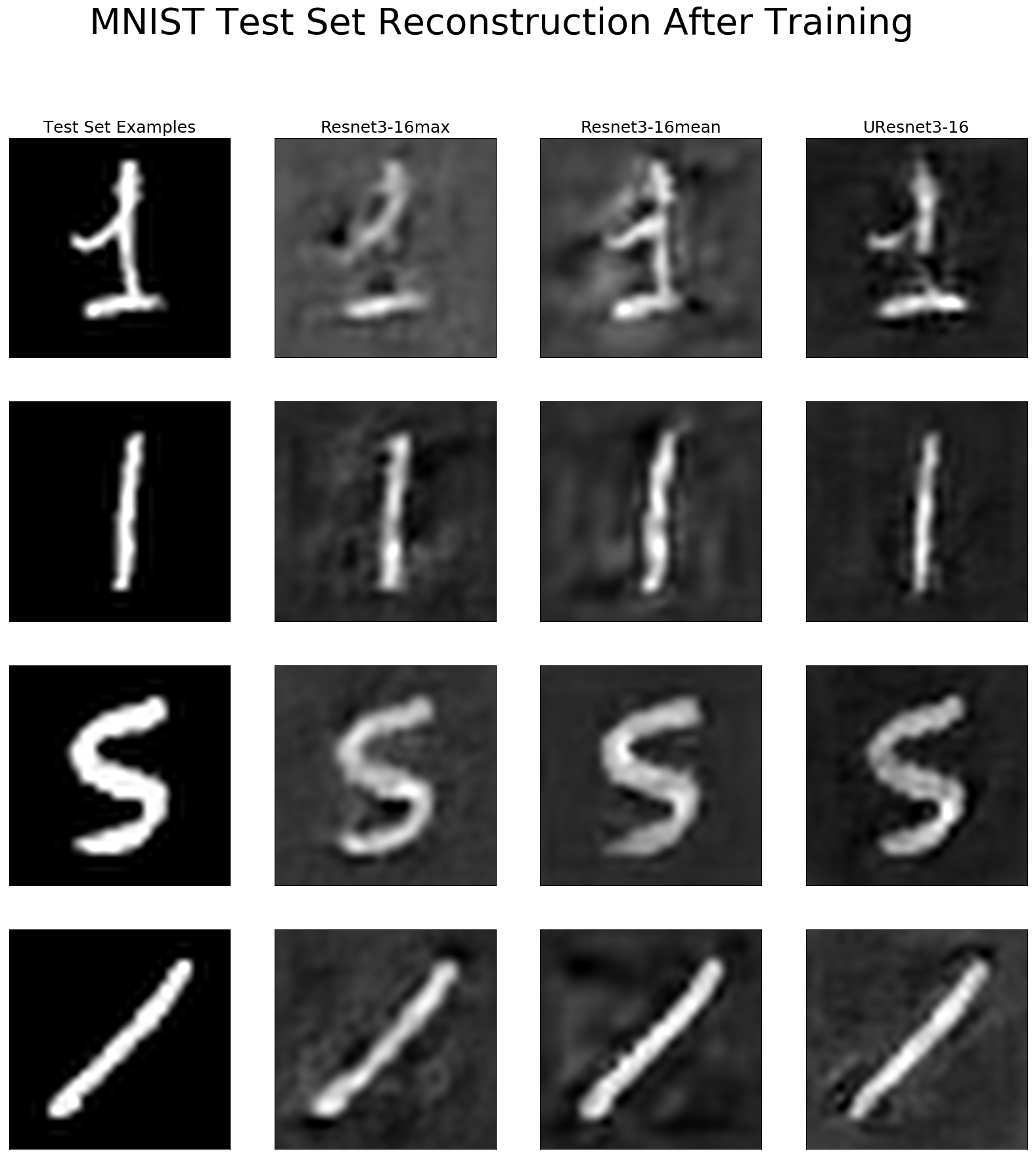}
        \label{fig:prob1_6_1}
    \end{minipage}
    \caption{Reconstruction of the studied DNN models for $4$ test set samples. In each subplot the columns from left to right correspond to: the original image, mean-pooling reconstruction, max-pooling reconstruction, inhibitor connections. Each group of subplot represents a specific topology being in clockwise order: LargeCNN, SmallCNN, Resnet2-32 and Resnet3-16.}\label{fig:reconstruction}
\end{figure}

\begin{figure}[!htb]
    \centering
    \begin{minipage}{.99\textwidth}
        \centering
        \includegraphics[width=0.95\linewidth]{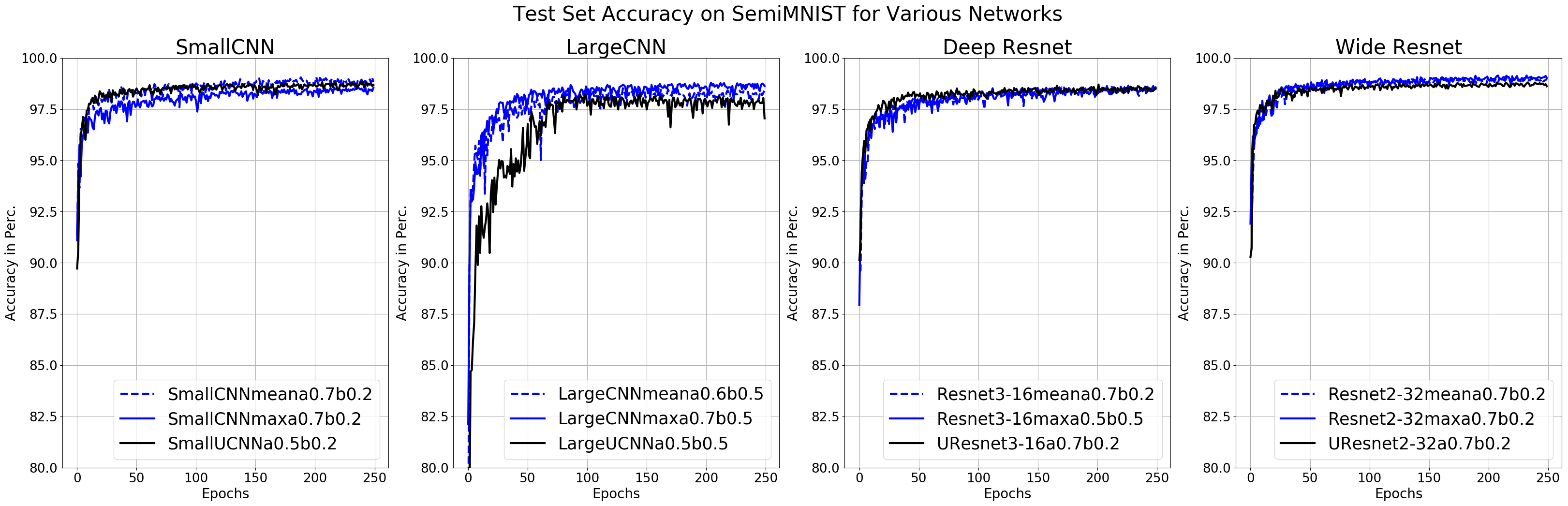}
    \end{minipage}%
    \caption{Er present in in this figures the test set accuracies during learning for all the studied topologies. In black are the DINN, and blue mean and max pooling. From left to right, the topologies are: SmallCNN, LargeCNN, Resnet3-16, Resnet2-32.}\label{fig:testerror}
\end{figure}
\subsubsection{Extensions}

Among the possible extensions, one can develop the reconstruction loss into a per layer reconstruction loss. Doing so, there is the possibility to weight each layer penalty bringing flexibility as well as meaningful reconstruction. Let define the per layer loss as 
\begin{equation}
    \mathcal{L}(X_n,Y_n)=\alpha L_{CE}(Y_n,\hat{y}(X_n))1_{\{Y_n \not = \emptyset\}}+\beta E(X_n) +\sum_{\ell=0}^{L-1} \gamma^{(\ell)}R^{(\ell)}(X_n),
\end{equation}
with 
\begin{equation}
    R^{(\ell)}(X_n) = ||(\frac{df_\Theta(X_n)}{d\bz^{(\ell)}(X_n)})^Tf_\Theta(X_n)-\bz^{(\ell)}(X_n)||^2.
\end{equation}
Doing so, one can adopt a strategy in favor of high reconstruction objective for inner layers, close to the final latent representation $\bz^{(L)}$ and lessen the reconstruction cost for layers closer to the input $X_n$. In fact, inputs of standard dataset are usually noisy, with background, and the object of interest only contains a small energy w.r.t. the total energy of $X_n$. 
Another extension would be to update the weighting while performing learning. Hence, if we denote by $t$ the position in time such as the current epoch or batch, we now have the previous loss becoming
\begin{equation}
    \mathcal{L}(X_n,Y_n;\Theta)=\alpha(t) L_{CE}(Y_n,\hat{y}(X_n))1_{\{Y_n \not = \emptyset\}}+\beta(t) E(X_n) +\sum_{\ell=0}^{L-1} \gamma^{(\ell)}(t)R^{(\ell)}(X_n).
\end{equation}
One approach would be to impose some deterministic policy based on heuristic such as favoring reconstruction at the beginning to then switch to classification and entropy minimization. Finer approaches could rely on an explicit optimization schemes for those coefficients. One way to perform this, would be to optimize the loss weighting coefficients $\alpha,\beta,\gamma^{(\ell)}$ after each batch or epoch by backpropagation on the updates weights. Let define
\begin{align}
    \Theta(t+1)=\Theta(t) - \lambda\frac{d L(X_n,Y_n)}{d\Theta},
\end{align}
representing an generic iterative update based on a given policy such as gradient descent. One can thus adopt the following update strategy for the hyper-parameters as
\begin{align}
    \gamma^{(\ell)}(t+1) = \gamma^{(\ell)}(t)-\frac{d L(X_n,Y_n;\Theta(t+1))}{d\gamma^{(\ell)}(t)},
\end{align}
and so for all hyper-parameters. Another approach would be to use adversarial training to update those hyper-parameters where both update cooperate trying to accelerate learning.

\newpage





\section{Conclusion}
We presented a natural reformulation of deep neural network as composition of adaptive partitioning splines and in general linear spline operators. By doing so we have been able to explicitly determine the optimal network weights, their impact for adversarial example, generalization, memorization. From this we built an intuitive and generic method to invert arbitrary networks giving rise to semi-supervised and unsupervised application. We obtained state-of-the-art performances on MNIST with CNNs and Resnets and provided supplemental experiments highlighting the ability of the introduced reconstruction error to regularize and improve generalization. We also proposed a simple criterion to judge the quality of a network and its initialization based on template analysis allowing fast topology search. Finally, by bridging many fields such as template matching, adaptive partitioning splines, and depe neural networks, we hope to allow further and deeper analysis of all the presented results and insights.
\bibliography{ref}
\bibliographystyle{apalike}
\clearpage

\appendix

\section{Extra Material}

\subsection{Spline Operator}
In this section we first review the literature on splines in order to ease the introduction of multivariate spline functions which will be the building block of the spline operators. We then discuss properties of the defined mathematical objects s.t. the next section on the rewriting of deep neural networks via spline operators become intuitive.
\subsubsection{Spline Functions[FINI]}\label{splineoperator}
Throughout this section, the main reference comes from the formidable monograph \cite{schoenberg1964interpolation} revisited ad extended in \cite{schumaker2007spline}. We first present standard univariate splines constructed via piecewise polynomial functions. Let's now present the beautiful development of spline functions.

We first define the space of {\em univariate polynomials of order} $m$ as
\begin{align}
    \mathcal{P}_m=&\{p:p(x) = \sum_{i=1}^mc_ix^{i-1},c_1,\dots,c_m,x\in \mathbb{R}\},\\
    =&{\rm span}\{1,x,\dots,x^{m-1}\},
\end{align}
note that the order $m$ is equal to the number of degrees of freedom, and is the degree of the polynomial minus one. 
A specific element of this set denoted by $p_m$ is thus fully determined by its specific coefficients $\mathbf{c}=(c_1,\dots,c_m) \in \mathbb{R}^m$.
\begin{defn}
we denote the polynomial of order $m$ with parameters $\mathbf{c} \in \mathbb{R}^m$ by
\begin{align}
    p_m[\textbf{c}]:\mathbb{R} &\rightarrow \mathbb{R}\nonumber\\
    x & \rightarrow \sum_{i=1}^mc_ix^{i-1}.
\end{align}
\end{defn}
While a polynomial $p_m[\textbf{c}]$ acts with the same set of parameters $\mathcal{c}$ across all its input space, it is possible to define a partition $\Omega$ of the input space on which ''local'' polynomials can act, hence depending on the region of an input, not necessarily the same polynomial with parameters $\mathcal{c}$ will be used for the mapping, this defines piecewise polynomials.
The set $\Omega$ being a collection of $R$ regions $\omega_r$ of the input space forms a partition of $[a,b] \subset \mathbb{R}$ s.t. 
\begin{align}
    &\Omega = \{\omega_r,i=1,\dots,R\},\nonumber\\
    &\omega_i\cap \omega_j =\emptyset ,\forall i\not = j,\nonumber\\
    &\cup_{r=1}^R \omega_r=[a,b],
\end{align}
where the partition can be extended to the real line by adding the elements $]-\infty,a[$ and $]b,\infty[$.
From this, we denote the space of piecewise polynomials of order $\textbf{m}=(m_1,\dots,m_R) \in \mathbb{Z}^R$ as
\begin{equation}
    \mathcal{P}\mathcal{P}_\textbf{m}(\Omega)=\{f : \forall x \in \omega_r,\exists p_r \in \mathcal{P}_{m_r}|f(x)=p_r(x),r=1,\dots,R\}.
\end{equation}
It is clear that the number of degrees of freedom is thus $m_r$ for region $r$ and thus for the piecewise polynomial it is equal to $\sum_{r=1}^R m_r$. We denote this collection of coefficients as $\mathbf{c}$ and the coefficients specific to region $r$ by $\mathbf{c}_r\in \mathbb{R}^{m_r}$.

\begin{figure}
    \centering
    \includegraphics[width=6in]{./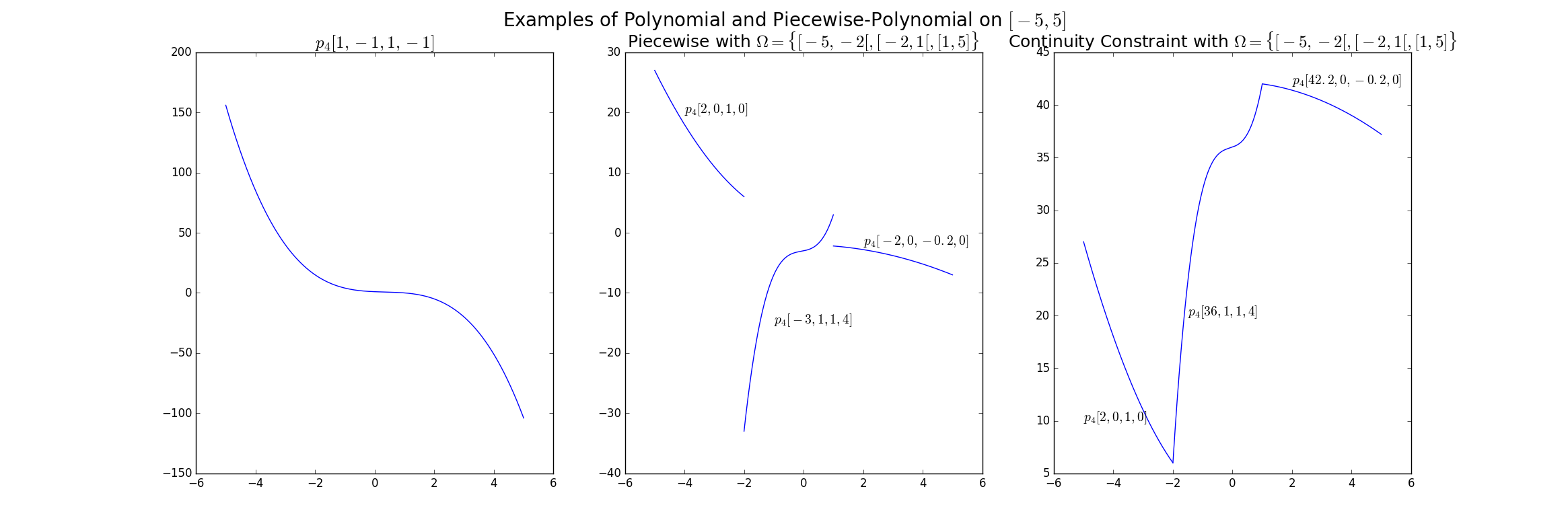}
    \caption{Illustrative examples of polynomial/piecewise-polynomial/continuous piecewise-polynomial.}
    \label{fig_p}
\end{figure}

\begin{defn}
We denote the piecewise polynomial of order $\textbf{m}$ with parameters $\mathbf{c}\in \mathbb{R}^{\sum_{r=1}^R m_r}$, a partition of $O \subset \mathbb{R}$ defined by $\Omega$ with $Card(\Omega)=R$ by
\begin{align}
    pp_\textbf{m}[\mathbf{c},\Omega]:\mathbb{R}&\rightarrow \mathbb{R}\nonumber\\
    x&\rightarrow \sum_{r=1}^Rp_{m_r}[\mathbf{c}_r](x)1_{\{x \in \omega_r\}}=\sum_{r=1}^R\left(\sum_{i=1}^{m_r}\mathbf{c}_{r,i}x^{i-1}\right)1_{\{x \in \omega_r\}}.
\end{align}
\end{defn}
This defines a mapping with which different polynomials can be ''activated'' depending on the input location and the partition, hence the piecewise property.

We can now define the space of splines as a subset of this piecewise polynomial space by adding a ''regularity'' constraint between polynomials of neighbouring/adjacent regions. If we denote by $x_r,r=0,\dots,R$ the ''knots'' of the partitions with $x_0=a$ and $x_R=b$ then the space of polynomial splines is defined as
\begin{align}
    S(\mathcal{P}_m,\mathcal{M},\Omega)=&\{f:\forall x \in \omega_r,\exists p_r \in \mathcal{P}_{m},f(x)=p_r(x),D^jp_{r}(x_r)=D^jp_{r+1}(x_r),\nonumber\\
    &j=0,\dots,m-1-M_i,r=1,\dots,R\}
\end{align}
with $\mathcal{M}=(M_1,\dots,M_K)$ the multiplicity vector which is the ''smoothness'' conditioning of the spline and $D^j$ the differential operator of order $j$. Two simple examples are presented below :
\begin{align*}
    &S(\mathcal{P}_m,(m,\dots,m),\Omega)=\mathcal{P}\mathcal{P}_{(m,\dots,m)}(\Omega),\\
    &S(\mathcal{P}_m,(1,\dots,1),\Omega)=\mathcal{P}\mathcal{P}_{(m,\dots,m)}(\Omega)\cap C^{m-2}([a,b]),
\end{align*}
the first is the least constraint spline while the latter is the most constraint one in term of boundary conditions.

\subsubsection{Multivariate spline functions[FINI]}

We present the multivariate polynomials and splines allowing to process multivariate inputs of dimension $K\geq 1$, with a slight difference form the literature\cite{schumaker2007spline,schoenberg1964interpolation,chui1988multivariate,de1983approximation} where we allow non-rectangular regions for general spaces of dimension $d$, as most of the development of irregular grids focus on $2/3$-dimensional spaces for PDE specific applications.
We thus omit here the introduction to the known tensor multivariate splines as they are constructed with rectangular regions and thus will not be used in the later sections. From this, all the tools will be made clear for us to present the next section which consists of ''adapting'' the spline and piecewise multivariate polynomial terminology and notations for use in deep learning via the development of the spline operator, for mappings going to $\mathbb{R}^K,K>1$.

\begin{defn}
Let first define the multivariate integer set as
\begin{equation}
    \mathbb{N}^d=\{\alpha: \alpha = (\alpha_1,\dots,\alpha_d),\alpha_i\in \mathbb{N},i=1,\dots,d\}.
\end{equation}
\end{defn}
Using this notation, we denote the space of multivariate polynomials , given $\Lambda \subset \mathbb{N}^d$ as
\begin{align}
    \mathcal{P}^d_\Lambda =& {\rm span}\{x^\lambda:\lambda \in \Lambda\},\\
    =&\{p : p(x)=\sum_{\lambda \in \Lambda}c_\lambda x^\lambda,c_\lambda \in \mathbb{R}\},
\end{align}
where we denoted $x^\lambda=\prod_{i=1}^dx_i^{\lambda_i}$ with $x_i$ the $i^{th}$ input dimension of $x$, we also denote by $\mathbf{c}$ the ordered collection of the $c_\lambda$. The collection $\Lambda$ thus holds all the possible configuration of power for each of the input dimension and their configuration, as each element $\lambda \in \Lambda$ defines uniquely a combination of some power of the input dimensions. For example in the $3$-dimensional setting, $\lambda=(2,1,0)$ leads to $x^\lambda=x_1^2x_2$.

\begin{defn}
We rewrite the multivariate polynomial acting on $\mathbb{R}^d$ with parameters $\mathbf{c}\in \mathbb{R}^{Card(\Lambda)}$ and order $\Lambda \subset \mathbb{Z}^d_+$ by
\begin{align}
    p_\Lambda^d[\mathbf{c}]:\mathbb{R}^d&\rightarrow \mathbb{R}\nonumber\\
    x&\rightarrow \sum_{\lambda \in \Lambda}c_\lambda x^\lambda.
\end{align}
\end{defn}
Note that a particular case occurs given a tuple $\mathbf{m}=(m_1,\dots,m_d)$ with the property that $\Lambda_\textbf{m} = \{\alpha : 0\leq \alpha_i\leq m_i,i=1,\dots,d\}=\otimes_{i=1}^{d} \{0,\dots,m_i\}$, we also denote by $\Lambda_m$ the case where $\textbf{m}=(m,\dots,m)$. For example with $\textbf{m}=(2,2)$ we have the basis functions $\mathcal{P}^d_{\Lambda_2}=\{1,x,y,xy\}$ which is a bilinear polynomial.
We now develop the piecewise version of the multivariate polynomial space.
Given an arbitrary partition $\Omega$ of $O\subset \mathbb{R}^d$ and corresponding $\Lambda_r \subset \mathbb{N}^d$ we define a piecewise multivariate polynomial
\begin{align}
    \mathcal{P}\mathcal{P}^d_\Lambda =\{f : \forall \omega \in \Omega,\forall x \in \omega, \exists p \in \mathcal{P}^d_{\Lambda_r}: f(x)=p(x)\},
\end{align}
\begin{defn}
We rewrite the piecewise multivariate polynomial acting on $O\subset \mathbb{R}^d$ with $\Omega$ a partition of $O$, corresponding $\Lambda_r \subset \mathbb{Z}^d_+$ and with parameters $\mathbf{c}\in \mathbb{R}^{\sum_{r=1}^R Card(\Lambda_r)}$ by
\begin{align}
    pp_\Lambda^d[\mathbf{c},\Omega]:\mathbb{R}^d&\rightarrow \mathbb{R}\nonumber \\ 
    x&\rightarrow \sum_{r=1}^R\left(\sum_{\lambda \in \Lambda_r} c_{r,\lambda} x^\lambda\right)1_{\{x \in \omega_r\}}.
\end{align}
\end{defn}
From this set of piecewise-polynomial functional, we can now define the space of splines by adding a smoothness constraints between neighboring regions. As in the univariate case we introduce $\mathcal{M}$ the tuple of regularization coefficients forcing for each neighboring regions to have piecewise polynomials with same first derivatives, up to the order specified by the $\mathcal{M}$ entry corresponding to it. As our work focus on two simple cases of regularization, we present the most and least constraint multivariate splines, with respectively $\mathcal{M}=(m,\dots,m):=\mathcal{M}_m$ and $\mathcal{M}=(1,\dots,1):=\mathcal{M}_1$ as
\begin{align}
    &\mathcal{S}(\mathcal{P}^d_{\Lambda_m};\Omega;\mathcal{M}_1)=\mathcal{P}\mathcal{P}^d_{\Lambda_m}(\Omega)\cap \mathbb{C}^{m-2}(O),\\
    &\mathcal{S}(\mathcal{P}^d_{\Lambda_m};\Omega;\mathcal{M}_m)=\mathcal{P}\mathcal{P}^d_{\Lambda_m}(\Omega).
\end{align}
where we remind that $\Lambda_m=\otimes_{i=1}^d \{0,\dots,m\}$.
Since we have $\mathcal{S}(\mathcal{P}^d_{\Lambda_m};\Omega;\mathcal{M}_1) \subset \mathcal{S}(\mathcal{P}^d_{\Lambda_m};\Omega;\mathcal{M}_m)$ we now present the general formulation and results for the latter as other cases are ''restricted'' cases than can be derived from it, we thus now denote a multivariate spline simply by $\mathcal{S}(\mathcal{P}^d_{\Lambda_m};\Omega)$. Given a spline $s^d \in \mathcal{S}(\mathcal{P}^d_{\Lambda_m};\Omega)$ we denote by $\mathbf{c}$ the parameters of the splines, namely, the coefficients of the $m$-order polynomials for each region. 

\begin{defn}
We denote the multivariate spline for the case $s \in \mathcal{S}(\mathcal{P}^d_{\Lambda_m};\Omega)$ with parameters $\mathbf{c} \in \mathbb{R}^{m \times Card(\Omega)}$ given a partition $\Omega$ of $O\subset \mathbb{R}^d$ by
\begin{align}
    s[\mathbf{c},\mathcal{P}^d_{\Lambda_m},\Omega]:\mathbb{R}^d&\rightarrow \mathbb{R}\\
    x &\rightarrow s[\mathbf{c},\mathcal{P}_{\Lambda_m}^d,\Omega](x)=pp_{\Lambda_m}^d[\mathbf{c},\Omega](x)
\end{align}
where this definition is based on the use of $\mathcal{M}_m$ as defined above.
\end{defn}
If we denote by $\mathcal{I}[\Omega_k][(x)$ the index of the region in which $x$ belongs according to the $\Omega_k$ partitioning as
 \begin{align}
     \mathcal{I}[\Omega_k]:\mathbb{R}^d &\rightarrow \{1,\dots,Card(\Omega_k)\}\\
     x&\rightarrow \sum_{i=1}^{Card(\Omega_k)}i*1_{\{x \in \omega_{k,i}\}},
 \end{align}
 where $\omega_{k,i}$ is the $i^{th}$ region of $\Omega_k$. It is clear that given an input $x$. We can thus simplify notations by introducing this region selection operator coupled with the multivariate spline now defined as
\begin{align}
    s[\mathbf{c},\mathcal{P}^d_{\Lambda_m},\Omega]:\mathbb{R}^d&\rightarrow \mathbb{R}\\
    x &\rightarrow s[\mathbf{c},\mathcal{P}^d_{\Lambda_m}]_{\mathcal{I}[\Omega]}(x).
\end{align}

\subsubsection*{Linear Multivariate Splines[FINI]}
We describe briefly a special case in which all the local mappings are linear. It will become of importance in the next section for the introduction of affine spline operators and neural networks.
We first consider a special multivariate polynomial defined with total order $m$ and denoted by $\mathcal{P}^d_{|m|}$ where $|m|$ denotes the total order property as opposed to the standard polynomials of order $m$. It is explicitly defined as the space
\begin{defn}
\begin{equation}
    \mathcal{P}^d_{|m|}={\rm span}\{\prod_{i=1}^dx_i^{\alpha_i},\alpha \in \mathbb{Z}_+^d,|\alpha|<|m|\},
\end{equation}
with $m \in \mathbb{N}$ and $|\alpha|=\sum_{i=1}^d\alpha_i$.
\end{defn} 
As an example, if we consider the $3D$ space with $x=(x_1,x_2,x_3)$ the space of linear polynomials as
\begin{align*}
    \mathcal{P}^3_{|2|}&={\rm span}\{1,x_1,x_2,x_3\},\text{$3D$ polynomial of total order $2$}
    \end{align*}
    as opposed to the nonlinear but multi-linear case of
    \begin{align*}
\mathcal{P}^3_{2}&={\rm span}\{1,x_1,x_2,x_3,x_1x_2,x_1x_3,x_2x_3\},\text{$3D$ polynomial of order $2$}.
\end{align*}
As we will focus on linear cases in the remaining of the study we now simplify notations for the latter.
\begin{defn}
A linear $K$ dimensional polynomial $p\in \mathcal{P}^K_{|2|}$ with coefficients $\textbf{a} \in \mathbb{R}^K,b\in \mathbb{R}$ is denoted as
\begin{equation}
    p^K[\textbf{a},b](x)=\langle \textbf{a},x\rangle+b.
\end{equation}
\end{defn}
From this liner polynomial we define the linear piecewise polynomials according to the last section notations. Given a partition of $O \subset \mathbb{R}^d$ defined as $\Omega=\{\omega_i,i=1,\dots,I\}$ s.t. $\cup_{i=1}^I\omega_i = O$ and $\omega_i \cap \omega_j=\emptyset,\forall i\not = j$, the space of linear piecewise polynomials is defined as
\begin{equation}
    \mathcal{P}\mathcal{P}^d_{|2|}(\Omega)=\{f : \forall x \in \omega_i,\exists p_i \in \mathcal{P}^d_{|2|}|f(x)=p_i(x),i=1,\dots,K\}.
\end{equation}
From this, splines are defined in a similar fashion as in the last section where we have
\begin{align}
    &\mathcal{S}(\mathcal{P}^d_{|2|};\Omega;\mathcal{M}_1)=\mathcal{P}\mathcal{P}^d_{|2|}(\Omega)\cap \mathbb{C}^{m-2}(O),\\
    &\mathcal{S}(\mathcal{P}^d_{|2|};\Omega;\mathcal{M}_2)=\mathcal{P}\mathcal{P}^d_{|2|}(\Omega).
\end{align}
We now focus for the case of total order $2$ and we omit the constraint $\mathcal{M}_m$ as we present the general case, any other one is a restriction of the coefficients to fulfill the smoothness boundary condition.
\begin{defn}
We denote multivariate linear splines the case with total order $|2|$ denoted by $\mathcal{S}(\mathcal{P}^d_{|2|};\Omega)$ given the partition $\Omega$ of $O\subset \mathbb{R}^d$ with $Card(\Omega)=R$ and with parameters $(\textbf{a}_r,b_r)\in \mathbb{R}^d\times \mathbb{R},r=1,\dots,R$ by using the last proposition as
\begin{align}
    s[(\textbf{a}_r,b_r)_{r=1}^R;\mathcal{P}^d_{|2|},\Omega](x)=&\sum_{r=1}^R \left(\langle \textbf{a}_r,x\rangle + b_r\right)1_{\{x \in \omega_r\}}\\
    =&\textbf{a}[x]^Tx+b[x],
\end{align}
where $a_r$ represent the slope and $b_r$ the intercept for each region. The input dependant selection is abbreviated via
\begin{equation}
    \textbf{a}[x]=\textbf{a}_{\mathcal{I}[\Omega](x)} \text{ and }b[x]=b_{\mathcal{I}[\Omega](x)}.
\end{equation}
\end{defn}

From now on, the term multivariate is dropped as the mappings will be explicit.
We define a \textbf{{\em local} linear spline} function as a special case where the support of $\textbf{a}_r$ is constrained, in the sense that some dimensions are constraint to be $0$. This forces $\textbf{a}_r$ to only act on a sub-part of the input $x$, thus the local property where locality is again in the dimension domain as opposed to the input domain. This ''$0$-constraint'' is denoted by the collection of indices on which we enforce it om each of the $\textbf{a}_r$:
\begin{align}
    \Gamma(\textbf{a}_r):&=\{i \in \{1,\dots,d\} : [\textbf{a}_r]_i\equiv 0\},
\end{align}
where $[\textbf{a}_r]_i$ denotes the $i^{th}$ dimension of $\textbf{a}_r$ and $[\textbf{a}_r]_i\equiv 0$ represents the presence of the $0-$constraint for the given dimension. We also denote by $\Gamma^C(\textbf{a}_r)$ the unconstrained part where the $C$ upperscript stands for contrapose. In fact, those two collections are complementary w.r.t to the list of indices and thus given one the other is uniquely defined as $\Gamma^C(\textbf{a}_r)=\{1,\dots,D\} \backslash \Lambda(\textbf{a}_r)$ and vice-versa.
Finally, for further precision, we call a \textbf{{\em uniform local} linear spline}
the case where $\Lambda(\textbf{a}_r)=\Lambda(\textbf{a}_p),\forall r\not =p$, thus, the dimensions of $x$ on which the linear transform acts does not depend on the partition, they are ''shared''. 
This constraint allows one to control the way the mapping ''sensitivity'' to the input space dimensions, in fact by setting the constraints one can ensure that some changes in the input for those constraint dimensions will not affect the output.
In fact, given a uniform local linear spline with constraint $\Gamma(\textbf{a}_1)$ we have
\begin{align}
    \forall \epsilon \in \mathbb{R}^d, [\epsilon]_i=0,i \in \Gamma(\textbf{a}_1),
    s[(\textbf{a}_r,b_r)_{r=1}^R;\Omega](x)=s[(\textbf{a}_r,b_r)_{r=1}^R;\Omega](x+\epsilon)
\end{align}

\begin{align}
\mathbfcal{S}\left[\left(s[\textbf{c}_k,\mathcal{P}^d_{\Lambda_{m_k}},\Omega_k]\right)_{k=1}^K\right](x)=&\sum_{\alpha \in \bm{\alpha}} \left[ 
    \begin{matrix}
    s[\textbf{c}_1,\mathcal{P}^d_{\Lambda_{m_1}},\Omega_1](x)\\
    \vdots \\
    s[\textbf{c}_K,\mathcal{P}^d_{\Lambda_{m_K}},\Omega_K](x)
    \end{matrix}
    \right]1_{\{x\in \omega_\alpha\}}\\
    =&\sum_{\alpha \in \bm{\alpha}} \left[ 
    \begin{matrix}
    \sum_{\lambda \in \Lambda_{m_1}} c_{\alpha_1,\lambda} x^\lambda\\
    \vdots \\
    \sum_{\lambda \in \Lambda_{m_K}} c_{\alpha_K,\lambda} x^\lambda
    \end{matrix}
    \right]1_{\{x\in \omega_\alpha\}}
\end{align}

\subsection{Dataset Memorization Proof}

 \begin{theorem}
 Assuming all templates denoted by $A[X_n]_c,c=1,\dots,C$ have a norm constraints as $\sum_{c=1}^C ||A[X_n]_c||^2\leq K,\forall X_n$ and that all the inputs have identity norm $||X_n||=1,\forall x$ then the unique optimal templates are
 \begin{equation}
     A^*[X_n]_c=\left\{ \begin{array}{l}
          \sqrt{\frac{C-1}{C}K}X_n,\iff c=Y_n\\
          -\sqrt{\frac{K}{C(C-1)}}X_n,\text{ else}
     \end{array}\right.
 \end{equation}
 \end{theorem}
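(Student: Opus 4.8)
The plan is to treat this as a convex optimization problem and pin down its unique solution through the Karush--Kuhn--Tucker (KKT) conditions. Dropping the biases as the statement permits, the loss as a function of the templates $(A_1,\dots,A_C) := (A[X_n]_1,\dots,A[X_n]_C)$ is $F(A_1,\dots,A_C) = -\langle A_{Y_n},X_n\rangle + \log\sum_{c=1}^C e^{\langle A_c,X_n\rangle}$, which is convex (a linear term plus a log-sum-exp of linear forms), and the feasible set $\{\sum_c\|A_c\|^2\le K\}$ is a Euclidean ball, hence compact and convex with nonempty interior. So a minimizer exists, Slater's condition holds, every minimizer is a KKT point, and conversely every KKT point is a global minimizer. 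It therefore suffices to solve the KKT system and show it has a unique solution, which will turn out to be the claimed $A^*$.

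Next I would write the stationarity conditions with a multiplier $\mu\ge 0$ for the norm constraint. Setting $p_c = e^{\langle A_c,X_n\rangle}/\sum_k e^{\langle A_k,X_n\rangle}$ and $Z=\sum_k e^{\langle A_k,X_n\rangle}$, these read $-X_n+p_{Y_n}X_n+2\mu A_{Y_n}=0$ and $p_cX_n+2\mu A_c=0$ for $c\neq Y_n$. From the second equation $\mu=0$ is impossible, since it would force $p_cX_n=0$ while $p_c>0$ and $X_n\neq 0$; hence $\mu>0$, and complementary slackness makes the norm constraint active. Both equations also force $A_c$ to be parallel to $X_n$, so writing $A_c=t_cX_n$ with $\|X_n\|=1$ gives $t_{Y_n}=\tfrac{1-p_{Y_n}}{2\mu}>0$ and $t_c=-\tfrac{p_c}{2\mu}<0$ for $c\neq Y_n$; summing these over $c$ yields $\sum_c t_c = 0$ automatically.

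The remaining point is to show that all the wrong-class coefficients coincide. Substituting $p_c = e^{t_c}/Z$ into $t_c=-p_c/(2\mu)$ and setting $u_c=-t_c>0$ gives $u_c e^{u_c} = \tfrac{1}{2\mu Z}$, the same positive constant for every $c\neq Y_n$; since $v\mapsto ve^v$ is strictly increasing on $[0,\infty)$, all $u_c$ are equal, say to $u$. Then $\sum_c t_c=0$ forces $t_{Y_n}=(C-1)u$, and the active constraint $(C-1)^2u^2+(C-1)u^2=K$ gives $u=\sqrt{K/(C(C-1))}$, whence $t_{Y_n}=\sqrt{(C-1)K/C}$. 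This is exactly the asserted solution $A^*[X_n]_c=t_cX_n$; a direct check confirms feasibility, $\sum_c\|A^*_c\|^2 = \tfrac{(C-1)K}{C}+(C-1)\tfrac{K}{C(C-1)}=K$. Being the only KKT point, it is the unique global minimizer. (Alternatively, one can simply verify that this candidate satisfies the stationarity equations, i.e.\ is a fixed point of the gradient update of Section~\ref{subsub:colinear}, and invoke convexity for optimality; but this alone would not settle uniqueness.)

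I expect the main obstacle to be precisely the uniqueness step: a priori the KKT system could admit asymmetric solutions, and the convexity of $F$ does not rule this out because $F$ is not strictly convex (it is flat along the all-ones direction in logit space). The monotonicity-of-$v e^v$ (Lambert-$W$-type) argument is what closes this gap, and the identity $\sum_c t_c=0$ — itself a consequence of the KKT equations — is what then fixes the correct-class coefficient. A secondary detail worth spelling out is why the constraint must be active and why $\mu>0$: both follow from strict positivity of the softmax together with $X_n\neq 0$.
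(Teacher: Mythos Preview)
Your proof is correct and, like the paper's, proceeds via the KKT conditions for the convex program of minimizing the cross-entropy over the Euclidean ball $\sum_c\|A_c\|^2\le K$. The difference is in how the two arguments handle the system once it is written down. The paper first eliminates the multiplier by taking $\sum_c A_c^T\nabla_{A_c}\mathcal{L}=0$, then \emph{assumes} the symmetry $A_i=A_j$ for all $i,j\neq Y_n$ (``we now leverage the fact that\dots'') and simply plugs the candidate $A^*$ into the resulting stationarity equations to verify they vanish. Your route instead \emph{derives} that symmetry: from $t_c=-p_c/(2\mu)$ and $p_c=e^{t_c}/Z$ you obtain $u_c e^{u_c}=\text{const}$ for every wrong class, and strict monotonicity of $v\mapsto ve^v$ on $[0,\infty)$ forces all $u_c$ equal. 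This extra step is exactly what the paper's proof is missing to justify uniqueness, since (as you note) the log-sum-exp objective is convex but not strictly convex, so ``convex on a convex set $\Rightarrow$ unique extremum'' is not automatic. Both arguments land on the same KKT point; yours additionally shows it is the \emph{only} one.
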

 \begin{proof}
 We aim at minimizing the cross-entropy loss function for a given input $X_n$ belonging to class $Y_n$, we also have the constraint $\sum_{c=1}^C||A[X_n]_c||^2\leq K$. The loss function is thus convex on a convex set, it is thus sufficient to find a extremum point. We denote the augmented loss function with the Lagrange multiplier as
 \begin{align*}
     l(A[X_n]_1,\dots,A[X_n]_C,\lambda)=-\langle A[X_n]_{Y_n},X_n\rangle+\log \left(\sum_{c=1}^Ce^{\langle A[X_n]_c,X_n\rangle}\right)-\lambda \left(\sum_{c=1}^C ||A[X_n]_c ||^2-K\right).
 \end{align*}
 The sufficient KKT conditions are thus
 \begin{align*}
     &\frac{d l}{d A[X_n]_1}=-1_{\{Y_n=1\}}x+\frac{e^{\langle A[X_n]_1, x\rangle }}{\sum_{c=1}^C e^{\langle A[X_n]_c, x\rangle }}x-2\lambda A[X_n]_1= \underline{0}\\
     & \vdots \\
     &\frac{d l}{d A[X_n]_C}=-1_{\{Y_n=1\}}x+\frac{e^{\langle A[X_n]_C, x\rangle }}{\sum_{c=1}^C e^{\langle A[X_n]_c, x\rangle }}x-2\lambda A[X_n]_C=\underline{0}\\
     &\frac{\partial l}{\partial \lambda}=K-\sum_{c=1}^C ||A[X_n]_c ||^2=0\\
 \end{align*}
 We first proceed by identifying $\lambda$ as follows
 \begin{align*}
 \left.
 \begin{array}{l}
     \frac{d l}{d A[X_n]_1}= \underline{0}\\
      \vdots \\
     \frac{d l}{d A[X_n]_C}=\underline{0}\\
 \end{array}\right\} &\implies \sum_{c=1}^CA[X_n]_c^T\frac{d l}{d A[X_n]_c}=0\\
 &\implies -\langle A[X_n]_{Y_n},x\rangle +\sum_{c=1}^C\frac{e^{\langle A[X_n]_c, x\rangle }}{\sum_{c=1}^C e^{\langle A[X_n]_c, x\rangle }}\langle A[X_n]_c,x\rangle-2\lambda \sum_{c=1}^C||A[X_n]_c||^2=0\\
 &\implies \lambda = \frac{1}{2K}\left(\sum_{c=1}^C\frac{e^{\langle A[X_n]_c, x\rangle }}{\sum_{c=1}^C e^{\langle A[X_n]_c, x\rangle }}\langle A[X_n]_c,x\rangle-\langle A[X_n]_{Y_n},x\rangle \right)
 \end{align*}
 Now we plug $\lambda$ in $\frac{d l}{d A[X_n]_k},\forall k=1,\dots,C$ 
 \begin{align*}
     \frac{d l}{d A[X_n]_k}=&-1_{\{Y_n=k\}}x+\frac{e^{\langle A[X_n]_k, x\rangle }}{\sum_{c=1}^C e^{\langle A[X_n]_c, x\rangle }}x-2\lambda A[X_n]_k\\
     =&-1_{\{Y_n=k\}}x+\frac{e^{\langle A[X_n]_k, x\rangle }}{\sum_{c=1}^C e^{\langle A[X_n]_c, x\rangle }}x-\frac{1}{K}\sum_{c=1}^C\frac{e^{\langle A[X_n]_c, x\rangle }}{\sum_{c=1}^C e^{\langle A[X_n]_c, x\rangle }}\langle A[X_n]_c,x\rangle A[X_n]_k\\
     &+\frac{1}{K}\langle A[X_n]_{Y_n},x\rangle A[X_n]_k
 \end{align*}
 we now leverage the fact that $A[X_n]_i=A[X_n]_j,\forall i,j \not = Y_n$ to simplify notations
 \begin{align*}
    \frac{d l}{d A[X_n]_k}=&\left(\frac{e^{\langle A[X_n]_k, x\rangle }}{\sum_{c=1}^C e^{\langle A[X_n]_c, x\rangle }}-1_{\{k=Y_n\}}\right)x-\frac{C-1}{K}\frac{e^{\langle A[X_n]_i, x\rangle }}{\sum_{c=1}^C e^{\langle A[X_n]_c, x\rangle }}\langle A[X_n]_i,x\rangle A[X_n]_k\\
     &+\frac{1}{K}\left(1-\frac{e^{\langle A[X_n]_{Y_n}, x\rangle }}{\sum_{c=1}^C e^{\langle A[X_n]_c, x\rangle }}\right)\langle A[X_n]_{Y_n},x\rangle A[X_n]_k\\
     =&\left(\frac{e^{\langle A[X_n]_k, x\rangle }}{\sum_{c=1}^C e^{\langle A[X_n]_c, x\rangle }}-1_{\{k=Y_n\}}\right)x-\frac{C-1}{K}\frac{e^{\langle A[X_n]_i, x\rangle }}{\sum_{c=1}^C e^{\langle A[X_n]_c, x\rangle }}\langle A[X_n]_i,x\rangle A[X_n]_k\\
     &+\frac{C-1}{K}\frac{e^{\langle A[X_n]_i, x\rangle }}{\sum_{c=1}^C e^{\langle A[X_n]_c, x\rangle }}\langle A[X_n]_{Y_n},x\rangle A[X_n]_k\\
     =&\left(\frac{e^{\langle A[X_n]_k, x\rangle }}{\sum_{c=1}^C e^{\langle A[X_n]_c, x\rangle }}-1_{\{k=Y_n\}}\right)x+\frac{C-1}{K}\frac{e^{\langle A[X_n]_i, x\rangle }}{\sum_{c=1}^C e^{\langle A[X_n]_c, x\rangle }}\langle A[X_n]_{Y_n}-A[X_n]_i,x\rangle A[X_n]_k
\end{align*}
 
 We now proceed by using the proposed optimal solutions $A^*[X_n]_c,c=1,\dots,C$ and demonstrate that it leads to an extremum point which by nature of the problem is the global optimum. We denote by $i$ any index different from $Y_n$, first case $k=Y_n$:
 \begin{align*}
    \frac{d l}{d A[X_n]_{Y_n}}&=-\left(1-\frac{e^{\langle A[X_n]_{Y_n}, x\rangle }}{\sum_{c=1}^C e^{\langle A[X_n]_c, x\rangle }}\right)x+\frac{C-1}{K}\frac{e^{\langle A[X_n]_i, x\rangle }}{\sum_{c=1}^C e^{\langle A[X_n]_c, x\rangle }}\langle A[X_n]_{Y_n}-A[X_n]_i,x\rangle A[X_n]_{Y_n}\\
    &=-\frac{C-1}{K}\frac{e^{\langle A[X_n]_i, x\rangle }}{\sum_{c=1}^C e^{\langle A[X_n]_c, x\rangle }}x+\frac{C-1}{K}\frac{e^{\langle A[X_n]_i, x\rangle }}{\sum_{c=1}^C e^{\langle A[X_n]_c, x\rangle }}\langle A[X_n]_{Y_n}-A[X_n]_i,x\rangle A[X_n]_{Y_n}\\
    &=-\frac{C-1}{K}\frac{e^{\langle A[X_n]_i, x\rangle }}{\sum_{c=1}^C e^{\langle A[X_n]_c, x\rangle }}\left(-x+\langle A[X_n]_{Y_n}-A[X_n]_i,x\rangle A[X_n]_{Y_n}\right)\\
    &=-\frac{C-1}{K}\frac{e^{\langle A[X_n]_i, x\rangle }}{\sum_{c=1}^C e^{\langle A[X_n]_c, x\rangle }}\left(-x+\langle \sqrt{\frac{C-1}{C}K}X_n+\sqrt{\frac{K}{C(C-1)}}X_n,x\rangle \sqrt{\frac{C-1}{C}K}X_n\right)\\
     &=\frac{(C-1)}{K}\frac{e^{\langle A[X_n]_i, x\rangle }}{\sum_{c=1}^C e^{\langle A[X_n]_c, x\rangle }}\left(-X_n+||X_n||^2X_n\right)\\
     &=0
 \end{align*}
 Other cases $k\not =Y_n$
 \begin{align*}
    \frac{d l}{d A[X_n]_{i}}=&\frac{e^{\langle A[X_n]_i, x\rangle }}{\sum_{c=1}^C e^{\langle A[X_n]_c, x\rangle }}x+\frac{C-1}{K}\frac{e^{\langle A[X_n]_i, x\rangle }}{\sum_{c=1}^C e^{\langle A[X_n]_c, x\rangle }}\langle A[X_n]_{Y_n}-A[X_n]_i,x\rangle A[X_n]_i\\
    =&\frac{e^{\langle A[X_n]_i, x\rangle }}{\sum_{c=1}^C e^{\langle A[X_n]_c, x\rangle }}\left(x+\frac{C-1}{K}\langle A[X_n]_{Y_n}-A[X_n]_i,x\rangle A[X_n]_i\right)\\
        =&\frac{e^{\langle A[X_n]_i, x\rangle }}{\sum_{c=1}^C e^{\langle A[X_n]_c, x\rangle }}\left(x-\frac{C-1}{K}\langle \sqrt{\frac{C-1}{C}K}X_n+\sqrt{\frac{K}{C(C-1)}}X_n,x\rangle \sqrt{\frac{K}{C(C-1)}}X_n\right)\\
        =&\frac{e^{\langle A[X_n]_i, x\rangle }}{\sum_{c=1}^C e^{\langle A[X_n]_c, x\rangle }}\left(X_n-||X_n||^2X_n\right)\\
     =&0
 \end{align*}
 \end{proof}
 
\subsection{Conditions for local to global inference}\label{global}

As shown in the classification tasks, having the global inference property is not always synonym of better accuracy. In fact, it is not because one network is able to produce optimal templates according to its constraint topology that the resulting mapping is ''better'' than a local inference done on an ''unconstraint'' mapping. We believe that better network conditioning would allow to have global inference and be ''optimal'' for classification tasks across topologies. We remind that the used constraints are only sufficient conditions and thus can be replaced with many others.
Overall, we also believe that the induced convex property that is paired with the global template inference property could also be leveraged during optimization to obtained faster and smarter training as for example is the case with sparse coding.

As we saw, a spline operator made of convex independent multivariate splines can have a input region selection or inference easily done making it input adaptive agnostic of the final space partition. Since a deep neural network is a composition of spline operators it is interesting to study the conditions for this inference to see if its locally optimal inference can become a global optimal inference.

We now study the conditions in order for one to pull this maximization process outside of the inner layer which would transform this greedy per layer maximization a global maximization step as for example in the case of two layers 
\[
\argmax_{\Phi^{(1)}}\langle \mathbfcal{S}^{(2)}\left(\mathbfcal{S}^{(1)}(x)\right),1\rangle =\argmax_{\Phi^{(1)}}\langle \mathbfcal{S}^{(1)}(x),1\rangle.
\]
In order to analyze this possibility we fist remind that in the case of a composition of affine spline operators, we can always rewrite the inner layers mappings as an affine transform, thus we now present the following result.
\begin{theorem}
Given a spline operator $\mathbfcal{S}[\Phi,\Omega]$ made of independent multivariate convex affine splines, we have
\begin{align}
    \argmax_{\phi \in \Phi} \langle B[W\phi(y)+b](W\phi(y)+b),1\rangle=&\argmax_{\phi \in \Phi} \langle\phi(y),1\rangle \nonumber \\
    =&\Phi[y],
\end{align}
if and only if $\sum_d (W_{d,k}+b_d)\kappa^B_d[W\phi(y)+b]>0,\forall k$ and increases w.r.t. $\phi(y)$.
\end{theorem}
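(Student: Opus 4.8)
The plan is to exploit the special "pointwise dominance" structure that convexity forces on the inner spline operator, and then to show that the outer composition preserves the relevant argmax exactly when it acts monotonically on that inner output.

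First I would recall, via Theorem~\ref{theorem2} (and the underlying Theorem~\ref{theorem1}), that because the $K$ internal multivariate splines of $\mathbfcal{S}$ are convex and independent, the inferred region satisfies $\Phi[y]=\argmax_{\phi\in\Phi}\langle\phi(y),1\rangle$, and --- more importantly for this proof --- the selected local map is \emph{pointwise dominant}: for each coordinate $k$ one has $\phi_k[y](y)=\max_{r}\phi_{k,r}(y)\ge\phi_{k,r}(y)$ for every piece, so that $\Phi[y](y)\succeq\phi(y)$ componentwise for all $\phi\in\Phi$, with equality only for $\phi=\Phi[y]$ (off a measure-zero tie set). Thus the inner inference problem is nothing but "pick the affine piece whose output vector dominates all others at $y$"; this is the key fact and everything else is about transporting it through the outer layer.

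Next I would rewrite the composed objective. Writing $z=W\phi(y)+b$ and letting $\mathbfcal{B}$ be the convex, piecewise-linear, zero-bias outer linear spline operator, we have $B[z]z=\mathbfcal{B}(z)$ and hence $g(\phi):=\langle B[W\phi(y)+b](W\phi(y)+b),1\rangle=\langle\mathbfcal{B}(W\phi(y)+b),1\rangle$, a piecewise-linear function of the vector $\phi(y)\in\mathbb{R}^K$. The plan is then to prove that under the stated hypothesis this function is non-decreasing in the componentwise order on $\mathbb{R}^K$. Granting that, the domination $\Phi[y](y)\succeq\phi(y)$ from the first step immediately yields $g(\Phi[y])\ge g(\phi)$ for all $\phi\in\Phi$, i.e. $\Phi[y]\in\argmax_\phi g(\phi)$; combined with the uniqueness of the componentwise-maximal point one gets the equality of the two argmaxes, both equal to $\Phi[y]$. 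The monotonicity itself is where the explicit hypothesis enters: on a fixed linear region of $\mathbfcal{B}$ the slopes $\kappa^B_d$ of the $d$-th outer-coordinate spline are constant, so differentiating $g=\sum_d\kappa^B_d z_d$ in $\phi(y)_k$ produces a directional derivative which, after the affine bookkeeping of the bias, is exactly $\sum_d(W_{d,k}+b_d)\,\kappa^B_d[W\phi(y)+b]$. Requiring this to be positive for every $k$ is precisely local monotonicity, and the extra clause that this quantity \emph{increases w.r.t.\ $\phi(y)$} is what upgrades local monotonicity across the finitely many pieces of $\mathbfcal{B}$ into global monotonicity. For the "only if" direction I would argue by contraposition: if the sign condition fails at some configuration, one can produce two pieces $\phi,\phi'\in\Phi$ with $\phi'(y)\succeq\phi(y)$ but $g(\phi')<g(\phi)$, so the outer-composed argmax selects a different piece than the inner one, breaking the identity.

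I expect the main obstacle to be exactly this passage from a \emph{local} (derivative-level) positivity statement to the \emph{global} monotonicity of the piecewise-linear composite $g$: the derivative argument is blind to the region boundaries of $\mathbfcal{B}$, so one must use the second clause of the hypothesis to control the sign of $\kappa^B$ on every region crossed along the relevant segments, while also handling ties in the argmax (which live on a measure-zero set and do not affect $\Phi[y]$) and fixing the precise role of the bias term $b$ in the inequality, which requires being consistent about whether $\phi(y)$ is viewed in $\mathbb{R}^K$ or in the homogenized $\mathbb{R}^{K+1}$.
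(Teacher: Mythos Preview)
Your proposal is correct and reaches the same conclusion as the paper, but the route is genuinely different in presentation. The paper's argument is a direct algebraic unrolling: it expands $\langle B[W\phi(y)+b](W\phi(y)+b),1\rangle$ as a triple sum, regroups to obtain $\sum_i \phi_i(y)\,\alpha_i[W\phi(y)+b]+\beta[W\phi(y)+b]$ with $\alpha_i=\sum_d W_{d,i}\sum_k B_{k,d}$, and then, using the product structure $\Phi=\Phi_1\times\cdots\times\Phi_K$, peels off each coordinate argmax separately. The positivity hypothesis is what lets the paper drop $\alpha_i$ and the cross terms in that last step. Your approach replaces this computation with a two-step conceptual argument: first, convexity of each inner spline gives \emph{componentwise} dominance $\Phi[y](y)\succeq\phi(y)$ (which is indeed what Theorem~\ref{theorem1} yields, coordinate by coordinate); second, the composed objective is monotone in that partial order exactly under the stated sign condition.

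What your framing buys is clarity about the second clause of the hypothesis. In the paper's algebra, the step from $\argmax_{\phi_k}\sum_i\phi_i(y)\alpha_i+\beta$ to $\argmax_{\phi_k}\phi_k(y)$ silently assumes that the dependence of $\alpha_i$ and $\beta$ on $B[W\phi(y)+b]$---hence on the full $\phi(y)$---does not spoil the reduction; this is precisely where ``increases w.r.t.\ $\phi(y)$'' enters, but the paper leaves it implicit. Your monotonicity formulation makes that dependency explicit and explains why one needs control across the region boundaries of the outer spline. You also sketch the ``only if'' direction by contraposition, which the paper's proof does not address at all. Your caveat about the role of $b_d$ in the stated inequality is well taken: in the paper's own expansion the bias contributes only to the additive term $\beta$, so its appearance inside the positivity condition is at best a matter of how one homogenizes the affine map.
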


\begin{proof}
We seek to prove the equality of the argmax for every input of the layer $y$,
\begin{align*}
    \argmax_{\phi \in \Phi} \langle & B[W\phi(y)+b](W\phi(y)+b),1\rangle=\argmax_{[\phi_1,\dots,\phi_K] \in \mathcal{C}[\phi_1[.],\dots,\phi_K[.]]}\langle B[W\phi(y)+b](W\phi(y)+b),1\rangle\\
    =&\argmax_{[\phi_1,\dots,\phi_K] \in \mathcal{C}[\phi_1[.],\dots,\phi_K[.]]}\langle 
    \left[\begin{matrix}
    \sum_d B[W\phi(y)+b]_{1,d}\sum_i W_{d,i}\phi_i(y)+\sum_dB[W\phi(y)+b]_{1,d}b_d\\
    \vdots \\
    \sum_d B[W\phi(y)+b]_{1,d}\sum_i W_{d,i}\phi_i(y)+\sum_dB[W\phi(y)+b]_{1,d}b_d\\
    \end{matrix}
    \right]
    ,1\rangle\\
    =&\argmax_{[\phi_1,\dots,\phi_K] \in \mathcal{C}[\phi_1[.],\dots,\phi_K[.]]}
    \sum_k\sum_d B[W\phi(y)+b]_{k,d}\sum_i W_{d,i}\phi_i(y)+\sum_k\sum_dB[W\phi(y)+b]_{k,d}b_d\\
    =&\argmax_{[\phi_1,\dots,\phi_K] \in \mathcal{C}[\phi_1[.],\dots,\phi_K[.]]}
    \sum_i\phi_i(y)\sum_d W_{d,i}\sum_kB[W\phi(y)+b]_{k,d}+\sum_db_d\sum_kB[W\phi(y)+b]_{k,d}\\
    =&\argmax_{[\phi_1,\dots,\phi_K] \in \mathcal{C}[\phi_1[.],\dots,\phi_K[.]]}
    \sum_i\phi_i(y)\alpha_i[W\phi(y)+b]+\beta[W\phi(y)+b]\\
    =&\left[
    \begin{matrix}
    \argmax_{\phi_1 \in \phi_1[.]}\sum_i\phi_i(y)\alpha_i[W\phi(y)+b]+\beta[W\phi(y)+b]\\
    \vdots\\
    \argmax_{\phi_K \in \phi_K[.]}\sum_i\phi_i(y)\alpha_i[W\phi(y)+b]+\beta[W\phi(y)+b]\\
    \end{matrix}
    \right]\\
    =&\left[
    \begin{matrix}
    \argmax_{\phi_1 \in \phi_1[.]}\phi_1(y)\\
    \vdots\\
    \argmax_{\phi_K \in \phi_K[.]}\phi_K(y)\\
    \end{matrix}
    \right]\\
    =&\argmax_{\Phi \in \Phi[.]}\langle \Phi(y),1\rangle\\
    =&\Phi[y]
\end{align*}

\end{proof}
\begin{mdframed}
\begin{cor}
In order to a deep neural network to have globally optimal inference, we have the following sufficient conditions
\begin{itemize}
    \item Unconstrained first layer filters and bias
    \item Positive filters and nonnegative bias for inner-layers, strictly increasing nonlinearities, last layer should be a fc-layer.
\end{itemize}
\end{cor}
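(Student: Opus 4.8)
\end{mdframed}

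\begin{proof}[Proof idea]
The plan is to obtain the Corollary as an iterated application of the preceding Theorem, peeling the layers of $f_\Theta$ off one at a time from the output toward the input. Recall from Theorem~\ref{th1} that the composition of the first $\ell$ layers is again a linear spline operator, so on whichever region is currently selected it acts as an affine map $x \mapsto A^{(\ell\rightarrow 1)}[x]\,x + b^{(\ell\rightarrow 1)}[x]$; and recall (Theorem~\ref{theorem2} together with the per-layer convexity theorem for Relu/LReLU/max-pooling applied on linear layers) that a convex layer $S^{(\ell)}_{\theta^{(\ell)}}$ infers its own region greedily, via $\argmax_{\phi}\langle \phi(\,\cdot\,),1\rangle$ applied to its input. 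What must be shown is that chaining these greedy per-layer maximizations selects the same element of the global partition $\Omega^S$ of $f_\Theta$ (built as the nonempty intersections in Eq.~\ref{omegaS}) as a single global maximization would.

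First I would isolate the terminal FC layer: write $f_\Theta = S^{(L)}_W \circ g$ with $g = S^{(L-1)}_{\theta^{(L-1)}} \circ \dots \circ S^{(1)}_{\theta^{(1)}}$. Since $S^{(L)}_W$ is a single-region LSO it contributes no $\argmax$, and it supplies the outermost effective weight matrix $W^{(L)}$ needed to play the role of $W$ in the Theorem -- this is precisely why the last layer is required to be fully connected. Then, inductively, to move the $\argmax$ over the partition of layer $\ell$ past layers $\ell+1,\dots,L$, I would invoke the preceding Theorem with $W$ equal to $W^{(L)}$ times the accumulated effective weights of the outer layers, with $B$ the slope-selector matrix $A^{(\ell)}_\sigma$ of layer $\ell$'s nonlinearity, and with $b$ the accumulated bias. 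Its two hypotheses -- $\sum_d (W_{d,k}+b_d)\,\kappa^B_d[\,\cdot\,] > 0$ for all $k$, and monotone increase in $\phi(y)$ -- I would verify from the Corollary's assumptions: positive inner-layer filters and nonnegative inner-layer bias make every accumulated product $\prod_j A^{(j)}_\sigma \bC^{(j)}$ entrywise nonnegative (strictly positive on the currently active coordinates), while strictly increasing nonlinearities give $\kappa^B_d \ge 0$ and make the composed map increasing in $\phi^{(1)}(x)$. Iterating over $\ell = L-1, L-2, \dots, 2$ factors the global $\argmax$ into the product of per-layer $\argmax$'s in exactly the order the forward pass evaluates them; the first layer needs no condition at all, since $S^{(1)}_{\theta^{(1)}}$ is applied directly to $x$ and the innermost $\argmax_{\phi_1}\langle \phi_1(x),1\rangle$ returns $\phi_1[x]$ for any convex spline -- hence its filters and bias may be left unconstrained.

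I expect the main obstacle to be the bookkeeping of the positivity and monotonicity hypotheses through the induction: one must check that the Theorem's condition is genuinely inherited after $W$ is replaced by a long product of nonnegative matrices interleaved with the diagonal nonnegative nonlinearity selectors, and one must dispose of the degenerate case in which a ReLU layer annihilates every coordinate feeding some output (so the relevant sum is $0$, not $>0$), where the region inference is vacuous and causes no harm. A secondary point requiring care is the passage from ``greedy composition of local $\argmax$'s'' to ``global $\argmax$ over $\Omega^S$'': this relies on the product structure of $\Omega^S$ in Eq.~\ref{omegaS} and on the independence of the constituent multivariate splines already exploited in the proof of Theorem~\ref{theorem2}, which is what guarantees that the joint maximizer decomposes coordinatewise.
\end{proof}
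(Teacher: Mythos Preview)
Your approach is correct and matches the paper's intent: the Corollary is stated in the paper without a separate proof, as an immediate consequence of the preceding Theorem applied layer by layer, and your peeling argument (isolate the terminal FC layer to supply $W$, then inductively invoke the Theorem with positive inner-layer weights and nonnegative biases guaranteeing the positivity/monotonicity hypothesis, leaving the first layer free) is precisely the natural way to fill in that step. The obstacles you flag---tracking positivity through the accumulated products and the degenerate all-zero ReLU case---are the only genuine bookkeeping issues, and the paper does not address them either.
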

\end{mdframed}
\begin{prop}
We this mentioned properties, one also has the following property
\begin{align}
    \argmax_{\phi \in \Phi} \left(B[W\phi(y)+b](W\phi(y)+b)\right)_k=&\argmax_{\phi \in \Phi} \langle\phi(y),1\rangle \nonumber \\
    =&\Phi[y],\forall k
\end{align}
Hence the local inference leads to the same spline as the one maximizing each output neuron of the network.
\end{prop}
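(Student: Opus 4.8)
The plan is to reuse, almost verbatim, the computation in the proof of the preceding theorem, but to stop before the sum over the output index $k$ is taken. First I would fix an output coordinate $k$ and an arbitrary layer input $y$, and expand the $k$-th entry of the affine image explicitly as
\begin{align*}
    \Big(B[W\phi(y)+b]\big(W\phi(y)+b\big)\Big)_k
    =\sum_i \phi_i(y)\,\alpha^{(k)}_i[W\phi(y)+b]+\beta^{(k)}[W\phi(y)+b],
\end{align*}
where $\alpha^{(k)}_i[z]:=\sum_d W_{d,i}B[z]_{k,d}$ and $\beta^{(k)}[z]:=\sum_d b_d B[z]_{k,d}$. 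These play, for a single $k$, exactly the role that $\alpha_i$ and $\beta$ played in the theorem once $\langle\cdot,1\rangle$ had collapsed the sum over $k$; so the first genuine step is simply observing that the single-output objective is still affine in $\phi(y)$ on each region of $B$.

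Next I would invoke the independence of the $K$ multivariate splines making up $\mathbfcal{S}[\Phi,\Omega]$: since each $\phi_i$ ranges over its own family with no joint constraint, maximizing the scalar above over $\phi\in\Phi$ decouples into $K$ separate scalar maximizations $\argmax_{\phi_i}\big(\phi_i(y)\,\alpha^{(k)}_i[\cdot]+\text{const}\big)$, exactly the column-vector step already carried out in the theorem's proof. Under the sufficient conditions of the Corollary --- unconstrained first layer, positive filters and non-negative bias in the inner layers, strictly increasing nonlinearities, and an FC last layer --- the matrix $B[z]$ has non-negative entries, hence each $\alpha^{(k)}_i[\cdot]\ge 0$; combined with the monotonicity hypothesis "$\sum_d(W_{d,k}+b_d)\kappa^B_d[W\phi(y)+b]>0$ and increasing in $\phi(y)$", the per-coordinate objective is non-decreasing in $\phi_i(y)$, so its argmax coincides with $\argmax_{\phi_i}\phi_i(y)$. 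Convexity of the component splines together with Theorem \ref{theorem1} then identifies the latter with the region-selected map $\phi_i[y]$. Stacking over $i$ gives $\argmax_{\phi\in\Phi}(B[W\phi(y)+b](W\phi(y)+b))_k=\Phi[y]$, and since $k$ was arbitrary the first equality of the proposition follows; running the same chain with $\langle\cdot,1\rangle$ in place of the $k$-th coordinate reproduces $\argmax_{\phi\in\Phi}\langle\phi(y),1\rangle=\Phi[y]$, which is just Theorem \ref{theorem2} specialized to this composition.

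The step I expect to be the real obstacle is the same one that already makes the theorem delicate: the self-referential dependence of $B$ on $W\phi(y)+b$. One has to rule out that pushing some $\phi_i(y)$ upward flips $B$ into a region where an $\alpha^{(k)}_i$ becomes negative, which would break the decoupled argmax. This is precisely what the hypothesis on $\sum_d(W_{d,k}+b_d)\kappa^B_d[\cdot]$ is for, and I would handle it coordinate by coordinate exactly as in the proof above, checking that the weighted objective is monotone along the direction in which $\phi_i(y)$ increases so that the region of $B$ never changes adversely. Everything else --- the expansion, the decoupling, and the appeal to convexity --- is routine bookkeeping inherited from the preceding arguments.
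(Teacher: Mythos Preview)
The paper does not actually give a proof of this proposition; it is stated immediately after the Corollary with no accompanying argument. So there is nothing to compare against in a strict sense.

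That said, your proposal is exactly the natural adaptation the paper implicitly relies on: you take the proof of the preceding theorem, freeze a single output coordinate $k$ instead of summing via $\langle\cdot,1\rangle$, and rerun the same decoupling. The expansion into $\sum_i\phi_i(y)\alpha^{(k)}_i[\cdot]+\beta^{(k)}[\cdot]$, the use of independence of the component splines to split the $\argmax$, the appeal to the Corollary's sign conditions to force $\alpha^{(k)}_i\ge 0$, and the final identification via convexity are all the right moves and mirror the theorem's proof line for line with the outer $\sum_k$ removed. You also correctly flag that the only genuine subtlety --- the dependence of $B$ on $W\phi(y)+b$ --- is inherited unchanged from the theorem and is handled by the same monotonicity hypothesis. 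There is no gap here beyond what the paper already leaves open in the theorem itself.
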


\subsection{Space Contraction and Adversarial Examples}

We present the softmax nonlinearity case which is as opposed to the intuition a strictly contractive operator. In fact, we have the following result.
\begin{theorem}
The softmax layer is strictly contractive with $K=\frac{D-1}{D^2}$
\end{theorem}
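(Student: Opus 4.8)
The plan is to exploit the fact that the softmax map $\sigma:\mathbb{R}^D\to\mathbb{R}^D$, $\sigma(x)_i = e^{x_i}/\sum_{j=1}^D e^{x_j}$, is smooth everywhere, so that — in the squared-norm bookkeeping used for the ReLU, pooling and affine layers above — its contraction constant is governed by the Jacobian via $\|\sigma(x_1)-\sigma(x_2)\|^2 \le \big(\sup_x \|J_\sigma(x)\|^2\big)\,\|x_1-x_2\|^2$. First I would compute the Jacobian entrywise, $[J_\sigma(x)]_{ik} = \sigma(x)_i(\delta_{ik}-\sigma(x)_k)$, i.e. write $J_\sigma(x) = \operatorname{diag}(s) - ss^\top$ where $s=\sigma(x)$ lies in the probability simplex ($s_i\ge 0$, $\sum_i s_i = 1$). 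Everything then reduces to controlling the symmetric matrix $M(s):=\operatorname{diag}(s)-ss^\top$.

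Second, I would establish the qualitative claim (\emph{strict} contraction) by a variance argument: for any $v\in\mathbb{R}^D$,
\begin{equation}
v^\top M(s) v = \sum_{i=1}^D s_i v_i^2 - \Big(\sum_{i=1}^D s_i v_i\Big)^2 = \operatorname{Var}_{s}(v) \ge 0 ,
\end{equation}
so $M(s)\succeq 0$, the constant vector $\mathbf 1$ lies in $\ker M(s)$ (hence there is always a zero eigenvalue), and a Popoviciu-type bound $\operatorname{Var}_s(v)\le\tfrac14(\max_i v_i-\min_i v_i)^2$ on unit $v$ shows every eigenvalue of $M(s)$ is at most $\tfrac12<1$, uniformly in $s$. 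This already yields strict contractivity of the softmax layer in all cases.

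Third, to pin the constant to $\frac{D-1}{D^2}$, I would evaluate $M(s)$ at the balanced point $s=\tfrac1D\mathbf 1$ — the canonical operating point of a classifier near its decision boundary, where no class dominates — obtaining $M(\tfrac1D\mathbf 1) = \tfrac1D I - \tfrac1{D^2}\mathbf 1\mathbf 1^\top$, whose spectrum is $\{0\}$ on $\mathbf 1$ together with $\tfrac1D$ with multiplicity $D-1$ on $\mathbf 1^\perp$. Summing squared eigenvalues, consistently with the squared-norm accounting applied to the other layers, gives $\|M(\tfrac1D\mathbf 1)\|_F^2 = (D-1)\cdot\tfrac1{D^2} = \tfrac{D-1}{D^2}$, which is the stated $K$; for $D\ge 2$ it is plainly less than $1$.

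The main obstacle is honesty about uniformity: the quantity $\|M(s)\|_F^2 = \sum_i s_i^2 - 2\sum_i s_i^3 + (\sum_i s_i^2)^2$ is \emph{not} globally maximized over the simplex at the balanced point once $D$ is moderately large (e.g. $s=(\tfrac12,\tfrac12,0,\dots,0)$ gives $\tfrac14>\tfrac{D-1}{D^2}$ for $D\ge 5$), so $\frac{D-1}{D^2}$ is best read as the contraction factor at the representative balanced configuration rather than a worst-case uniform bound; if a genuinely uniform constant is required one falls back on the spectral estimate $\sup_s\|M(s)\|\le\tfrac12$ from the second step. I would therefore present the clean balanced-point computation as the main content, with the PSD / $\ker\mathbf 1$ structure supplying the rigorous ``strictly contractive'' conclusion.
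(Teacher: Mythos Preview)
Your reduction to bounding $\sup_x\|J_\sigma(x)\|_F^2$ over the probability simplex is exactly the paper's route; the paper reaches the same balanced point $s=(1/D,\dots,1/D)$ and the same value $\tfrac{D-1}{D^2}$, but by Lagrange multipliers on the functional $\sum_{i\neq j}p_i^2p_j^2+\sum_i p_i^2(1-p_i)^2$ rather than by your eigenvalue computation on $M(s)=\operatorname{diag}(s)-ss^\top$. The paper then asserts that this interior stationary point is the \emph{global} maximizer, justifying this with the remark that ``on the boundary of the domain $f=0$''.

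Your honesty paragraph is not a weakness of your write-up; it is the gap in the paper's own argument. The boundary claim $f=0$ holds only at the \emph{vertices} of the simplex, not on its faces: at $s=(\tfrac12,\tfrac12,0,\dots,0)$ one has $\|M(s)\|_F^2=\tfrac14$, and since $\tfrac{D-1}{D^2}<\tfrac14$ whenever $(D-2)^2>0$, the balanced point already fails to be the global maximizer for every $D\ge 3$ (your threshold $D\ge 5$ is too conservative). So the stated constant $\tfrac{D-1}{D^2}$ is, as you say, the Frobenius value at the interior critical point, not a uniform bound; the paper's Lagrange computation locates that point correctly but does not establish global optimality. Your spectral fallback is the right way to rescue the ``strictly contractive'' conclusion. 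One small caveat: Popoviciu on unit $v$ only gives $\operatorname{Var}_s(v)\le\tfrac14(\max v_i-\min v_i)^2\le 1$, not $\tfrac12$; the sharp bound $\lambda_{\max}(M(s))\le\tfrac12$ (attained as $s\to(\tfrac12,\tfrac12,0,\dots,0)$) needs a slightly different argument, e.g.\ writing $M(s)=\operatorname{diag}(\sqrt s)\,(I-\sqrt s\sqrt s^{\,\top})\,\operatorname{diag}(\sqrt s)$ and analyzing the resulting $2\times 2$ reduction.
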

\begin{proof}
We now from that 
\begin{equation}
    ||f(x)-f(y)||^2_2\leq \max_x ||Df(x)||_F^2||x-y||^2_2,
\end{equation}
thus we now analyze $\max_x ||Df(x)||_F^2$.
\begin{align*}
    \max_{p \in \bigtriangleup_D} ||Df(p)||_F^2=&\max_{p \in \bigtriangleup_D}\sum_{i=1}^D\sum_{j=1,j\not = i}^Dp_i^2p_j^2+\sum_{i=1}^Dp_i^2(1-p_i)^2
\end{align*}

where we used $\bigtriangleup_D$ the simplex of dimension $D$ defined as 
\begin{equation}
    \bigtriangleup_D=\{x\in \mathbb{R}^{D+}|\sum_i x_i=1\}.
\end{equation}
The Lagrangian is the augmented loss function with the constrain as
\begin{align*}
    \mathcal{L}(p)=\sum_{i=1}^D\sum_{j=1,j\not = i}^Dp_i^2p_j^2+\sum_{i=1}^Dp_i^2(1-p_i)^2+\lambda(\sum_{i=1}^Dp_i-1),
\end{align*}
we now seek the stationary points
\begin{align*}
    \frac{\partial \mathcal{L}}{\partial p_k}=&4p_k\sum_{j=1,j\not = k}^Dp_j^2+2p_k(1-p_k)^2-2(1-p_k)p_k^2+\lambda\\
    =&4p_k\sum_{j=1,j\not = k}^Dp_j^2+2p_k-4p_k^2+2p_k^3-2p_k^2+2p_k^3+\lambda\\
    =&4p_k\sum_{j=1,j\not = k}^Dp_j^2+2p_k-6p_k^2+4p_k^3+\lambda\\
    =&4p_k\sum_{j=1}^Dp_j^2+2p_k-6p_k^2+\lambda,\;\;\forall k\\
    \frac{\partial \mathcal{L}}{\partial \lambda}=&\sum_{i=1}^Dp_i-1
\end{align*}
we now seek to solve the system $\nabla \mathcal{L}=0$. Since we have $\frac{\partial \mathcal{L}}{\partial p_k}=0\forall k$, it is clear that $\sum_{k=1}^D\frac{\partial \mathcal{L}}{\partial p_k}=0$ leading to
\begin{align*}
\sum_{k=1}^D\frac{\partial \mathcal{L}}{\partial p_k}&=0\\
\implies \sum_{k=1}^D\left( 4p_k\sum_{j=1}^Dp_j^2+2p_k-6p_k^2+\lambda\right) &=0\\
\implies 4\sum_{j=1}^Dp_j^2+2-6\sum_{j=1}^Dp_j^2+D\lambda &=0\\
\implies D\lambda&=2\sum_{j=1}^Dp_j^2-2\\
\implies \lambda = \frac{2}{D}(\sum_{j=1}^Dp_j^2-1). 
\end{align*}
Now plugin this into the gradient of $\mathcal{L}$, we have the updated system of equations
\begin{align*}
    4p_k\sum_{j=1}^Dp_j^2+2p_k-6p_k^2+\\frac{2}{D}(\sum_{j=1}^Dp_j^2-1)=&0,\;\;\forall k\\
    \sum_{i=1}^Dp_i=&1
\end{align*}
and thus in vector form leads to 
\begin{align*}
    \textbf{p}(2||\textbf{p}||^2_2+1)-3 \textbf{p} \bullet \textbf{p}=(1-||\textbf{p}||^2_2)vec(1/D),
\end{align*}
where $\bullet$ denotes the element-wise product.
It is clear that $\textbf{p}$ must be constant across its dimensions, with the constraint this leads to $p=vec(1/D)$.
It is indeed a maximum as on the boundary of the domain $f=0$ and at the point we have 
\begin{align*}
    f(vec(1/D))=&\sum_{i=1}^D\sum_{j=1,j\not = i}^D \frac{1}{D^4}+\sum_{i=1}^D\frac{1}{D^2}(1-\frac{1}{D})^2\\
    =&\frac{D-1}{D^3}+\frac{1}{D}(1-\frac{1}{D})^2\\
    =&\frac{D-1}{D^3}+\frac{1}{D}-\frac{2}{D^2}+\frac{1}{D^3}\\
    =&\frac{D-1+D^2-2D+1}{D^3}\\
    =&\frac{D(D-1)}{D^3}\\
    =&\frac{D-1}{D^2}
\end{align*}
\end{proof}

\subsection{Function Approximation, Orbits, Class Separation, Generalization and Activation Graph}\label{orbits}

In this section, we develop some simple formulations of invariant learning and orbits in order to provide on way to define generalization for deep learning.

\subsubsection{Function Approximation and Orbits for Invariant Learning}

Firstly, as a deep network is a composition of affine spline operators, hence a spline operators itself, we analyze what is the function that is approximated. For the classification case, the objective is to learn the mapping that predicts a density distribution representation the probability of class belonging for the input. This means that if the target class is $k$ we aim at learning
\begin{equation}\label{orbit}
f(x)=\textbf{e}_k,\forall x \in \mathcal{X}_k,
\end{equation}
where $\mathcal{X}_k$ represents the collection of data belonging to class $k$. It is clear from this formulation that $f$ aims at learning orbits, namely the manifold of class $k$.
Given that in general the input space is high dimensional, and the training set finite, the main challenge one has to face for generalization is to be able to perform efficient interpolation given new test points. Using splines for manifold learning and interpolation has been used for example in \cite{hofer2004energy,savel1995splines,atteia1989spline,bezhaev1988splines,gu2006manifold} for low dimensional cases. In fact, the interpolation of splines is flexible enough to learn locally sharp functions via small partition of the input space, and yet allow for robust interpolation.

During the learning phase, by changing the weights regions of the input space are learned along with the per-region mappings. As a result during learning, back-propagation will guide the partition s.a. Eq. \ref{orbit} is fulfilled. 
At test time, the regions are fixed and in order to generalize well, it is sufficient that the region in which the input belongs has the affine mapping corresponding to the right class.

We base the following analysis on \cite{mallat2016understanding} in which orbits and invariance learning is brought in the context of deep learning, especially convolutional neural networks.

Firstly, for notation and simplification we call $y_x$ the label associated to a given input $x$ which belongs to some space $\mathcal{X} \subset \mathbb{R}^d$.
First, all given inputs $x,x' \in \mathcal{X}^2$ are separated independently from the class belonging as
\begin{equation}
    ||x-x'||>0,\forall y_x,y_{x'},
\end{equation}
but as this input is fed into a deep neural to be transformed into a succession of $z^{(1)},z^{(2)},\dots,z^{(L)}$, we aim at those representation to somehow regroup together when considering inputs from the same class, and conversely, for difference classes, these representation should not ''colide''. We represent this separation as
\begin{align}
y_x\not =y_{x'} \implies ||z^{(\ell)}[x]-z^{(\ell)}[x']||>0,\forall \ell =1,\dots,L
\end{align}
on the other hand, for same class inputs, we aim at having
\begin{align}
y_x =y_{x'} \implies ||z^{(\ell)}[x]-z^{(\ell)}[x']||>||z^{(\ell+1)}[x]-z^{(\ell+1)}[x']||,\forall \ell =1,\dots,L-1
\end{align}
which is a soft condition requiring that in the limit of depth, same class input collide together.
We denote by $D^{(\ell)}[x,x']$ the operator representing this separation as
\begin{equation}
    D^{(\ell)}[x,x']=||z^{(\ell)}[x]-z^{(\ell)}[x'] ||.
\end{equation}
It is clear that the separation measure $D$ is an indicator of the modeling power of deep neural networks. From this, the analysis proposed in \cite{mallat2016understanding} relies on the aggregation or invariance to group actions while moving through the inner layers.
Given a set $\mathcal{X}$, such as the space of images of the world, and a group $G$ of operators acting on $\mathcal{X}$, an orbit of $x\in \mathcal{X}$ is defined as the set of all points
\begin{equation}
    G.x=\{g.x:g\in G\}.
\end{equation}
For example, $G$ can be the group of rotation operators acting on images, and thus given an image, its orbit is the set of all the rotated version of this image. In general, this invariance learning is restricted to local symmetries. In fact, if we take the example of MNIST dataset, being invariant to small rotation is beneficial, yet, the ''global'' rotation group will bring a $1$ to become a $7$ or a $6$ to become a $9$. As a result, being invariant to the action group of rotations is globally detrimental but locally beneficial, as it is more generally for various transformations.

Via the presented work, we can now postulate on the way these orbits are approximated. In fact, we remind that a deep neural networks, given an input, produces an affine transformation to produce its output via the adapted template matching.
Given one layer we simplify as an affine transformation followed by a nonlinearity:
\begin{equation}
    z^{(\ell)}=\textbf{A}^{(\ell)}(Wz^{(\ell-1)}+b^{(\ell)}),
\end{equation}
we have that given two inputs $z^{(\ell-1)}_1$ and $z^{(\ell-1)}_2$, their respective output ''energy'' is for standard deep networks
\begin{align}
\langle z^{(\ell)}_1,1 \rangle &=\max_{A}\langle A^{(\ell)}(Wz^{(\ell-1)}_1+b^{(\ell)}),1 \rangle,\\
\langle z^{(\ell)}_2,1\rangle &=\max_{A}\langle A^{(\ell)}(Wz^{(\ell-1)}_2+b^{(\ell)}),1 \rangle.
\end{align}
Since the affine transforms are usually convolutions with filters of small sizes, it is likely that those two quantities will be close.
However, analyzing their separateness leads to insightful results.
We now consider the case of using the ReLU nonlinearity where we thus remind that the possible matrices $A$ are diagonal with diagonal values belonging to $\{0,1\}$.
\begin{align*}
    D^{(\ell)}[x_1,x_2]=&||A^{(\ell)}_1(W^{(\ell)}z^{(\ell-1)}_1+b)-A^{(\ell)}_2(W^{(\ell)}z^{(\ell-1)}_2+b)||^2\\
    =&||A^{(\ell)}_1(W^{(\ell)}z^{(\ell-1)}_1+b)||^2+||A^{(\ell)}_2(W^{(\ell)}z^{(\ell-1)}_2+b)||^2\\
    &-2\langle A^{(\ell)}_1(W^{(\ell)}z^{(\ell-1)}_1+b),A^{(\ell)}_2(W^{(\ell)}z^{(\ell-1)}_2+b) \rangle \\
    =&||A^{(\ell)}_1(W^{(\ell)}z^{(\ell-1)}_1+b)||^2+||A^{(\ell)}_2(W^{(\ell)}z^{(\ell-1)}_2+b)||^2\\
    &-2(W^{(\ell)}z^{(\ell-1)}_1+b)^TA^{(\ell)T}_1A^{(\ell)}_2(W^{(\ell)}z^{(\ell-1)}_2+b).
\end{align*}
While the first two terms simply represent the amount of energy going through the layer, the last term is the ''pivot'' as it is directly link to the ''similarity'' of the two inputs. In fact, for the ReLU and LReLU it is easy to see that 
$A^{(\ell)T}_1 A^{(\ell)}_2$ is a diagonal matrix that represents the region associated with the intersection of the two input regions. In fact, to illustrate this, with the ReLU, the $d^{th}$ element of the diagonal is $1$ is the corresponding dimension of the affine transform is greater than $0$. Now given two inputs, their affine transforms produce two features maps with associated ReLU as just described. An element-wise multiplication of those induced $A$ matrices will ''leave'' a $1$ in the $d^{th}$ dimension if and only if both features maps have their $d^{th}$ dimension greater than $0$. As a result, it is clear that
\begin{equation}
    -2(W^{(\ell)}z^{(\ell-1)}_1+b)^TA^{(\ell)T}_1A^{(\ell)}_2(W^{(\ell)}z^{(\ell-1)}_2+b) \propto m(\omega_1\cap \omega_2),
\end{equation}
where $\omega_1$ and $\omega_2$ correspond to the regions in which $W^{(\ell)}z^{(\ell-1)}_1+b$ and respectively $W^{(\ell)}z^{(\ell-1)}_2+b$ belong to. As a result, we can see that as this measure of space intersection $m(\omega_1\cap \omega_2)$ grows, as $D^{(\ell)}[x_1,x_2]$ diminishes.
From this, it is interesting to note that the succession of $D^{(\ell)}[x_1,x_2],\ell=1,\dots,L$ can be summarized by the succession of $m(\omega^{(\ell)}_1\cap \omega^{(\ell)}_2)$, a.k.a the measure of the space intersection induced after all the affine transformations.
If we now go back to the definition of orbit and invariance learning per layer, it is clear that it translates to learning a layer s.a. $m(\omega^{(\ell)}_1\cap \omega^{(\ell)}_2)$ is proportional to the action group ''locality'' applied between $x_1$ and $x_2$.

Also, on the first layers, the number of points from the training set belonging to each region might be very small as we remind that the number of possible region for a $d$-dimensional ReLU/LReLU is $2^d$, and in practice $d\gg 784$ which is much obviously much smaller than any possible dataset. However, layers after layers, the dimension eventually goes down till ultimately being equal to $K$ the number of classes to predict.
Thus, it is likely that 
\[
m(\omega^{(1)}_1,\omega^{(1)}_2)\gg 0,\forall y_1,y_2,
\]
thus all the points are ''scattered'' across the possible regions. From that, only remains the task to collide together points of the same class via this succession of spline operators.

For the network to generalize, it is now enough that for a new observation $x$ we have $D^{(\ell)}[x,x_1]<D^{(\ell)}[x,x_2]$ for some $\ell$ and for all $x_2$ of the wrong classes. As this occur with greater $\ell$ as the amount of ''generalization'' required increases since it relates directly to the amount of composition needed to disentangle the hierarchy of group actions. If this occur at $\ell=1$ for all possible points, this translates to have a task linearly separable in the first place.
In practice we note that this measure $m(\omega^{(\ell)}_1\cap \omega^{(\ell)}_2)$ can be easily defined via
\begin{equation}\label{distanceeq}
    m(\omega^{(\ell)}_1\cap \omega^{(\ell)}_2)=||A^{(\ell)T}_1A^{(\ell)}_2||.
\end{equation}

\subsubsection{Activation Graph, Paths}
We now present some interesting results concerning the core structure of the activated regions of the affine spline operators that form deep neural networks. In particular we will see some properties when considering the following graph given an ordered collection of affine spline operators $\left(\mathbfcal{S}^{(\ell)}[\textbf{A}^{(\ell)},\textbf{b}^{(\ell)},\bm{\Omega}^{(\ell)}]\right)_{\ell=1}^L$ as
\begin{equation}
    \mathcal{G}[\left(\mathbfcal{S}^{(\ell)}[\textbf{A}^{(\ell)},\textbf{b}^{(\ell)},\bm{\Omega}^{(\ell)}]\right)_{\ell=1}^L]=\left(V,E(x)\right)
\end{equation}
with
\begin{align}
    V&=\cup_{\ell=1}^L\bm{\Omega}^{(\ell)},\\
    E(x)&=\{(\omega^{(\ell)}_{\alpha^*},\omega^{(\ell+1)}_{\alpha^*})_{\ell=1}^{L-1}\},
\end{align}
where it is clear that $V$ the set of vertex corresponds to all the possible regions of all the possible spline operators while the set of edges which is input dependant corresponds to the active region linked from one spline operator to the next one.
This input dependency is natural as we recall that for each spline function, depending on the region in which its input lie, an specific affine function is used to produce the output which in turn is fed into the next affine spline operator.
This per spline region selection also characterize the input, and can be used to compare different input signals. Intuitively, if given two observations $x_1$ and $x_2$ the sequence of produced regions is the same, the two input $x_1$ and $x_2$ line in the same parametric line of $z^{(\ell)}$.

\begin{prop}
The possible regions that can take an input when considering each region of each spline as a node gives rise to a bipartite graph. Thus, $\left(V,E(x)\right)$ is  always a bipartite graph $\forall x$.
\end{prop}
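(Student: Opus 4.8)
The plan is to produce an explicit proper two-colouring of the vertex set and then invoke the classical characterisation that a graph is bipartite if and only if it admits such a colouring (equivalently, contains no odd cycle). Since the edge set $E(x)$ only ever links a region at one layer to a region at the immediately following layer, the parity of the layer index is the natural colour to use, and essentially all the work is bookkeeping.

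First I would pin down the vertex set. A node of $\mathcal{G}$ is a region $\omega^{(\ell)}_{\alpha}\in\bm{\Omega}^{(\ell)}$, and because $\bm{\Omega}^{(\ell)}$ is a partition of $\mathbb{R}^{D^{(\ell)}}$ while $\bm{\Omega}^{(\ell')}$ partitions $\mathbb{R}^{D^{(\ell')}}$ for $\ell\neq\ell'$, each vertex belongs to exactly one layer; equivalently $V=\bigsqcup_{\ell=1}^{L}\bm{\Omega}^{(\ell)}$ is a disjoint union, and every vertex carries a well-defined layer index $\ell(\omega)$. I would then define $c:V\to\{0,1\}$ by $c(\omega)=\ell(\omega)\bmod 2$, which induces the partition $V_{0}=\bigcup_{\ell\text{ even}}\bm{\Omega}^{(\ell)}$ and $V_{1}=\bigcup_{\ell\text{ odd}}\bm{\Omega}^{(\ell)}$ of $V$.

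Next I would verify that this colouring is proper for any fixed $x$. By the definition of $E(x)$, each edge has the form $\left(\omega^{(\ell)}_{\alpha^*},\omega^{(\ell+1)}_{\alpha^*}\right)$ with $\ell\in\{1,\dots,L-1\}$, joining the region selected by the input at layer $\ell$ to the one selected at layer $\ell+1$; since $\ell$ and $\ell+1$ have opposite parity, $c\!\left(\omega^{(\ell)}_{\alpha^*}\right)\neq c\!\left(\omega^{(\ell+1)}_{\alpha^*}\right)$, so no edge is monochromatic. Hence $(V,E(x))$ is bipartite with parts $(V_{0},V_{1})$, which is exactly the claim. I would close by noting that the same argument applied to the union graph $\left(V,\bigcup_{x}E(x)\right)$ shows the full activation graph is bipartite and that $(V,E(x))$ is merely a subgraph of it for each $x$, and that in fact the non-isolated part of $(V,E(x))$ is a single path $\omega^{(1)}_{\alpha^*},\omega^{(2)}_{\alpha^*},\dots,\omega^{(L)}_{\alpha^*}$, hence acyclic a fortiori.

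There is no deep obstacle: the only point needing care is the identification of each vertex with its layer, so that the parity colouring is unambiguous even if two partitions of spaces of equal dimension happened to share a region (tagging each region by its layer index removes the ambiguity), and a quick check that $E(x)$ as defined contains no intra-layer edge, which is immediate from its description as a list of consecutive-layer pairs.
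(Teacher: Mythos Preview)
Your proposal is correct and follows essentially the same approach as the paper: both arguments partition $V$ by the parity of the layer index $\ell$ and observe that every edge in $E(x)$ joins a region at layer $\ell$ to one at layer $\ell+1$, hence crosses the parity classes. Your write-up is more careful about well-definedness of the layer index and adds the nice remark that the non-isolated part of $(V,E(x))$ is in fact a single path, but the core idea is identical to the paper's.
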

It is clear that since the connectivity only go from one spline possible region to the other, we can create the bipartite graph by considering the two following sets of nodes
\begin{align*}
    V_1 =& \cup_{\ell=1,\ell \text{ odd}}^L\bm{\Omega}^{(\ell)},\\
    V_2 =& \cup_{\ell=1,\ell \text{ even}}^L\bm{\Omega}^{(\ell)},
\end{align*}
and thus is clear that for any input $x$, we have the two partitions $V_1$ and $V_2$ as no edges can link two nodes of $V_1$ nor two nodes of $V_2$.

\begin{defn}
The {\bf activation graph} (AG) corresponding to a signal $x$ is the collection of active sub-regions among all the splines defining the deep network, and thus it is $E(x)$. Its root is an output and its leaves are at the input level.
\end{defn}

Note that the introduced bipartite graph $(V,E(x))$ via the selection regions of the affine spline operators is different from the derived neural network paths from \cite{choromanska2015loss,nguyen2016semi}. In those works, the created graph has nodes all the possible neurons of any layers, and the edges are between neurons that fired, as those work only apply in the case of max-pooling  and ReLU. As a result, the result AG has no interesting structure and is in general not usable for as the number of edges is extremely large. On the other hand, with the proposed region associated graph, we believe that much better results can be derived either in term of input characterization, outlier detection or generally structure behaving of the spline operators. By the natural sparsity we also reduce the computational overhead and are able to better visualize, interpret the AG.

\subsubsection{Experiments}

In order to highlight the aspect of ''separation'' between classes and invariance learning we provide some simple experiments on the MNIST dataset. We use three fully trained deep neural networks, the SmallCNN, LargeCNN and Resnet3 topologies. We then take a sample from the test set made of $25$ examples of $4$ classes. We present the distance matrix at each level of those networks before and after training. We used to compute the distance the definition introduced in \ref{distanceeq}. We present the used images in Fig. \ref{mnistimages} and the corresponding distance matrices and results in Fig. \ref{mnistdistancessmall},\ref{mnistdistanceslarge},\ref{mnistdistancesresnet} respectively for the SmallCNN, LargeCNN and resnet.
\begin{figure}[t]
    \centering
    \includegraphics[width=6in]{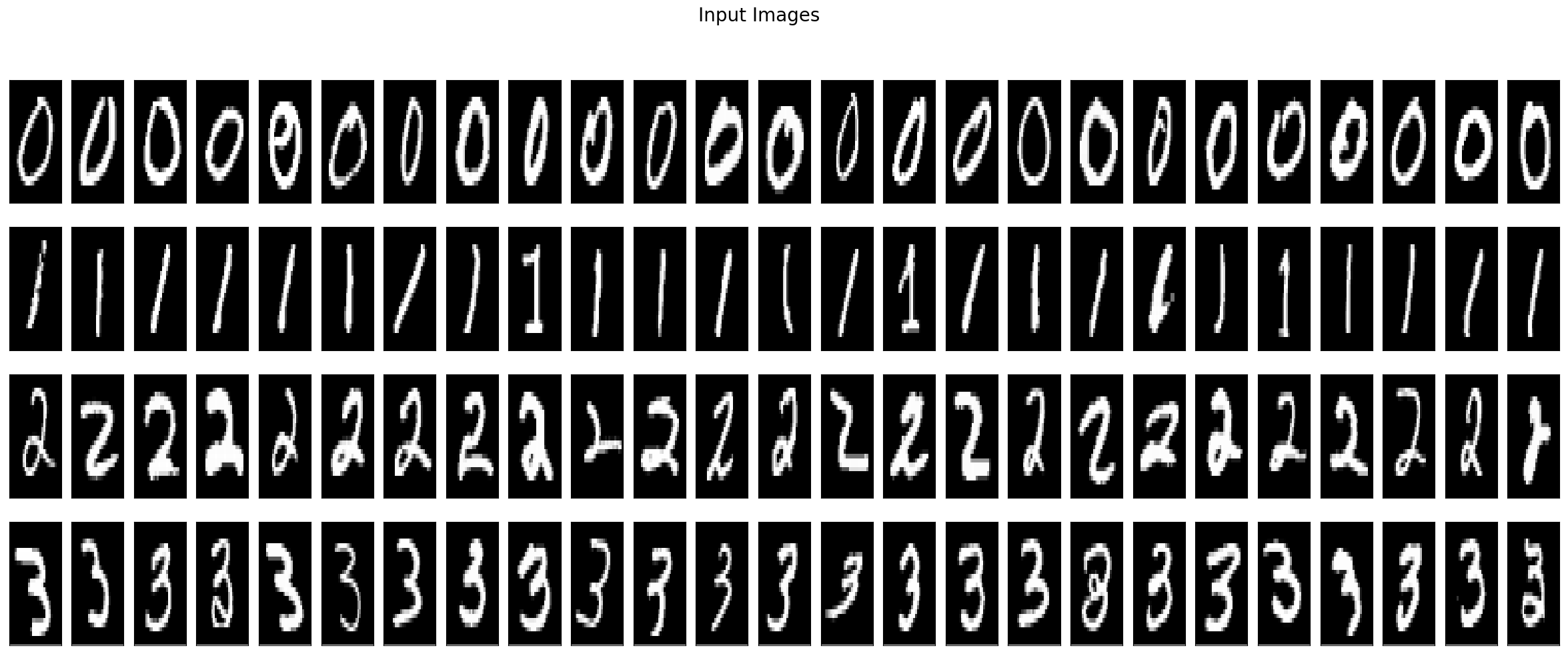}
    \caption{Examples of images tested for MNIST on $4$ different classes Evolution of the sub-regions distances over the layers.  }
    \label{mnistimages}
\end{figure}

\begin{figure}[t]
    \centering
    \includegraphics[width=6in]{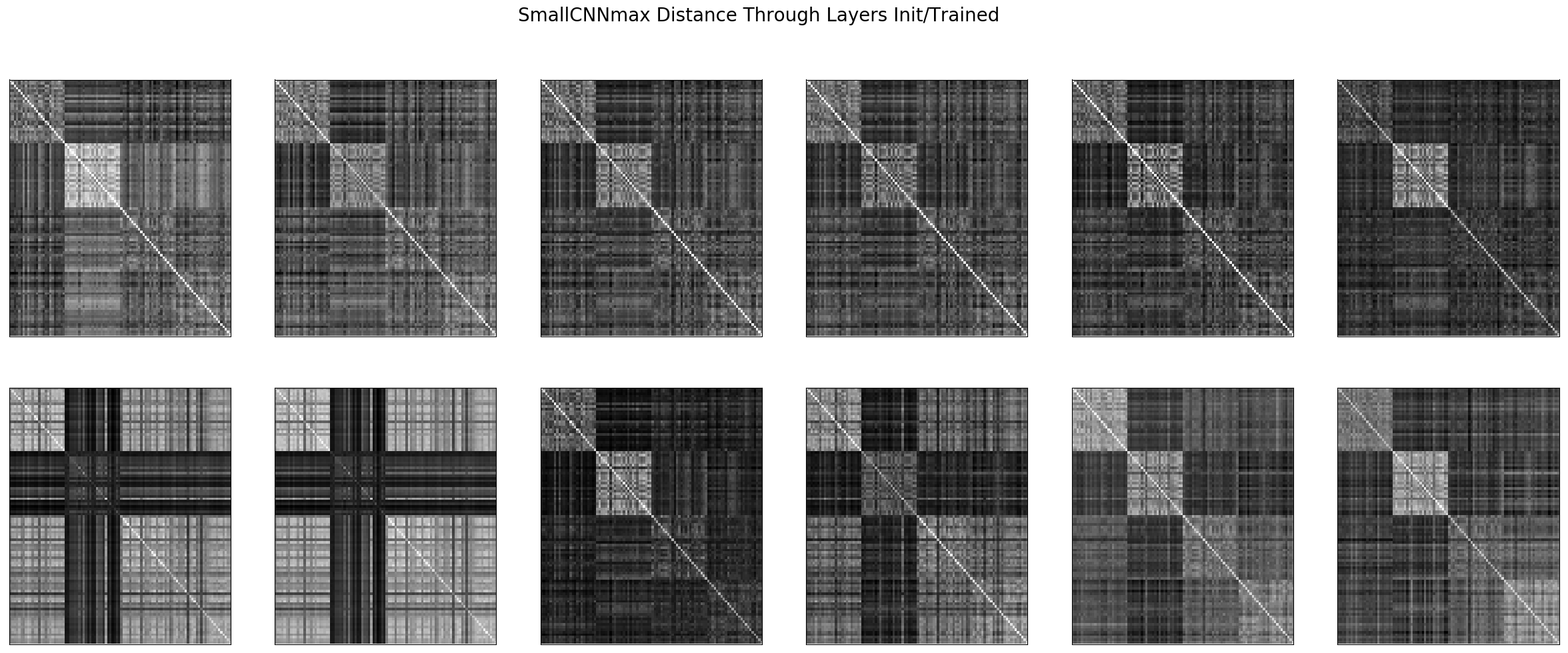}
    \caption{SmallCNNmean Evolution of the sub-regions distances over the layers. }
    \label{mnistdistancessmall}
\end{figure}

The class corresponding to the digit $1$ is very interesting to analyze as all the given observations can be mapped to the same orbit just via a global rotation matrix as opposed to all the other classes provided here. As a result we can see the need for ''depth'' being null for this class, translated as a distance matrix almost optimal at the first layer output in Fig. \ref{mnistdistanceslarge}.

\begin{figure}[t]
    \centering
    \includegraphics[width=6in]{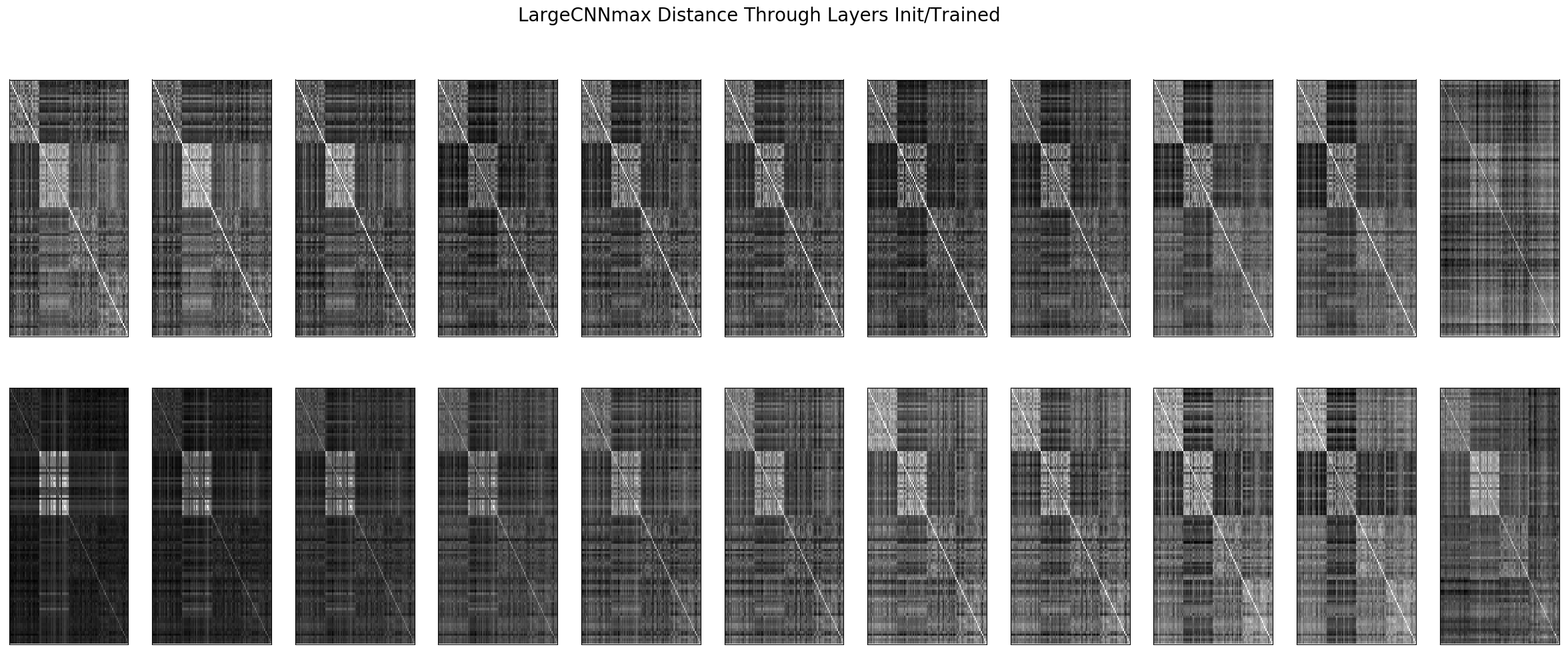}
    \caption{LargeCNNmax Evolution of the sub-regions distances over the layers.  ditto figures 15 16.}
    \label{mnistdistanceslarge}
\end{figure}

\begin{figure}[t]
    \centering
    \includegraphics[width=6in]{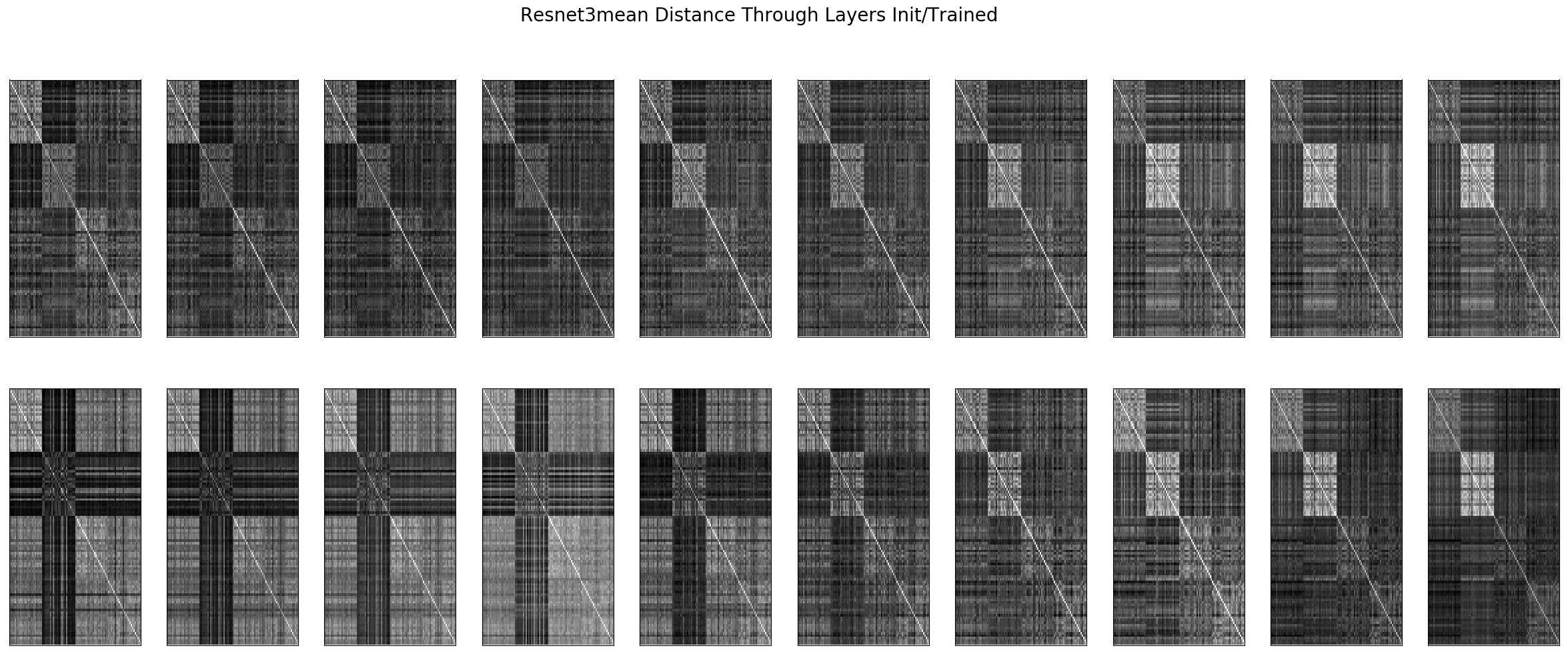}
    \caption{Resnet3mean Evolution of the sub-regions distances over the layers.}
    \label{mnistdistancesresnet}
\end{figure}

\section{Dataset and Model Description}\label{description}

\subsection{CIFAR10}
\begin{table}[H]
\centering
\caption{Mapping from integer to class label.}
\begin{tabular}{|l | l|}\hline
0&airplane \\ \hline
1&automobile\\ \hline 
2&bird \\ \hline
3&cat \\ \hline
4&deer \\ \hline
5&dog \\ \hline
6&frog \\ \hline
7&horse \\ \hline
8&ship \\ \hline
9&truck\\ \hline
\end{tabular}
\end{table}

\subsection{Networks Description and Training Details}
We remind that for all the networks topologies, the mean/max describe the mode of the pooling layer whereas the presence of $W$ transcribes to wavelets a.k.a the convex version of the network where the convolutional layers and nonlinearity layers are replaced with the one proposed in this work, also, $F$ and $NF$ represent the Fixed and NonFixed version of the newly introduced convolutional layer as described in its section.
We present in the Table below the two topologies, Small and Large.

\begin{table}[H]
\centering
\caption{Deep CNN Architectures}
\begin{tabular}{|l | l|l|}\hline
SmallCNN (131512) & LargeCNN ()\\ \hline
Input &Input\\
Conv (32,5,5) full &Conv (96,3,3) same\\
Pool (2,2)         &Conv (96,3,3) full\\
Conv (64,3,3) valid&Conv (96,3,3) full\\
Conv (64,3,3) full &Pool (2,2)\\
Pool (2,2)         &Conv (192,3,3) valid\\
Conv (128,3,3) valid&Conv (192,3,3) full\\
Conv (10,1,1)     &Conv (192,3,3) valid\\
MeanPool (6,6)    &Pool (2,2)\\
SoftMax           &Conv (192,3,3) valid\\
                &Conv (192,1,1) valid\\
                &Conv (10,1,1) valid\\
                &MeanPool(6,6)\\
                &SoftMax \\ \hline
\end{tabular}
\end{table}
Note that for the standard cases (no $W$) the nonlinerities are always leaky rectifiers, for the convex cases ($W$) the first convolutional layer is left unconstrained and with a leaky rectifier in order to fulfill the template matching theorem.

Given this residual block, we define a simple Resnet (as opposed ot wide Resnets) which topology depends on a parameter $n>0$ defining the number of blocks per stages. The full Resnet has $3$ stages of ''widening'' given by the following table.
\begin{table}[h]
\centering
\caption{Residual Architectures}
\begin{tabular}{|l |}\hline
Resnet3 () \\ \hline
Input \\
Conv (16,3,3) valid \\\hline
Block (0)         \\
Block (0)         \\
Block (0)         \\ \hline
Block (1)         \\
Block (0)         \\
Block (0)         \\\hline
Block (1)         \\
Block (0)         \\
Block (0)         \\ \hline
GlobalMeanPool    \\
SoftMax \\\hline
\end{tabular}
\quad
\begin{tabular}{|l |}\hline
Resnet5 () \\ \hline
Input \\
Conv (16,3,3) valid \\\hline
Block (0)         \\
Block (0)         \\
Block (0)         \\
Block (0)         \\
Block (0)         \\ \hline
Block (1)         \\
Block (0)         \\
Block (0)         \\
Block (0)         \\
Block (0)         \\\hline
Block (1)         \\
Block (0)         \\
Block (0)         \\
Block (0)         \\
Block (0)         \\ \hline
GlobalMeanPool    \\
SoftMax\\\hline
\end{tabular}
\end{table}

For all topologies, Adam optimizer \cite{kingma2014adam} is used with a learning rate decay starting at $0.005$ or $0.0005$ for the Large topology.
No regularization is applied nor batch normalization in order to better highlight the impact of the proposed changes. No shuffling is performed between epochs and the only normalization done is to have each observation with $0$ mean and infinite norm equals to $1$. For CIFAR10 this normalization is not done per channel (RGB) but across them.
All the weights initialization as kept as default given the lasagne classes.

\end{document}